\documentclass{article}

% if you need to pass options to natbib, use, e.g.:
%     \PassOptionsToPackage{numbers, compress}{natbib}
% before loading neurips_2024

% ready for submission
% \usepackage{neurips_2024}

% to compile a preprint version, e.g., for submission to arXiv, add add the
% [preprint] option:
    \usepackage[preprint]{neurips_2024}

% to compile a camera-ready version, add the [final] option, e.g.:
%     \usepackage[final]{neurips_2024}

% to avoid loading the natbib package, add option nonatbib:
% \usepackage[nonatbib]{neurips_2024}
\usepackage{makecell}
\usepackage{multirow}
\usepackage[utf8]{inputenc} % allow utf-8 input
\usepackage[T1]{fontenc}    % use 8-bit T1 fonts
\usepackage{hyperref}       % hyperlinks
\usepackage{url}            % simple URL typesetting
\usepackage{booktabs}       % professional-quality tables
\usepackage{amsfonts}       % blackboard math symbols
\usepackage{nicefrac}       % compact symbols for 1/2, etc.
\usepackage{microtype}      % microtypography
\usepackage{xcolor}         % colors
\usepackage{xspace}
\usepackage{graphicx}
\usepackage{enumitem}
\usepackage{amsmath}
\usepackage{mathtools}
\usepackage[ruled,vlined,linesnumbered]{algorithm2e}
\usepackage{wrapfig}
\usepackage{algorithmic}
\hypersetup{colorlinks,citecolor={blue}}
\usepackage[english]{babel}
\usepackage{amsthm}
\newtheorem{theorem}{Theorem}
\newtheorem{lemma}{Lemma}

\newtheorem{assumption}{Assumption}
\usepackage{subfigure}
\usepackage{amssymb}
\newcommand{\method}{PMformer\xspace}
\newcommand{\ensemble}{an inference technique\xspace}

% \title{Utilizing Random Partial Features for Multivariate Forecasting}
% \title{Utilizing A Random Subset of Entire Features for Multivariate Time-Series Forecasting}
\title{Partial-Multivariate Model for Forecasting}
% 기존 Multi -> Complete가 붙어야한다.
% 그 중간지점을 찾고 싶다. -> 기본적이로 Parital의 의미는 Not complete + 

% The \author macro works with any number of authors. There are two commands
% used to separate the names and addresses of multiple authors: \And and \AND.
%
% Using \And between authors leaves it to LaTeX to determine where to break the
% lines. Using \AND forces a line break at that point. So, if LaTeX puts 3 of 4
% authors names on the first line, and the last on the second line, try using
% \AND instead of \And before the third author name.

\author{%
Jaehoon Lee$^1$ \ \ \ \  Hankook Lee$^{1,2}$ \ \ \ \ Sungik Choi$^1$ \ \ \ \ Sungjun Cho$^1$ \ \ \ \ Moontae Lee$^{1,3}$ \\
$^1$LG AI Research \ \ \ \ $^2$Sungkyunkwan University \ \ \ \ $^3$University of Illinois Chicago
% Jaehoon Lee, Hankook Lee, Sungik Choi, Sungjun Cho, Moontae Lee \\
% LG AI Research\\
  % examples of more authors
  % \And
  % Coauthor \\
  % Affiliation \\
  % Address \\
  % \texttt{email} \\
  % \AND
  % Coauthor \\
  % Affiliation \\
  % Address \\
  % \texttt{email} \\
  % \And
  % Coauthor \\
  % Affiliation \\
  % Address \\
  % \texttt{email} \\
  % \And
  % Coauthor \\
  % Affiliation \\
  % Address \\
  % \texttt{email} \\
}

\begin{document}

\maketitle

\begin{abstract}

% Time-series forecasting is a well-established research challenge in machine learning, requiring neural networks to grasp temporal dynamics. Typically,

When solving forecasting problems including multiple time-series features, existing approaches often fall into two extreme categories, depending on whether to utilize inter-feature information:  univariate and complete-multivariate models. Unlike univariate cases which ignore the information, complete-multivariate models compute relationships among a complete set of features.
However, despite the potential advantage of leveraging the additional information, complete-multivariate models sometimes underperform univariate ones. 
Therefore, our research aims to explore a middle ground between these two by introducing what we term \emph{Partial-Multivariate} models where a neural network captures only partial relationships, that is, dependencies within subsets of all features. To this end, we propose \method, a Transformer-based partial-multivariate model, with its training algorithm. We demonstrate that \method outperforms various univariate and complete-multivariate models, providing a theoretical rationale and empirical analysis for its superiority. Additionally, by proposing \ensemble for \method, the forecasting accuracy is further enhanced. Finally, we highlight other advantages of \method: efficiency and robustness under missing features.

\end{abstract}

\section{Introduction}

\emph{Time-series forecasting} is a fundamental machine learning task that aims to predict future events based on past observations, requiring to capture temporal dynamics. A forecasting problem often includes interrelated multiple variables (\textit{e.g.}, multiple market values in stock price forecasting). For decades, the forecasting task with multiple time-series features has been of great importance in various applications such as health care~\citep{nguyen2021forecasting, jones2009multivariate}, meteorology~\citep{sanhudo2021multivariate,angryk2020multivariate}, and finance~\citep{article1,Mehtab_2021}. 

% Recently, many studies for this problem have focused on developing approaches where a single shared neural network solve all forecasting problems of multiple features like multi-task learning~\citep{chen2021multitask},
% For this problem with mutually related multiple features, many approaches have 
% employ a single shared neural network process   like multi-task learning.
% Many approaches for this problem 
% To solve this problem, 
% it is important to capture temporal dynamics or inter-feature connections between multiple variables. 
For this problem, there have been developed a number of deep-learning models,
 including linear models~\citep{tsmixer,dlinear}, state-space models~\citep{liang2024bimamba4ts,statespace2}, recurrent neural networks (RNNs)~\citep{rnn1,rnn2}, convolution neural networks (CNNs)~\citep{cnn2,cnn1}, and Transformers~\citep{informer,pyraformer}. These models are typically categorized by the existence of modules that capture inter-feature information, falling into two extremes: \textit{(i)} univariate and \textit{(ii)} complete-multivariate models\footnote{To differentiate from existing multivariate models that capture dependencies among a complete set of features, we refer to them as complete-multivariate models, while our new approaches, which capture partial dependencies by sampling subsets of features, are termed partial-multivariate models.}. While univariate models only capture temporal dependencies, complete-multivariate ones incorporate additional modules that account for complete dependencies among all given features.

However, although complete-multivariate models have potential advantages over univariate ones by additionally utilizing inter-feature information, some studies have demonstrated that complete-multivariate models are sometimes inferior to univariate ones in time-series forecasting.~\citep{dlinear,onenet,fits} For example, in Table~\ref{tbl:main}, PatchTST (a univariate Transformer-based model)~\citep{patchtst} outperforms Crossformer (a complete-multivariate Transformer-based model)~\citep{crossformer} by a large margin.

\begin{figure*}[t]
    \centering
    \includegraphics[width=\textwidth]{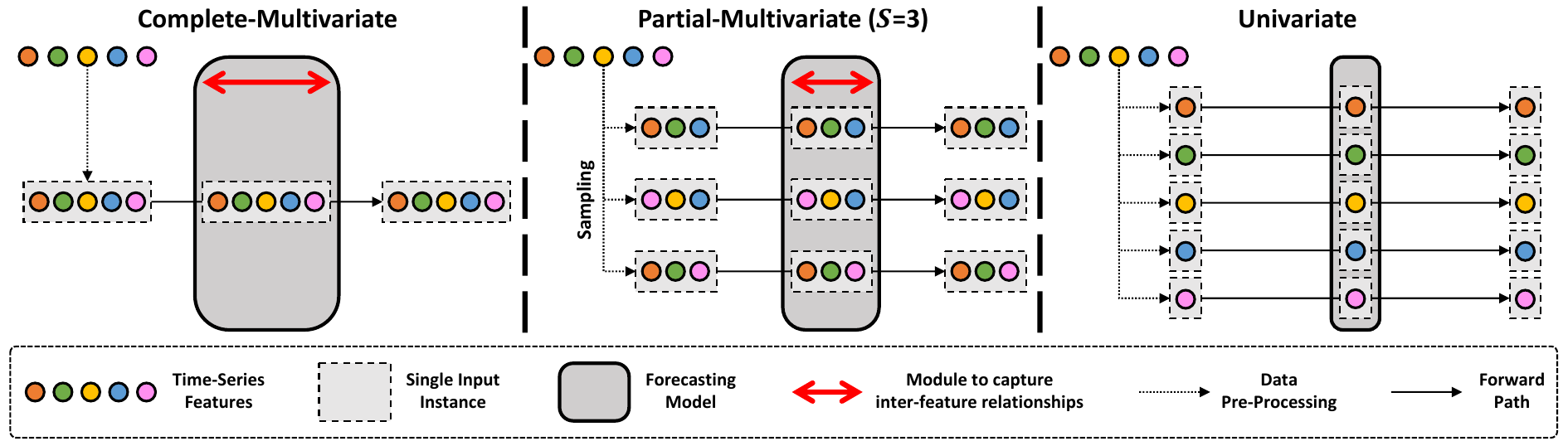}
    \vspace{-15pt}
    \caption{Visualization of three types of models. While     
    the complete-multivariate model processes a complete set of features simultaneously taking into account their relationships, the univariate model, which treats each feature as separate inputs for a shared neural network, disregards relationships. However, in the partial-multivariate model, several subsets of size $S$ are sampled from a complete feature set and relationships are captured only within each subset --- note that a single neural network is shared by all sampled subsets.}
    % \vspace{-5pt}
    
    % considering their relationships, the univariate one ignores relationships by taking each feature as separate inputs. However, the partial-multivariate model captures relationships within subsets of size $S_G$ sampled from a complete feature set.}
     % which are considered to be strongly related to the query
    % relationships between each query feature and selected $S_G$ key features which are strongly related to the query.}
    % The complete-multivariate method takes the historical input of a target feature
    % how three types of methods differently employ inter-feature relationships. The complete-multivariate case considers all relationships among all given features while univariate one ignore all of them. In contrast to two cases, the partial-multivariate method considers relationships between each query feature and selected $S_G$ key features which are strongly related to the query.}
    % \caption{Architecture of Efficient Segment-based Sparse Transformer (\method) with Dilated Attention (\modulet) and Random-Partition Attention (\modulef) where `MHSA' denotes vanilla multi-head self-attention in~\citep{NIPS2017_3f5ee243}.}
    % block-diagonal self-attention.}
    \label{fig:three-variate}
    \vspace{-5pt}
\end{figure*}

\textbf{Contribution.}
Our research goal is to explore a middle ground between these two extreme types, which we call a \emph{Partial-Multivariate} model. Complete-multivariate models process all features simultaneously with an inter-feature module computing their complete relationships. On the other hand, typically in a univariate model, each feature is pre-processed into separate inputs, and a single neural network is shared by all features.~\citep{patchtst,timemixer,pits} In contrast to the two extreme cases, our partial-multivariate model includes a single neural network that captures dependencies within subsets of features (\textit{i.e.}, partial dependencies) and  is shared by all sampled subsets. The differences among the three models are illustrated in Figure~\ref{fig:three-variate}.
To implement the concept of partial-multivariate models, we propose \emph{Partial-Multivariate Transformer}, \method. Inspired by \citet{patchtst}, \method can capture any partial relationship with a shared Transformer by tokenizing features individually and computing attention maps for selected features. Additionally, we propose training algorithms for \method based on random sampling or partitioning, under a usual assumption that the prior knowledge on how to select subsets of features is unavailable.

In experiments, we demonstrate that \method outperforms existing complete-multivariate or univariate models, attaining the best forecasting accuracy against 20 baselines. To explain the superiority of our partial-multivariate model against the other two types, we provide a theoretical analysis based on McAllester’s bound on generalization errors~\citep{generalizationboundmc}, suggesting the following two reasons: 
% the upper-bound on generalization errors is the lowest at $1 < S < D/2$ where $S$ is the size of sampled subsets in partial-multivariate models and $D$ is the number of all features.
\textit{(i)} higher entropy in posterior distributions of hypotheses of partial-multivariate models than complete-multivariate ones and \textit{(ii)} the increased training set size for partial-multivariate models against complete-multivariate and univariate ones. 
Our analysis is supported by our empirical results.
On top of that, to improve forecasting performance further, we introduce a simple inference technique for \method based on the fact that the probability that a specific event occurs at least once out of several trials increases as the number of trials gets large. Finally, we show other useful properties of \method: efficient inter-feature attention costs against other Transformers including inter-feature attention modules,  and robustness under missing features compared to complete-multivariate models.
To sum up, our contributions are summarized as follows:
\begin{itemize}
[noitemsep,topsep=0pt,parsep=3pt,partopsep=0pt,leftmargin=10pt]
    \item To the best of our knowledge, for the first time, we introduce the novel concept of \emph{Partial-Multivariate} models in the realm of time-series forecasting, devising Transformer-based \method. To train \method, we propose a train algorithm based on random sampling or partitioning.
    
    \item The abundant experimental results demonstrate that our \method achieves the best performance against 20 baselines. Furthermore, we provide a theoretical analysis for the excellence of our model and provide empirical results supporting this analysis.
    
    \item We propose an inference technique for \method to enhance forecasting accuracy further, inspired by the relationships between the number of trials and probabilities of sampling a specific subset at least once. At last, we discover some useful properties of \method against complete-multivaraite models: efficient inter-feature attention costs and robustness under missing features.
    
\end{itemize}

    % the fact that the larger the number of trials get, the larger probabilities of sampling a specific subset at least one is.
%%%%%%%%%%%%%%%%%%%
% shared model
% inference, use efficient random partitioning 
%%%%%%%%%%%%%%%%%%%

% To explain the superiority of partial-multivariate methods, we provide originates from relatively low algorithm complexity of them and diversifying the training set, which can be supported by theoretical analysis about generalization bounds for deep learning~\citep{generalizationbound} and our empirical analysis. Finally, we show other useful properties of \method in real-world scenarios: efficient attention costs and robustness under missing features.

% transformer

% \citep{rnn1,rnn2}. 
% %\sungjun{Recently, many have proposed diverse deep neural architectures for M-LTSF with distinct flavors, including linear models, state-space models, and recurrent neural networks (RNNs).} 
% Among them, Transformer-based methods have proliferated~\citep{informer,pyraformer,tft,autoformer,fedformer,logtrans,jtft,gcformer,sageformer,hutformer,dsformer,petformer,patchtst,crossformer} because of their intrinsic capabilities to capture long-term dependencies.

% For this problem called \emph{multivariate long-term time-series forecasting (M-LTSF)}, it is important to capture both (\textit{i}) long-term temporal dependencies between past and future events and (\textit{ii}) inter-feature dependencies among different variables.

% Please read the instructions below carefully and follow them faithfully.
\section{Related Works}

% \textbf{Complete-Multivariate Method.} To solve the forecasting problem with multiple features (\textit{i.e.}, $D>1$ where $D$ is the number of features), it is important to discover not only temporal but also inter-feature relationships. Typically, inter-feature relationships can be captured implicitly by encoding all features at the same time $t$ ($\mathbf{x}_t \in \mathbb{R}^D$) into a hidden vector of size $d_h$ ($\mathbf{h}_t \in \mathbb{R}^{d_h}$) and decoding $\mathbf{h}_t$ into $\mathbf{x}_t$ of feature spaces after some processes with $\mathbf{h}_t$. This implicit method has been applied to various models including RNNs~\citep{che2016recurrent}, CNNs~\citep{bai2018empirical}, state-space models~\citep{statespace2}, and Transformers~\citep{autoformer}. In contrast, there have been other studies which develop modules to capture the relationships explicitly. For example, \citet{crossformer} computes $D \times D$ attention matrices among $D$ features by encoding features into separate tokens, and \citet{wu2020connecting} utilizes graph neural networks with a graph showing inter-feature relationships. Also, \citet{tsmixer} parameterizes a weight matrix $W \in \mathbb{R}^{D \times D}$ where the element in the $i$-th row and $j$-th column denotes relationships between the $i$-th and $j$-th features. The important point is that regardless of implicit or explicit methods, they tries to capture full dependencies among a complete set of $D$ features.
% \textbf{Complete-multivariate model.} 
To solve the forecasting problem with multiple features, it is important to discover not only temporal but also inter-feature relationships. As for inter-feature relationships, existing studies often aim to capture full dependencies among a complete set of features, which we call complete-multivariate models. For example, some approaches encode all features into a single hidden vector, which is then decoded back into feature spaces after some processes. This technique has been applied to various architectures, 
% For example, some studies capture inter-feature relationships by encoding all features into a single hidden vector and decoding the hidden vector into feature spaces after some processes, 
% applying this scheme to various architecture 
including RNNs~\citep{che2016recurrent}, CNNs~\citep{bai2018empirical}, state-space models~\citep{statespace2}, and Transformers~\citep{autoformer}. 
Conversely, other complete-multivariate studies have developed modules to explicitly capture these relationships. For instance,
\citet{crossformer} computes $D \times D$ attention matrices among $D$ features by encoding each feature into a separate token, while \citet{wu2020connecting} utilizes graph neural networks with graphs of inter-feature relationships.
Additionally, \citet{tsmixer} parameterizes a weight matrix $W \in \mathbb{R}^{D \times D}$, where each element in the $i$-th row and $j$-th column represents the relationship between the $i$-th and $j$-th features.
Unlike complete-multivariate models which fully employ inter-feature information, new models have recently been developed: univariate models.~\citep{dlinear,fits,patchtst,timemixer,pits}
% a new type of models have recently been devised, univariate models. 
These models capture temporal dynamics but ignore the inter-feature information by processing each of $D$ features separately as independent inputs. It is worth noting that univariate models usually include a single neural network shared by all features.
% In these models, a shared neural network processes each of $D$ features separately as independent inputs, ignoring their relationships and only capturing temporal dynamics
% , often showing better forecasting performance than complete-multivariate models. 
% They often show better forecasting performance than complete-multivariate models. 
Existing models typically either utilize inter-feature information among all features or ignore it entirely. 
% While complete-multivariate models intuitively benefit from exploiting inter-feature information in addition to temporal one, univariate models have often shown superior empirical performance in forecasting tasks.
In contrast, we propose a completely different approach to using inter-feature information, where a shared model captures relationships only within subsets of features, named partial-multivariate models.

\section{Method}
In this section, we first formulate partial-multivariate forecasting models. Subsequently, we introduce \textbf{Partial-Multivariate Transformer (\method)} with its training algorithm and inference technique. Finally, we provide a theoretical analysis to explain the superiority of \method.

\begin{figure*}[t]
    \centering
    \includegraphics[width=\textwidth]{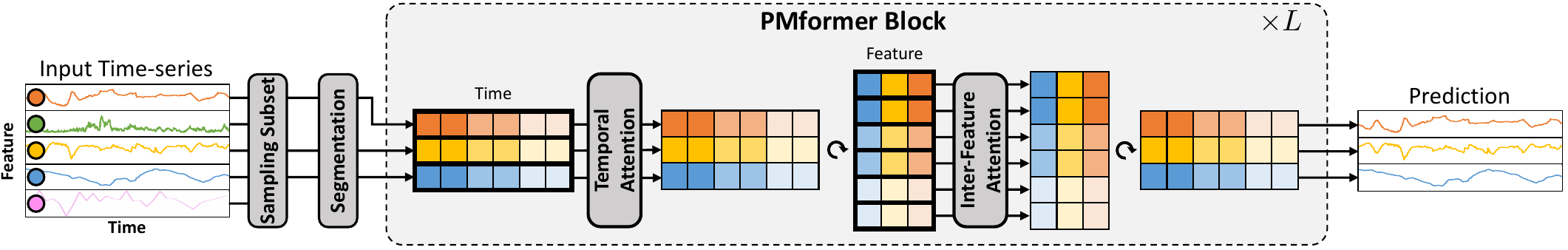}
    \vspace{-15pt}
    \caption{Architecture of Partial-Multivariate Transformer (\method). To emphasize row-wise attention operations, we enclose each row within bold frames before feeding them into the attention modules. In this figure, the subset size $S$ is 3.}
    \label{fig:pmformer_archi}
    \vspace{-5pt}
\end{figure*}

\subsection{Partial-Multivariate Forecasting Model}
In this section, we provide the formulation of the partial-multivariate forecasting model. To simplify a notation, we denote the set of integers from $N$ to $M$ (inclusive of $N$ and exclusive of $M$)
% where $N$ is inclusive and $M$ is exclusive 
as $[N:M]$ (\textit{i.e.}, $[N:M] \coloneqq \{N, N+1, \dots, M-1\}$). Also, when the collection of numbers is given as indices for vectors or matrices, it indicates selecting all indices within the collection. (\textit{e.g.}, $x_{t=[0,T],d=[0,D]} \coloneqq \{\{x_{t,d}\}_{t \in [0:T]}\}_{d \in [0:D]}$). Let $\mathbf{x}_{t,d} \in \mathbb{R}$ the $t$-th observation of the $d$-th feature, and $\mathbf{x}_{[0:T],d}$ and $\mathbf{x}_{[T:T+\tau],d}$ the $d$-th feature's historical inputs and ground truth of future outputs with $T$ and $\tau$ indicating the length of past and future time steps, respectively. Assuming $D$ denotes the number of features, then a partial-multivariate forecasting model $f$ is formulated as follows:
% $\mathbf{x}_{d}^{past} = \{\mathbf{x}_{t,d}\}_{t=1}^{T}$ denote  historical inputs of the $d$-th feature where the length of historical inputs is $T$ and $\mathbf{x}_{t,d}$ is the $t$-th observation of the $d$-th feature.
\begin{align}
% \begin{split}
    \hat{\mathbf{x}}_{[T:T+\tau],\mathbf{F}} &= f(\mathbf{x}_{[0:T],\mathbf{F}},\mathbf{F}), \quad \quad
    \mathbf{F} \in \mathcal{F}^{all}=\{\mathbf{F}|\mathbf{F} \subset [0:D], |\mathbf{F}|=S\}. \label{eq:partial_multivariate_model}
% \end{split}
\end{align}
After sampling a subset $\mathbf{F}$ of size $S$ from a complete set of features $[0:D]$, a model $f$ takes feature indices in $\mathbf{F}$ and their historical observations $\mathbf{x}_{[0:T],\mathbf{F}}$ as input, forecasting the future of the selected features $\hat{\mathbf{x}}_{[T:T+\tau],\mathbf{F}}$. In this formulation, a single model $f$ is shared by any $\mathbf{F} \in \mathcal{F}^{all}$. In Transformer-based models, $\mathbf{F}$ is typically encoded using positional encoding schemes. It is worth noting that from this formulation, univariate and complete-multivariate can be represented by extreme cases of $S$ where $S=1$ and $S=D$, respectively. Our research goal is to explore the middle ground with partial-multivariate models where $1<S<D$.

% Because existing studies address the two extreme cases when $S=1$ or $S=D$, o

\subsection{\method}
For complete-multivariate cases, the architectures are required to capture perpetually unchanging (\textit{i.e.}, static) relationships among a complete set of features. In other words, $\mathbf{F}$ in equation~\eqref{eq:partial_multivariate_model} is always the same as $[0:D]$. However, for partial-multivariate cases, $\mathbf{F}$ can vary when re-sampled, requiring to ability to deal with dynamic relationships. Therefore, inspired by recent Transformer-based models using segmentation~\citep{patchtst,crossformer}, we devise \method which addresses this problem by encoding each feature into individual tokens and calculating attention maps only with the feature tokens in $\mathbf{F}$. The overall architecture is illustrated in Figure~\ref{fig:pmformer_archi}.

After sampling $\mathbf{F}$ in equation~\eqref{eq:partial_multivariate_model}, the historical observations of selected features $\mathbf{x}_{[0:T],\mathbf{F}} \in \mathbb{R}^{T \times S}$ are encoded into latent tokens $\mathbf{h}^{(0)} \in \mathbb{R}^{N_S \times S \times d_h}$ via a segmentation process where $N_S$ is the number of segments and $d_h$ is hidden size. The segmentation process is formulated as follows:
\begin{align}
    \mathbf{h}_{b,i}^{(0)} = \mathtt{Linear}(\mathbf{x}_{[\frac{bT}{N_S} : \frac{(b+1)T}{N_S}],\mathbf{F}_i}) + \mathbf{e}^{Time}_{b} + \mathbf{e}^{Feat}_{\mathbf{F}_i}, \quad b \in [0,N_S], \quad i \in [0,S],
\end{align}
where $\mathbf{F}_i$ denotes the $i$-th element in $\mathbf{F}$. A single linear layer maps observations into latent space with learnable time-wise and feature-wise positional embeddings, $\mathbf{e}^{Time} \in \mathbb{R}^{N_S \times d_h}$ and $\mathbf{e}^{Feat} \in  \mathbb{R}^{D \times d_h}$. 
% Note that each feature is individually encoded into latent tokens. 
In most scenarios, we can reasonably assume the input time span $T$ to be divisible by $N_S$ by adjusting $T$ during data pre-processing or  padding with zeros as in~\citet{crossformer} and~\citet{patchtst}.

Subsequently, $\mathbf{h}^{(0)}$ is processed through $L$ \method blocks. 
% Let $\mathbf{h}^{(\ell)}$ input tokens for the $\ell$-th block. Then 
Each block is formulated as follows:
\begin{align}
    \bar{\mathbf{h}}^{(\ell-1)} &=\mathbf{h}^{(\ell-1)}+\mathtt{Feature}\text{-}\mathtt{Attention}(\mathbf{h}^{(\ell-1)},\mathtt{Temporal}\text{-}\mathtt{Attention}(\mathbf{h}^{(\ell-1)})), \label{eq:pmformer_attn} \\
    \mathbf{h}^{(\ell)} &=\bar{\mathbf{h}}^{(\ell-1)}+\mathtt{MLP}(\bar{\mathbf{h}}^{(\ell-1)}),\quad\ell=1,\ldots,L.\label{eq:pmformer_mlp}
\end{align}
 $\mathtt{MLP}$ in equation~\eqref{eq:pmformer_mlp} operates both feature-wise and time-wise, resembling the feed-forward networks found in the original Transformer~\citep{NIPS2017_3f5ee243}. As shown in equation~\eqref{eq:pmformer_attn}, there are two types of attention modules:
\begin{align}
    \forall i \in [0:S], \quad \mathtt{Temporal}\text{-}\mathtt{Attention}(\mathbf{h})_{[0:N_S],i}
    & = \mathtt{MHSA}(\mathbf{h}_{[0:N_S],i},\mathbf{h}_{[0:N_S],i},\mathbf{h}_{[0:N_S],i}),\label{eq:temp_attn} \\
    \forall b \in [0:N_S], \quad \mathtt{Feature}\text{-}\mathtt{Attention}(\mathbf{h}, \mathbf{v})_{b,[0:S]}
    & = \mathtt{MHSA}(\mathbf{h}_{b,[0:S]},\mathbf{h}_{b,[0:S]},\mathbf{v}_{b,[0:S]}).\label{eq:feat_attn}
\end{align}
$\mathtt{MHSA}(\mathbf{Q},\mathbf{K},\mathbf{V})$ denotes the multi-head self-attention layer like in~\citet{NIPS2017_3f5ee243} where $\mathbf{Q},\mathbf{K}$, and $\mathbf{V}$ are queries, keys and values. 
While temporal attention is responsible for capturing temporal dependencies, feature attention mixes representations among features in $\mathbf{F}$. 
% Let $\mathtt{MHSA}(\mathbf{Q},\mathbf{K},\mathbf{V})$ denote the multi-head self-attention layer like in~\citet{NIPS2017_3f5ee243} where $\mathbf{Q},\mathbf{K}$, and $\mathbf{V}$ are queries, keys and values. Then, two types of attention modules are as follows:

Starting with initial representations $\mathbf{h}^{(0)}$, PMformer encoder with $L$ blocks generates final representations $\mathbf{h}^{(L)}$. These representations are then passed through a decoder to forecast future observations. Similar to ~\citet{patchtst}, the concatenated representations $\mathbf{h}^{(L)}_{[0:N_S],i}$ are mapped to future observations $\mathbf{x}_{[T,T+\tau],\mathbf{F}_i}$ via a single linear layer.

\begin{wrapfigure}[15]{r}{0.45\textwidth}
\vspace{-3.2em}
\hspace{0.6em}
\begin{minipage}{.96\linewidth}
\begin{algorithm}[H]
\small
% \SetAlgoLined
\caption{Training Algorithm 1}\label{alg:train2}
\KwIn{\# of features $D$, \# of subsets $N_U$, Past obs.  $\mathbf{x}_{[0:D]}$, Future obs. $\mathbf{y}_{[0:D]}$}

\While{is\_converge}{
    
    Sample all $\mathbf{F}(g)$ with random partition; \label{alg_line:start_inf}
    
    \For{$g \gets 0$  \KwTo $N_U-1$}{
    
    \If{use\_random\_partition}{
        $\mathbf{F}=\mathbf{F}(g)$
    }\Else{
        Sample $\mathbf{F}$ from [0:D]
    }
    
    $\hat{\mathbf{y}}_{\mathbf{F}} = \mathtt{PMformer}
    (\mathbf{x}_{\mathbf{F}}, \mathbf{F})$; \label{alg_line:end_inf}

    $\text{Loss}_g=\mathtt{MSE}(\hat{\mathbf{y}}_{\mathbf{F}}, {\mathbf{y}}_{\mathbf{F}})$;
    
    }   
    % \tcp{For-loop is processed parallelly with masked attn.}

    $\text{Loss} = \sum_{g \in [0:N_U]} \text{Loss}_g/N_U$; 

    Train $\mathtt{PMformer}$ with Loss; \label{alg_line:train}
}

\Return Trained $\mathtt{PMformer}$
\end{algorithm}
\end{minipage}
\end{wrapfigure}
\subsection{Training Algorithm for \method{}}
To train \method, the process to sample $\mathbf{F}$ from a complete set of features is necessary. Ideally, the sampling process would select mutually correlated features. 
% When sampling $\mathbf{F}$, the natural choice is to collect mutually significant features which means features in a group are highly interrelated. 
However, prior knowledge about the relationships between features is usually unavailable. Therefore, we use random sampling where $\mathbf{F}$ is sampled fully at random, leading to training Algorithm~\ref{alg:train2} with $use\_random\_partition={False}$ where $N_U$ is the number of sampled subsets per iteration --- note that for-loop in while-loop can be dealt with parallelly with attention masking techniques. However, this training algorithm may result in redundancy or omission in each iteration, as some features might be selected multiple times while others might never be chosen across the $N_U$ trials.
% However, this training algorithm can make redundancy or omission in each iteration because it is possible that some features are selected multiple times but others are never chosen out of $N_U$ trials.
% some bias because the data are not sampled fully uniformly. It is possible that some features are omitted and other features are redundant in each iteration. 
To address this issue, we propose a training algorithm based on random partitioning (see Algorithm~\ref{alg:train2} with $use\_random\_partition=True$). In this algorithm, $D$ features are partitioned into $N_U = {D}/{S}$ disjoint subsets $\{\mathbf{F}(g)\}_{g \in [0:N_U]}$ \footnote{We assume that $D$ is divisible by $S$. 
If not, we can handle such cases by repeating some features, as explained in Appendix~\ref{appen:non-div}.}
% We can overcome the cases where this divisibility doesn't hold by repeating some features. We elaborate on the solution in Appendix~\ref{appen:non-div}.} 
where $\mathbf{F}(g) \subset [0:D], |\mathbf{F}(g)|=S, \bigcap_{g \in [0:N_U]} \mathbf{F}(g)  = \phi, \bigcup_{g \in [0:N_U]} \mathbf{F}(g) = \left[0:D\right]$. This scheme can minimize the redundancy and omission of features in each iteration. 
% With this scheme, we can minimize redundancy and omission of some features in each iteration. 
We adopt the training algorithm based on random partitioning as our main training algorithm. Appendix~\ref{appen:random_sampling_partitioning} provides a comparison of these two algorithms in empirical experiments.
% and mean squared error (MSE) as training loss like existing works for time-series forecasting~\citep{patchtst,informer}. 

% \begin{align}
% \begin{split}
%     \{\mathbf{F}(g)\}_{g \in [0:N_U]} \in \{\{\mathbf{F}(g)\}_{g \in [0:N_U]} \; | \; \mathbf{F}(g) \subset [0:D], |\mathbf{F}(g)|=S, 
%     \\
%     \bigcap_{g \in [0:N_U]} \mathbf{F}(g)  = \phi, \bigcup_{g \in [0:N_U]} \mathbf{F}(g) = \left[0:D\right]\}.
%     % 
% \end{split}\label{eq:random_partitioning}
% \end{align} 

% However, the prior knowledge about 

\subsection{Inference Technique for \method}\label{sec:inf_tech}
After training \method, we can measure inference score using Algorithm~\ref{alg:train2} without line~\ref{alg_line:train} where $use\_random\_partition=True$ . During inference time, it is important to group mutually significant features together. Attention scores among all features can provide information on feature relationships, but evaluating these scores results in a high computational cost, $\mathcal{O}(D^2)$.
% Utilizing attention scores among all features, we can get information on feature relationships. However, this results in high-cost attention evaluation, which is $\mathcal{O}(D^2)$. 
% Therefore, we propose a simple inference technique without requiring high-cost computations, where we evaluate the future predictions repeating the inference process $N_I$ times and average them to get our final forecasting outcomes. 
To avoid this, we propose a simple inference technique that doesn't require such expensive computations. We evaluate future predictions by repeating the inference process based on random partitioning $N_I$ times and averaging $N_I$ outputs to obtain the final outcomes --- note that the inference process based on random partitioning achieves efficient costs because of computing attention within subsets of size $S$.

% the cost of inference process based on random partitioning is $\mathcal{O}(SD)$ where $S \ll D$.

% because computing attention within subsets of size $S$, the cost of is lower than .

% This technique relies on the principle that as the number of trials, $N_I$, increases, the probability of a specific event happening at least once also increases.
This technique relies on the principle that the probability of a specific event occurring at least once out of $N_I$ trials increases as $N_I$ grows. Let $P({\mathbf{F}_{*}})=p$ be the probability that we sample a specific subset $\mathbf{F}_{*}$ from all possible cases. The probability of sampling $\mathbf{F}_{*}$ at least once out of $N_I$ trials is $1-(1-p)^{N_I}$. Given that $0\le p \le 1$, $1-(1-p)^{N_I}$ increases as $N_I$ increases. By treating a specific subset $\mathbf{F}_{*}$ as one that includes mutually significant features, our inference technique with a large $N_I$ increases the likelihood of selecting a subset including highly correlated features at least once, thereby improving forecasting performance. This is supported by our empirical results in Section~\ref{sec:analysis}.
% Regarding a specific subset $\mathbf{F}_{*}$ as one that includes mutually significant features, this simple inference technique with large $N_I$ leads to large probability of selecting a subset of highly interrelated features at least once and can improve the forecasting performance. This is supported by our empirical results in Section~\ref{sec:analysis}.

\subsection{Theoretical Analysis on \method}\label{sec:theoretical_analysis}
 % as defined in equation~\eqref{eq:partial_multivariate_model}
In this section, we provide theoretical reasons for superiority of our partial-multivariate models over complete-multivariate and univariate ones, based on PAC-Bayes framework, similar to other works~\citep{deeptime,generalizationbound2,generalizationbound}. Let a neural network $f$ be a partial-multivariate model which samples subsets $\mathbf{F}$ of $S$ size from a complete set of $D$ features as defined in equation~\eqref{eq:partial_multivariate_model}. Also, $\mathcal{T}$ is a training dataset which consists of $m$ instances sampled from the true data distribution. 
% Let $h$ be a hypothesis in hypothesis space $\mathcal{H}$ of $f$ 
% and
$\mathcal{H}$ denotes the hypothesis class of $f$ with $\mathbf{P}(h)$ and $\mathbf{Q}(h)$ representing the prior and posterior distributions over the hypotheses $h$, respectively. Then, based on~\citet{generalizationboundmc}, the generalization bound of a partial-multivariate model $f$ is given by:
\begin{theorem}
    % Assuming $\forall \mathcal{T}, l(h,\mathcal{T}) \in [0,1]$, , for any countable hypothesis class $\mathcal{H}$
    Under some assumptions, with probability at least $1-\delta$ over the selection of the sample $\mathcal{T}$, we have the following for generalized loss $l(\mathbf{Q})$ under posterior distributions $\mathbf{Q}$.
\begin{align}
    l(\mathbf{Q}) \le \sqrt{\frac{-H(\mathbf{Q}) + \log \frac{1}{\delta} + \frac{5}{2} \log m + 8 + C}{2m - 1}}, \label{eq:generalizationbound}
\end{align}
where $H(\mathbf{Q})$ is the entropy of $\mathbf{Q}$, (\textit{i.e.}, $H(\mathbf{Q}) = E_{h\sim \mathbf{Q}}[-\log \mathbf{Q}(h)]$) and $C$ is a constant.
% where pruned posterior distributions is to set $Q(h)$ to 0 when $Q(h) < P(h)$ and renormalize the remainders. 
\label{theorm:generalizationbound}
% \hat{l}(Q,S) + 
    % where $\hat{l}(\mathbf{P},\mathcal{T})=E_{h \sim \mathbf{P}}[\hat{l}(h,\mathcal{T})]$ denotes a loss value with a sampled training dataset $\mathcal{T}$ and an arbitrary distribution of hypotheses $\mathbf{P}(h)$, and pruned posterior distributions is to set $Q(h)$ to 0 when $Q(h) < P(h)$ and renormalize the remainders. 
\end{theorem}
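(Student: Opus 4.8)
The plan is to read \eqref{eq:generalizationbound} as a direct instantiation of McAllester's PAC-Bayes theorem~\citep{generalizationboundmc}, so that essentially all the work is in rewriting the divergence term; in particular the \emph{partial}-multivariate structure of $f$ plays no role in deriving the inequality itself (it only enters later, when one argues that $H(\mathbf{Q})$ is larger and the effective $m$ is larger for partial-multivariate models). First I would invoke McAllester's bound as a black box: for a prior $\mathbf{P}$ fixed before seeing $\mathcal{T}$, any (possibly data-dependent) posterior $\mathbf{Q}$ supported inside $\mathrm{supp}(\mathbf{P})$, and a loss normalized to $[0,1]$, with probability at least $1-\delta$ over the draw of the $m$ instances of $\mathcal{T}$,
\begin{align}
l(\mathbf{Q}) \;\le\; l_{\mathcal{T}}(\mathbf{Q}) + \sqrt{\frac{\mathrm{KL}(\mathbf{Q}\,\|\,\mathbf{P}) + \log\frac{1}{\delta} + \frac{5}{2}\log m + 8}{2m-1}},
\end{align}
where $l_{\mathcal{T}}(\mathbf{Q})$ is the empirical loss averaged under $\mathbf{Q}$. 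The checklist here is routine: boundedness/measurability of the forecasting loss after normalization, and independence of $\mathbf{P}$ from $\mathcal{T}$.

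Next I would use the ``some assumptions'' to dispose of the two sample-dependent nuisances. \emph{(i) Interpolation assumption:} $\mathbf{Q}$ concentrates on hypotheses that fit the training set, so $l_{\mathcal{T}}(\mathbf{Q}) = 0$ and the term outside the square root disappears (this is why the statement has nothing added outside the root). \emph{(ii) Bounded-prior assumption:} $\mathrm{supp}(\mathbf{Q}) \subseteq \mathrm{supp}(\mathbf{P})$, so $\mathrm{KL}(\mathbf{Q}\,\|\,\mathbf{P})<\infty$, and the prior density is bounded below on that region, i.e. $-\log\mathbf{P}(h)\le C$ for all $h$ in the support, which gives $\mathbb{E}_{h\sim\mathbf{Q}}[-\log\mathbf{P}(h)] \le C$ with $C$ independent of $\mathcal{T}$ and of $m$. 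Then the one algebraic step is the standard splitting of the divergence into negative entropy plus cross-entropy,
\begin{align}
\mathrm{KL}(\mathbf{Q}\,\|\,\mathbf{P}) \;=\; \mathbb{E}_{h\sim\mathbf{Q}}\!\left[\log\tfrac{\mathbf{Q}(h)}{\mathbf{P}(h)}\right] \;=\; -H(\mathbf{Q}) + \mathbb{E}_{h\sim\mathbf{Q}}[-\log\mathbf{P}(h)] \;\le\; -H(\mathbf{Q}) + C .
\end{align}
Substituting this and $l_{\mathcal{T}}(\mathbf{Q})=0$ into McAllester's inequality makes the numerator $-H(\mathbf{Q}) + C + \log\frac{1}{\delta} + \frac{5}{2}\log m + 8$ with the denominator $2m-1$ unchanged, which is exactly \eqref{eq:generalizationbound}. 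In the continuous-parameter regime $H(\mathbf{Q})$ is read as differential entropy; the decomposition and the monotonicity of $\sqrt{\cdot}$ are unaffected, so the argument goes through verbatim.

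The main obstacle I anticipate is not the algebra but making $C$ a \emph{bona fide} constant: the bounded-prior assumption must be consistent with a prior genuinely fixed before the data, and one must guarantee $\mathbf{Q}$ never leaves $\mathrm{supp}(\mathbf{P})$ (otherwise $\mathrm{KL}=\infty$ and the bound is vacuous). The clean way to discharge both is to take $\mathbf{P}$ uniform (or a truncated Gaussian) over a compact parameter set large enough to contain every posterior of interest; then $C$ equals the log-volume of that set and the support containment is automatic. A secondary, purely clerical issue is confirming that the cited form of McAllester's bound carries precisely the $\tfrac{5}{2}\log m + 8$ slack (which originates from a union bound over the possible empirical-error counts plus a small Chernoff-step margin), so that no stray terms need to be absorbed into $C$.
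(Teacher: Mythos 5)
Your proposal is correct and follows essentially the same route as the paper: invoke McAllester's bound, assume the empirical loss under $\mathbf{Q}$ vanishes, split $\mathrm{KL}(\mathbf{Q}\,\|\,\mathbf{P})$ into $-H(\mathbf{Q})$ plus a cross-entropy term bounded by a constant $C$ via a lower bound on the prior, and substitute. The only cosmetic difference is that you bound $-\log\mathbf{P}(h)\le\log\frac{1}{\omega}$ pointwise, whereas the paper routes the same estimate through Jensen's inequality; both yield $C=\log\frac{1}{\omega}$.
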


In equation~\eqref{eq:generalizationbound}, the upper bound depends on $m$ and $-H(\mathbf{Q})$, both of which are related to $S$. 
% This two terms are related to $S$. 
Selecting subsets of $S$ size from $D$ features leads to $|\mathcal{F}^{all}|={\binom{D}{S}}$  possible cases, affecting $m$ (\textit{i.e.}, $m \propto |\mathcal{F}^{all}|$), where $\mathcal{F}^{all}$ is a pool including all possible $\mathbf{F}$. This is because each subset is regarded as a separate instance as in Figure~\ref{fig:three-variate}.
Also, 
% when posterior distribution $\mathbf{Q}(h)$ is represented by $p(h|c)$ where condition $c$ is a random variable indicating to achieve lower training loss than a threshold, 
the following theorem reveals relationships between $S$ and $-H(\mathbf{Q})$:
% be a training loss with posterior distributions $\mathbf{Q}_S$ and a training dataset $\mathcal{T}_S$
% $f$ models $p(\mathbf{y}|\mathbf{x})$ like~\citet{rasul2024lagllama} where $(\mathbf{x},\mathbf{y})$ is a input-output pair, the following theorem reveals potential relationships between $S$ and $-H(\mathbf{Q})$:
% assume log p(h|y) > log p(h) -> complex해서 어렵다..
% 
\begin{theorem}
    Let $H(\mathbf{Q}_{S})$ be the entropy of a posterior distribution $\mathbf{Q}_{S}$ with subset size $S$. For $S_{+}$ and $S_{-}$ satisfying $S_{+} > S_{-}$. 
    % the expectation over $c$ of $H(\mathbf{Q}_{S_+})$ is lower than $H(\mathbf{Q}_{S_-})$, that is, $E_{c}[H(\mathbf{Q}_{S_+})] \le H(\mathbf{Q}_{S_-})$. 
    $H(\mathbf{Q}_{S_+}) \le H(\mathbf{Q}_{S_-})$.
    % in expectation over $c$.
    % Let $Max()$
    % abc
    % Let $UB(H(\mathbf{Q}_{S}))$ be the upper-bound of maximum values 
    % $f$ which are trained enough with 
    % Assume that $f$ models $P(\mathbf{y}|\mathbf{x})$
    % Let $l(Q,\mathcal{T})$ be a training loss value with posterior distributions $Q$ and a training dataset $\mathcal{T}$. 
    % \epsilon
    % For $S_{+}$ and $S_{-}$ which denote the sizes of subsets satisfying $S_{+} > S_{-}$, hypothesis 
    % a set of hypotheses $\mathbf{h}_{-}$ for $f$ with $S_-$ which are satisfied with $\mathcal{T}$ might include $\mathbf{h}_{+}$ for $S_+$, but not vice versa, (\textit{i.e.}, $\mathbf{h}_{+} \subset \mathbf{h}_{-}$)
    % $f$ with small $S_{-}$ might include 
    % Assume that prior distribution is non-informative (\textit{i.e.}, $\forall h \in \mathcal{H}, P(h) \propto 1$). Then, as $S$ increases, the size of hypothesis space increases and $P(h)$ decreases. 
    \label{theorem:hypothesisspace}
\end{theorem}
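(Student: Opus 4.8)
The plan is to show that increasing the subset size $S$ shrinks the effective degrees of freedom available to the posterior $\mathbf{Q}_S$, and hence cannot increase its entropy. The key observation is that a hypothesis $h$ for \method with subset size $S$ is determined by how it acts on every admissible input $(\mathbf{x}_{[0:T],\mathbf{F}},\mathbf{F})$ with $|\mathbf{F}|=S$; because a \emph{single} network is shared across all $\mathbf{F}\in\mathcal{F}^{all}$, the behaviors on different subsets are strongly coupled through the shared Transformer weights, the shared linear tokenizer, and the feature-wise positional embedding $\mathbf{e}^{Feat}$. I would first set up a map $\Phi_{S_+\to S_-}$ that, given any hypothesis $h_{S_+}\in\mathcal{H}_{S_+}$, produces a hypothesis $h_{S_-}\in\mathcal{H}_{S_-}$ by restricting the shared network to act on subsets of the smaller size $S_-$ (this is well defined precisely because the same weights and attention mechanism operate for any subset cardinality — only the number of feature tokens fed to \texttt{Feature-Attention} changes). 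The crucial structural claim is that this restriction is \emph{injective on the weight-induced equivalence classes that $\mathbf{Q}$ assigns mass to}, or more directly: the larger $S_+$ imposes more simultaneous constraints that a single set of weights must satisfy to fit the data, so the support of $\mathbf{Q}_{S_+}$, viewed inside the common parameter space, is contained in (or maps into) the support of $\mathbf{Q}_{S_-}$.

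Concretely, I would carry this out in the following steps. (1) Fix the common parameter space $\Theta$ of the shared architecture and observe that $\mathbf{Q}_S$ is the Gibbs/ERM-induced posterior $\mathbf{Q}_S(\theta)\propto \mathbf{P}(\theta)\exp(-\beta\,\widehat{L}_S(\theta))$, where $\widehat{L}_S$ is the empirical loss averaged over the $m_S$ training instances that arise from all size-$S$ subsets (recall $m_S\propto\binom{D}{S}$). (2) Show that $\widehat{L}_{S_+}(\theta)$ can be written as an average of per-subset losses that refines the averaging defining $\widehat{L}_{S_-}(\theta)$ — intuitively each size-$S_+$ subset contains many size-$S_-$ sub-subsets whose predictions are produced by the same $\theta$ — so that the "energy landscape'' for $S_+$ is at least as sharply peaked (less flat) as that for $S_-$. (3) Conclude via a standard fact that among densities of the form $\mathbf{P}\exp(-\beta E)$, a more peaked energy $E$ (in the majorization / Schur-convexity sense, or simply: $\mathrm{Var}_{\mathbf P}[E_{S_+}]\ge \mathrm{Var}_{\mathbf P}[E_{S_-}]$ at matched temperature) yields weakly smaller differential/Shannon entropy, i.e. $H(\mathbf{Q}_{S_+})\le H(\mathbf{Q}_{S_-})$. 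Alternatively, if the intended argument is counting-based, I would instead argue that the set of distinct hypotheses (functions on test inputs) realizable by the shared network under the size-$S_+$ consistency constraints is a subset of those realizable under size-$S_-$ constraints, so $\mathbf{Q}_{S_+}$ is a distribution over a smaller (or coarser) effective alphabet and its entropy is bounded by $\log$ of that cardinality, which is monotone in the constraint count.

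The main obstacle I anticipate is making the coupling in step (2)–(3) rigorous without trivializing it: one must pin down exactly in what sense "more subsets $\Rightarrow$ more constraints $\Rightarrow$ less entropy,'' since naively more data could just as well concentrate the posterior in a way that is not literally a refinement of the smaller-$S$ posterior's support. I would handle this by committing to one precise model of $\mathbf{Q}$ — either (a) $\mathbf{Q}$ supported on the set of empirical risk minimizers, in which case $\arg\min\widehat{L}_{S_+}\subseteq\arg\min\widehat{L}_{S_-}$ needs a genuine inclusion argument exploiting that fitting all large subsets forces fitting all small ones (true when the loss is a sum of nonnegative per-subset terms and the small-subset terms appear, after marginalization, inside the large-subset terms), or (b) the tempered posterior, where I would invoke a clean monotonicity lemma (e.g. entropy of $\propto\mathbf P e^{-\beta E}$ is nonincreasing under replacing $E$ by a "more informative'' statistic) and relegate its proof to an appendix. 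A secondary subtlety is the positional-embedding bookkeeping: I must make sure the restriction map $\Phi_{S_+\to S_-}$ reuses $\mathbf{e}^{Feat}$ and $\mathbf{e}^{Time}$ verbatim so that $h_{S_-}$ is an honest member of $\mathcal{H}_{S_-}$ with the \emph{same} prior mass, which keeps the comparison of $\mathbf{Q}_{S_+}$ and $\mathbf{Q}_{S_-}$ apples-to-apples.
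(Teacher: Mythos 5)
Your committed route (a) --- fitting all size-$S_+$ subsets forces, via marginalizing out the extra features, fitting all size-$S_-$ subsets, so the support of $\mathbf{Q}_{S_+}$ sits inside that of $\mathbf{Q}_{S_-}$, and under an effectively uniform (non-informative) prior a posterior concentrated on a smaller region has lower entropy --- is essentially the paper's own proof: its Lemma~1 is exactly your marginalization step (justified by assuming the network models $p(\mathbf{y}|\mathbf{x})$), and its proof of the theorem conditions the posterior on loss-threshold events $c_{S_+},c_{S_-}$ with a non-informative prior plus an assumption identifying the two hypothesis spaces, which is your positional-embedding ``apples-to-apples'' bookkeeping. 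The Gibbs-posterior/variance alternative you float in step (3) is not what the paper does and, as stated (variance of the energy controlling entropy), is not a clean standard fact, but it is not needed for the argument.
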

Theorem~\ref{theorem:hypothesisspace} is intuitively connected to the fact that capturing dependencies within large subsets of size $S_+$ is usually harder tasks than the case of small $S_-$, because more relationships are captured in the case of $S_+$. As such, the region of hypotheses that satisfies conditions for such hard tasks would be smaller than the one that meets the conditions for a simple task.
% that satisfying conditions for a simple one. 
In other words, probabilities of a posterior distribution $\mathbf{Q}_{S_+}$ might be centered on a smaller region of hypotheses than $\mathbf{Q}_{S_-}$, leading to decreasing the entropy of $\mathbf{Q}_{S_+}$. We refer the readers to Appendix~\ref{appen:proof} for full proofs and assumptions for the theorems.

Given the unveiled impacts of $S$ on $m$ and $-H(\mathbf{Q})$, we can estimate $S_{*}$ which is $S$ leading to the lowest upper-bound. 
When considering only the influence of $m$, $S_{*}$ is $D/2$, resulting in the largest $|\mathcal{F}^{all}|=\binom{D}{S}$. On the other hand, considering only that of $-H(\mathbf{Q})$, $S_{*}$ is 1, because $-H(\mathbf{Q})$ decrease as $S$ decreases.
% Considering only the influence of $m$, $S_{*}$ is $D/2$, resulting in the largest $|\mathcal{F}^{all}|$. On the other hands, 
% given the relationships between $S$ and $-H(\mathbf{Q})$, 
% Considering only the influence of $-H(\mathbf{Q})$,
% $S_{*}$ is $1$, because $-H(\mathbf{Q})$ might decrease as $S$ decreases.
% or $D/2$, $S_{*}$ is $D/2$ or $1$, respectively. 
% Therefore, we conjecture that the partial-multivariate models with $1<S<D/2$ have the best upper-bound, which are supported by our empirical results in Section~\ref{}. 
Therefore, considering both effects simultaneously, we can think $1 < S_{*} < D/2$, which means partial-multivariate models ($1<S<D$) are better than univariate models ($S=1$) and complete-multivariate ($S=D$) and the best $S_{*}$ is between $1$ and $D/2$. This analysis is supported by our empirical experimental results in Section~\ref{sec:analysis}. As of now, since we do not evaluate $H(\mathbf{Q})$ exactly, we cannot compare the magnitudes of effects by $m$ and $-H(\mathbf{Q})$, leaving it for future work. Nevertheless, our analysis from the sign of correlations between $S$ and two factors in the upper-bound still is of importance in that it aligns with our empirical observations. 

\section{Experiments}
\subsection{Experimental Setup}
We mainly follow the experiment protocol of previous forecasting tasks with multiple features, \citep{informer}. A detailed description of datasets, baselines, and hyperparameters is in Appendix~\ref{appen:detail_exp}.

\textbf{Datasets.} We evaluate \method and other methods on the seven real-world datasets with $D > 1$: (\textit{i-iv}) ETTh1, ETTh2, ETTm1, and ETTm2 ($D=7$), (\textit{v}) Weather ($D=21$), (\textit{vi}) Electricity ($D=321$), and (\textit{vii}) Traffic ($D=862$). For each dataset, four settings are constructed with different forecasting lengths $\tau$, which is in \{96, 192, 336, 720\}. 

\textbf{Baselines}. Various complete-multivariate and univariate models are included in our baselines. For complete-multivariate baselines, we use Crossformer~\citep{crossformer}, FEDformer~\citep{fedformer}, Pyraformer~\citep{pyraformer}, Informer~\citep{informer}, TSMixer~\citep{tsmixer}, MICN~\citep{cnn2}, TimesNet~\citep{timesnet}, and DeepTime~\citep{deeptime}. On the other hand, univariate ones include NLinear, DLinear~\citep{dlinear}, and PatchTST~\citep{patchtst}. Furthermore, we compare \method against concurrent complete-multivariate (ModernTCN~\citep{moderntcn}, JTFT~\citep{jtft}, GCformer~\citep{gcformer}, CARD~\citep{card}, Client~\citep{client}, and PETformer~\citep{petformer}) and univariate models (FITS~\citep{fits}, PITS~\citep{pits}, and TimeMixer~\citep{timemixer}).
According to code accessibility or fair experimental setting with ours, we select these concurrent models.

\textbf{Other settings.} \method is trained with mean squared error (MSE) between ground truth and forecasting outputs. Also, we use MSE and mean absolute error (MAE) as evaluation metrics, and mainly report MSE. The MAE scores of experimental results are available in Appendix~\ref{appen:add_result}. After training each method with five different random seeds, we measure the scores of evaluation metrics in each case and report the average scores. For the subset size $S$, we use $S=3$ for ETT datasets, $S=7$ for Weather, $S=30$ for Electricity, and $S=20$ for Traffic, satisfying $1<S<D/2$. Also, for the inference technique of \method, we set $N_I$ to 3.

% and PatchTST~\citep{patchtst} are included in the segment-based Transformers. For observation-based Transformers, we use FEDformer~\citep{fedformer}, Pyraformer~\citep{pyraformer}, and Informer~\citep{informer}. We also compare against TSMixer~\citep{tsmixer} and NLinear~\citep{dlinear} which are linear-based methods. We extend NLinear to a multi-variate version and denote it as NLinear-m. Refer to Appendix~\ref{appen:baseline} for how to modify NLinear into NLinear-m. Furthermore, we incorporate models based on CNNs and INRs: MICN~\citep{cnn2}, TimesNet~\citep{timesnet}, and DeepTime~\citep{deeptime}. Finally, we compare \method against concurrent Transformer-based methods: JTFT~\citep{jtft}, GCformer~\citep{gcformer}, CARD~\citep{card}, Client~\citep{client}, and PETformer~\citep{petformer}. 
% %These models are chosen based on code accessibility and towards a fair experimental setting.
% According to code accessibility or fair experimental setting with ours, we select these concurrent models.

\subsection{Forecasting Result}
Table~\ref{tbl:main} shows the test MSE of representative baselines along with the \method. \method outperforms univariate and complete-multivariate baselines in 27 out of 28 tasks and achieves the second place in the remaining one. We also provide visualizations of forecasting results of \method and some baselines in Appendix~\ref{appen:add_visual}, which shows the superiority of \method. 
% Transformers utilizing segmentation like \method (Crossformer and PatchTST).
On top of that, \method is compared to the nine concurrent baselines in Table~\ref{tbl:main_con}. 
\method shows top-1 performance in 10 cases and top-2 in 12 cases out of 12 cases, attaining a 1.167 average rank. 
The scores in Table~\ref{tbl:main} and~\ref{tbl:main_con} are measured with $N_I=3$, and in Appendix~\ref{appen:ni1}, we provide another forecasting result which shows that our \method still outperforms other baselines even with $N_I=1$.

% Additionally, in Appendix~\ref{appen:no_ensemble}, we provide another forecasting result which shows that our \method still outperforms other baselines even without \ensemble.

\begin{table}[t]
    \centering
    \caption{MSE scores of main forecasting results. The best score in each experimental setting is in boldface and the second best is underlined.}\label{tbl:main}
    \vspace{-3pt}
    \scriptsize
    \renewcommand{\arraystretch}{0.9}
    \setlength{\tabcolsep}{0.9pt}
    \centering
    % \begin{center}
    \resizebox{\textwidth}{!}{
    \begin{tabular}{cc|c|ccc|cccccccc}
    \toprule
     % TimeDiff\tablefootnote{Because \citet{shen2023nonautoregressive} don't provide a runnable code for TimeDiff, We cannot evaluate TimeDiff on our experimental settings. As such, we only report evaluation scores which are measured on the same experimental settings as ours and provided in the paper.} & 
        \multicolumn{2}{c|}{\multirow{2}{*}{Data}} & Partial-Multivariate & \multicolumn{3}{c|}{Univariate} & \multicolumn{8}{c}{Complete-Multivariate} \\ %\cline{3-13}
        ~ & ~ & \method & PatchTST & Dlinear & Nlinear & Crossformer & FEDformer & Informer & Pyraformer & TSMixer & DeepTime & MICN & TimesNet \\
        \midrule
\multirow{4}{*}{\rotatebox{90}{ETTh1}} & $\tau$=96 & \textbf{0.361} & \underline{0.370} & 0.375 & 0.374 & 0.427 & 0.376 & 0.941 & 0.664 & \textbf{0.361} & 0.372 & 0.828 & 0.465\\
& 192 & \textbf{0.396} & 0.413 & 0.405 & 0.408 & 0.537 & 0.423 & 1.007 & 0.790 & \underline{0.404} & 0.405 & 0.765 & 0.493\\
& 336 & \textbf{0.400} & 0.422 & 0.439 & 0.429 & 0.651 & 0.444 & 1.038 & 0.891 & \underline{0.420} & 0.437 & 0.904 & 0.456\\
& 720 & \textbf{0.412} & 0.447 & 0.472 & \underline{0.440} & 0.664 & 0.469 & 1.144 & 0.963 & 0.463 & 0.477 & 1.192 & 0.533\\
    \midrule

\multirow{4}{*}{\rotatebox{90}{ETTh2}} & 96 & \textbf{0.269} & \underline{0.274} & 0.289 & 0.277 & 0.720 & 0.332 & 1.549 & 0.645 & \underline{0.274} & 0.291 & 0.452 & 0.381\\
& 192 & \textbf{0.323} & 0.341 & 0.383 & 0.344 & 1.121 & 0.407 & 3.792 & 0.788 & \underline{0.339} & 0.403 & 0.554 & 0.416\\
& 336 & \textbf{0.317} & \underline{0.329} & 0.448 & 0.357 & 1.524 & 0.400 & 4.215 & 0.907 & 0.361 & 0.466 & 0.582 & 0.363\\
& 720 & \textbf{0.370} & 0.379 & 0.605 & 0.394 & 3.106 & 0.412 & 3.656 & 0.963 & 0.445 & 0.576 & 0.869 & \underline{0.371}\\
    \midrule
\multirow{4}{*}{\rotatebox{90}{ETTm1}} & 96 & \textbf{0.282} & 0.293 & 0.299 & 0.306 & 0.336 & 0.326 & 0.626 & 0.543 & \underline{0.285} & 0.311 & 0.406 & 0.343\\
& 192 & \textbf{0.325} & 0.333 & 0.335 & 0.349 & 0.387 & 0.365 & 0.725 & 0.557 & \underline{0.327} & 0.339 & 0.500 & 0.381\\
& 336 & \textbf{0.352} & 0.369 & 0.369 & 0.375 & 0.431 & 0.392 & 1.005 & 0.754 & \underline{0.356} & 0.366 & 0.580 & 0.436\\
& 720 & \underline{0.401} & 0.416 & 0.425 & 0.433 & 0.555 & 0.446 & 1.133 & 0.908 & 0.419 & \textbf{0.400} & 0.607 & 0.527\\
    \midrule
\multirow{4}{*}{\rotatebox{90}{ETTm2}} & 96 & \textbf{0.160} & 0.166 & 0.167 & 0.167 & 0.338 & 0.180 & 0.355 & 0.435 & \underline{0.163} & 0.165 & 0.238 & 0.218\\
& 192 & \textbf{0.213} & 0.223 & 0.224 & 0.221 & 0.567 & 0.252 & 0.595 & 0.730 & \underline{0.216} & 0.222 & 0.302 & 0.282\\
& 336 & \textbf{0.262} & 0.274 & 0.281 & 0.274 & 1.050 & 0.324 & 1.270 & 1.201 & \underline{0.268} & 0.278 & 0.447 & 0.378\\
& 720 & \textbf{0.336} & \underline{0.361} & 0.397 & 0.368 & 2.049 & 0.410 & 3.001 & 3.625 & 0.420 & 0.369 & 0.549 & 0.444\\
    \midrule
\multirow{4}{*}{\rotatebox{90}{Weather}} & 96 & \textbf{0.142} & 0.149 & 0.176 & 0.182 & 0.150 & 0.238 & 0.354 & 0.896 & \underline{0.145} & 0.169 & 0.188 & 0.179\\
& 192 & \textbf{0.185} & 0.194 & 0.220 & 0.225 & 0.200 & 0.275 & 0.419 & 0.622 & \underline{0.191} & 0.211 & 0.231 & 0.230\\
& 336 & \textbf{0.235} & 0.245 & 0.265 & 0.271 & 0.263 & 0.339 & 0.583 & 0.739 & \underline{0.242} & 0.255 & 0.280 & 0.276\\
& 720 & \textbf{0.305} & 0.314 & 0.323 & 0.338 & \underline{0.310} & 0.389 & 0.916 & 1.004 & 0.320 & 0.318 & 0.358 & 0.347\\
    \midrule
\multirow{4}{*}{\rotatebox{90}{Electricity}} & 96 & \textbf{0.125} & \underline{0.129} & 0.140 & 0.141 & 0.135 & 0.186 & 0.304 & 0.386 & 0.131 & 0.139 & 0.177 & 0.186\\
& 192 & \textbf{0.142} & \underline{0.147} & 0.153 & 0.154 & 0.158 & 0.197 & 0.327 & 0.386 & 0.151 & 0.154 & 0.195 & 0.208\\
& 336 & \textbf{0.154} & 0.163 & 0.169 & 0.171 & 0.177 & 0.213 & 0.333 & 0.378 & \underline{0.161} & 0.169 & 0.213 & 0.210\\
& 720 & \textbf{0.176} & \underline{0.197} & 0.203 & 0.210 & 0.222 & 0.233 & 0.351 & 0.376 & \underline{0.197} & 0.201 & 0.204 & 0.231\\
    \midrule
\multirow{4}{*}{\rotatebox{90}{Traffic}} & 96 & \textbf{0.345} & \underline{0.360} & 0.410 & 0.410 & 0.481 & 0.576 & 0.733 & 2.085 & 0.376 & 0.401 & 0.489 & 0.599\\
& 192 & \textbf{0.370} & \underline{0.379} & 0.423 & 0.423 & 0.509 & 0.610 & 0.777 & 0.867 & 0.397 & 0.413 & 0.493 & 0.612\\
& 336 & \textbf{0.385} & \underline{0.392} & 0.436 & 0.435 & 0.534 & 0.608 & 0.776 & 0.869 & 0.413 & 0.425 & 0.496 & 0.618\\
& 720 & \textbf{0.426} & \underline{0.432} & 0.466 & 0.464 & 0.585 & 0.621 & 0.827 & 0.881 & 0.444 & 0.462 & 0.520 & 0.654\\
        \midrule
        \multicolumn{2}{c|}{Avg. Rank} & \textbf{1.036} & 2.893 & 5.357 & 5.071 & 8.036 & 7.821 & 11.429 & 11.286 & \underline{2.821} & 4.607 & 9.000 & 8.179 \\  
        \bottomrule
    \end{tabular}}
    % \end{center}
\end{table}

\begin{table}[t]
    \scriptsize
    \renewcommand{\arraystretch}{0.9}
    \setlength{\tabcolsep}{3.5pt}
    \centering
    \caption{Test MSE of \method compared to concurrent models.}\label{tbl:main_con}
    \vspace{-3pt}
    % \begin{center}
    \resizebox{\textwidth}{!}{
    \begin{tabular}{cc|cccc|cccc|cccc|c}
    \toprule
        \multicolumn{2}{c|}{\multirow{2}{*}{Method}} & \multicolumn{4}{c|}{ETTm2} & \multicolumn{4}{c|}{Weather} & \multicolumn{4}{c|}{Electricity} & \multirow{2}{*}{\makecell{Avg. \\ Rank}} \\ 
        & & $\tau$=96 & 192 & 336 & 720 & 96 & 192 & 336 & 720 & 96 & 192 & 336 & 720 \\ 
        \midrule
        Partial-Multivariate & \method & \underline{0.160} & \textbf{0.213} & \textbf{0.262} & \textbf{0.336} & \textbf{0.142} & \textbf{0.185} & \textbf{0.235} & \textbf{0.305} & \textbf{0.125} & \underline{0.142} & \textbf{0.154} & \textbf{0.176} & \textbf{1.167} \\ \midrule
\multirow{3}{*}{Univariate} & PITS & 0.163 & 0.215 & \underline{0.266} & \underline{0.342} & 0.154 & 0.191 & 0.245 & 0.309 & 0.132 & 0.147 & 0.162 & 0.199 & 6.000\\
& FITS & 0.164 & 0.217 & 0.269 & 0.347 & 0.145 & 0.188 & \underline{0.236} & 0.308 & 0.135 & \underline{0.142} & 0.163 & 0.200 & 5.083 \\
& TimeMixer & 0.164 & 0.223 & 0.279 & 0.359 & 0.147 & 0.189 & 0.241 & 0.310 & 0.129 & \textbf{0.140} & 0.161 & 0.194 & 6.083\\ \midrule

\multirow{6}{*}{Complete-Multivariate} & JTFT & 0.164 & 0.219 & 0.272 & 0.353 & \underline{0.144} & \underline{0.186} & 0.237 & \underline{0.307} & 0.131 & 0.144 & \underline{0.159} & 0.186 & 4.333 \\
& GCformer & 0.163 & 0.217 & 0.268 & 0.351 & 0.145 & 0.187 & 0.244 & 0.311 & 0.132 & 0.152 & 0.168 & 0.214 & 6.083 \\
& CARD & \textbf{0.159} & \underline{0.214} & \underline{0.266} & 0.379 & 0.145 & 0.187 & 0.238 & 0.308 & 0.129 & 0.154 & 0.161 & \underline{0.185} & \underline{3.917}	\\
& Client & 0.167 & 0.220 & 0.268 & 0.356 & 0.153 & 0.195 & 0.246 & 0.314 & 0.131 & 0.153 & 0.170 & 0.200 & 8.250 \\
& PETformer & \underline{0.160} & 0.217 & 0.274 & 0.345 & 0.146 & 0.190 & 0.241 & 0.314 & \underline{0.128} & 0.144 & \underline{0.159} & 0.195 & 5.000 \\
& ModernTCN & 0.166 & 0.222 & 0.272 & 0.351 & 0.149 & 0.196 & 0.238 & 0.314 & 0.129 & 0.143 & 0.161 & 0.191 & 6.250 \\ 
        \bottomrule
    \end{tabular}
    }
    % \end{center}
\end{table}

\subsection{Analysis}\label{sec:analysis}

In this section, we provide some analysis on our \method. We refer the readers to Appendix~\ref{appen:add} for additional experimental results.

\textbf{Empirical result supporting the theoretical analysis.} In Section~\ref{sec:theoretical_analysis}, we think that $S_{*}$ leading to the best forecasting performance is between $1$ and $D/2$. To validate this analysis, we provide Table~\ref{tbl:ablation}, which shows that partial-multivariate settings ($1<S<D$) outperform others with $S=1$ or $D$, in most cases. On top of that, our analysis is further supported by the U-shaped plots in Figure~\ref{fig:change_s} where the best MSE is achieved when $1<S<D/2$ and the worst one is in $S \in \{1,D\}$. 

On top of that, we conduct another experiment in Figure~\ref{fig:change_f_all} where we adjust the size of subsets' pool ($|\mathcal{F}^{all}|$) while fixing $S$. For training of original \method, $\mathcal{F}^{all}$ consists of all possible subsets, leading to $|\mathcal{F}^{all}|=\binom{D}{S}$. However, for this experiment, we limit $|\mathcal{F}^{all}|$ into $\alpha \times N_U$ by randomly removing some subsets from all possible cases where $N_U$ is the number of sampling a subset in each iteration and $\alpha \in \{1,400,1600,6400,\text{Max}\}$. `Max' denotes $\alpha$ leading to  $|\mathcal{F}^{all}|=\binom{D}{S}$.
% which is the maximum size of a pool and original \method{} is trained with $|\mathcal{F}^{all}|=\text{Max}$. 
Figure~\ref{fig:change_f_all} shows that as $|\mathcal{F}^{all}|$ increases, forecasting performance improves. These experimental results align with our theoretical analyses that $|\mathcal{F}^{all}|$ is proportional to a training set size $m$ and large $m$ leads to low upper-bounds on generalization errors.

% In this experiment, increasing $|\mathcal{F}^{all}|$ has positive impact on forecasting performance in align with our theoretical analyses.

% , which is the number of all possible subsets.

% to explore the effect of increased dataset sizes, we conduct other experiments in Figure~\ref{fig:change_f_all} by directly adjusting $|\mathcal{F}^{all}|$, which is the number of all possible subsets. While $\mathbf{F}$ is sampled from pools with all possible combinations in the original models, we instead limit the size of the subsets' pool. `Max' denotes the maximum size that the pool can have and original \method{} is trained with $|\mathcal{F}^{all}|=\text{Max}$. This experiment also support that the increased size of training sets $m$ havs positive impact on forecasting performance because $m$ are proportional to $|\mathcal{F}^{all}|$.

\begin{table}[t]
    \centering
    \caption{Comparison among three types of models by adjusting $S$ in \method.}\label{tbl:ablation}
    \vspace{-3pt}
    % \begin{center}
    \scriptsize
    \footnotesize
    \renewcommand{\arraystretch}{0.9}
    \setlength{\tabcolsep}{2pt}
    \resizebox{\textwidth}{!}{
    \begin{tabular}{cc|cccc|cccc|cccc|cccc}
    \toprule
        \multicolumn{2}{c|}{\multirow{2}{*}{\makecell{\method{} Variants}}} & \multicolumn{4}{c|}{ETTh2 ($D=$ 7)} & \multicolumn{4}{c|}{Weather ($D=$ 21)} & \multicolumn{4}{c|}{Electricity ($D=$ 321)} & \multicolumn{4}{c}{Traffic ($D=$ 862)} \\ 
         & & $\tau$=96 & 192 & 336 & 720 & 96 & 192 & 336 & 720 & 96 & 192 & 336 & 720 & 96 & 192 & 336 & 720 \\ 
        \midrule
        Univariate & $S=1$ & \underline{0.272} & \underline{0.325} & \underline{0.318} & 0.374  & \textbf{0.141} & \underline{0.186} & \underline{0.237} & 0.308 & \underline{0.128} & \underline{0.146} & \underline{0.163} & \underline{0.204} & 0.368 & 0.388 & 0.404 & \underline{0.441} \\ 
        \textbf{Partial-Multivariate} &  $1<S<D$ & \textbf{0.269} & \textbf{0.323} & \textbf{0.317} & \textbf{0.370} & \underline{0.142} & \textbf{0.185} & \textbf{0.235} & \textbf{0.305} & \textbf{0.125} & \textbf{0.142} & \textbf{0.154} & \textbf{0.176} & \textbf{0.345} & \textbf{0.370} & \textbf{0.385} & \textbf{0.426} \\ 
         Complete-Multivariate & $S=D$ & \textbf{0.269} & \underline{0.325} & \underline{0.318} & \underline{0.371} & 0.146 & 0.192 & 0.244  & \underline{0.307} & 0.129 & 0.147 & \underline{0.163} & \underline{0.204} & \underline{0.363} & \underline{0.383} & \underline{0.394} & \underline{0.441} \\ 
        \bottomrule
        % Improved Ratio of MSE (\%) \\ \hline
    \end{tabular}}
    % \end{center}
\end{table}

\begin{figure}[!t]
\centering
\begin{minipage}{0.49\linewidth}
\centering
    \includegraphics[width=\linewidth]{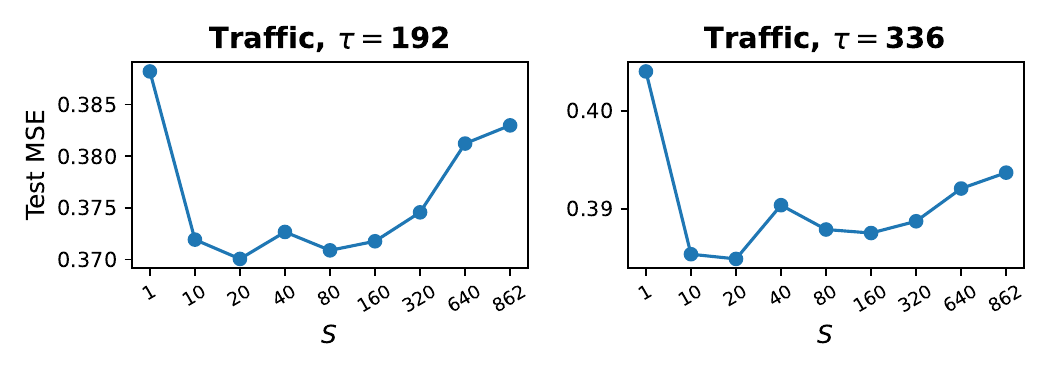}
    \vspace{-20pt}
    \caption{Test MSE by changing $S$.}
    \label{fig:change_s}
\end{minipage}
\hfill
\begin{minipage}{0.49\linewidth}
\centering
    \includegraphics[width=\linewidth]{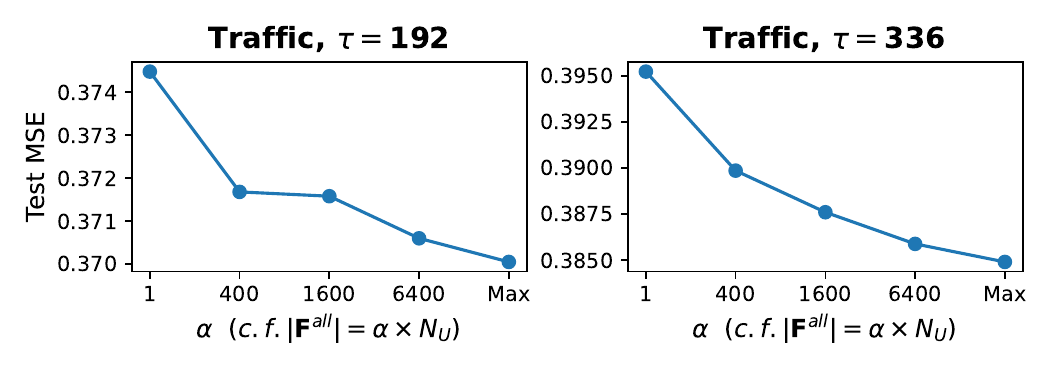}
    \vspace{-20pt}
    \caption{Test MSE by changing $|\mathbf{F}^{all}|$, fixing $S$.}
    \label{fig:change_f_all}
\end{minipage}
\end{figure}

% \begin{figure}[t]
%     \centering
%     \subfigure[Test MSE by directly changing $m$]{
%         \includegraphics[width=0.49\columnwidth]{}}
%     \subfigure[Test MSE by indirectly changing $m$ with $S$]{
%     \includegraphics[width=0.49\columnwidth]{Figure/PMformer_exp/sens_to_S_traffic_comp.pdf}}
%     \caption{Sensitivity to the training set's size $m$ in Traffic --- note that according to $S$, $m$ is changed.}\label{fig:sens_to_np}
% \end{figure}

% \begin{wrapfigure}{r}{0.5\textwidth}
% \vspace{-1.5em}
% \includegraphics[width=0.5\columnwidth]{Figure/sens_to_N_P_comp_2.pdf}
% \includegraphics[width=0.5\columnwidth]{Figure/sens_to_S_G_traffic_comp_2.pdf}

% \captionof{figure}{Sensitivity to $N_P$}\label{fig:sens_to_np}
% \vspace{-1em}
% \end{wrapfigure}

\begin{figure}[!t]
    \centering
    \subfigure[Sensitivity to $N_I$]{
        \includegraphics[width=0.49\columnwidth]{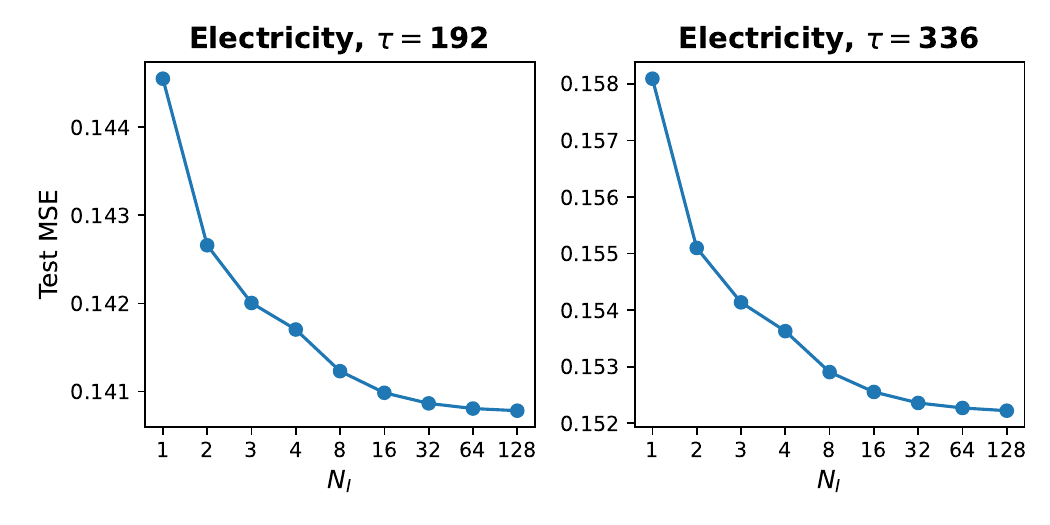}}
    \subfigure[Changes in the effect of $N_I$ when $S$ increases]{
    \includegraphics[width=0.49\columnwidth]{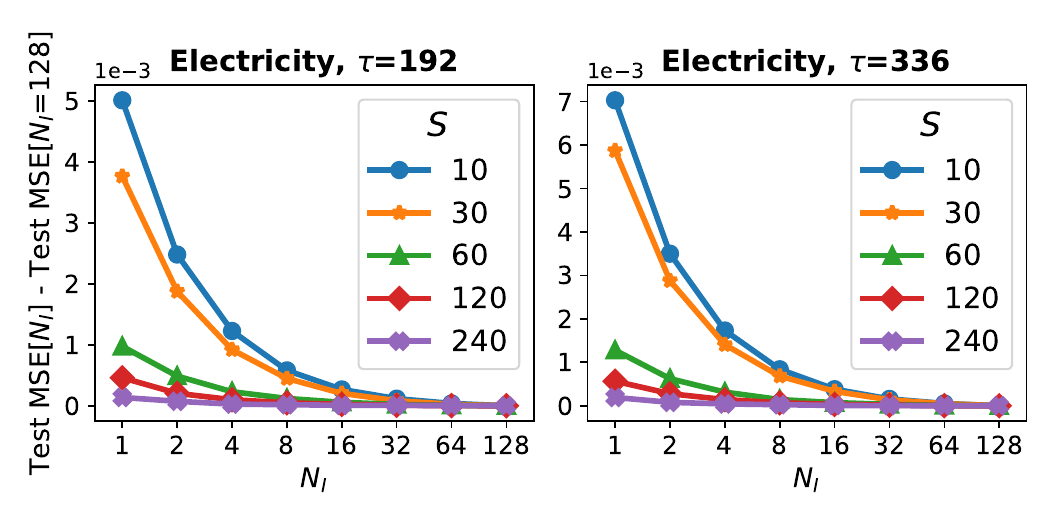}}
    \vspace{-10pt}
    \caption{The effect of $N_I$ on test MSE when (a) $S$ is fixed to the selected hyperparameter and (b) $S$ changes. For (b), the y axis shows the difference of test MSE between when $N_E \in \{1,2,4,8,16,32,64,128\}$ and $N_E=128$.}
    \label{fig:sens_to_ni}
\end{figure}

% \begin{figure}
% \centering
% \begin{minipage}{0.49\linewidth}
% \centering
%     \includegraphics[width=\linewidth]{Figure/PMformer_exp/sens_to_N_I_comp.pdf}
%     \caption{Sensitivity to $N_I$.}
%     \label{fig:sample_figure}
% \end{minipage}
% \hfill
% \begin{minipage}{0.49\linewidth}
% \centering
%     \includegraphics[width=\linewidth]{Figure/PMformer_exp/sens_to_s_ni_comp.pdf}
%     \caption{Changes in the effect of $N_I$ on forecasting performance when $S$ increases. The y axis shows the difference of test MSE between when $N_E \in \{1,2,4,8,16,32,64,128\}$ and $N_E=128$.}
%     \label{fig:sample_figure}
% \end{minipage}
% \end{figure}

\begin{table}[!t]
    \centering
    \caption{Test MSE of \method with various inference techniques --- note that all variants of \method are trained with the same algorithms as ours. To identify relevance (significance) of features to others, we utilize attention scores.}\label{tbl:various_inference}
    \vspace{-3pt}
    % \begin{center}
    \scriptsize
    % \footnotesize
    \renewcommand{\arraystretch}{0.9}
    \setlength{\tabcolsep}{6.5pt}
    \resizebox{\textwidth}{!}{
    \begin{tabular}{c|cccc|cccc}
    \toprule
        \multirow{2}{*}{\makecell{Inference Technique}} & \multicolumn{4}{c|}{Electricity ($D=$ 321)} & \multicolumn{4}{c}{Traffic ($D=$ 862)} \\ 
        & $\tau$=96 & 192 & 336 & 720 & 96 & 192 & 336 & 720 \\ 
        \midrule
        Proposed Technique with $N_I=3$ (Ours) & \textbf{0.125} & \textbf{0.142} & \textbf{0.154} & \textbf{0.176} & \textbf{0.345} & \textbf{0.370} & \textbf{0.385} & \textbf{0.426} \\ 
        % Attention with A Complete Set of Features & 0.125 & 0.141 & 0.155 & 0.177 & 0.344 & 0.369 & 0.384 & 0.425 \\
        Sampling A Subset of Mutually Significant Features & \underline{0.132} & \underline{0.148} & 0.178 & \underline{0.205} & \underline{0.352} & \underline{0.372} & \underline{0.386} & \underline{0.428} \\
        Sampling A Subset of Mutually Insignificant Features & 0.135 & 0.167 & \underline{0.174} & 0.235 & 0.377 & 0.410 & 0.410 & 0.444\\
        \bottomrule
        % Improved Ratio of MSE (\%) \\ \hline
    \end{tabular}}
    % \end{center}
    \vspace{-5pt}
\end{table}

% \begin{figure}[t]
%     \centering
%     \subfigure[Increasing rate of test MSE by dropping $n$\% features in \method with or without R-PartA.]{
%         \includegraphics[width=0.49\columnwidth]{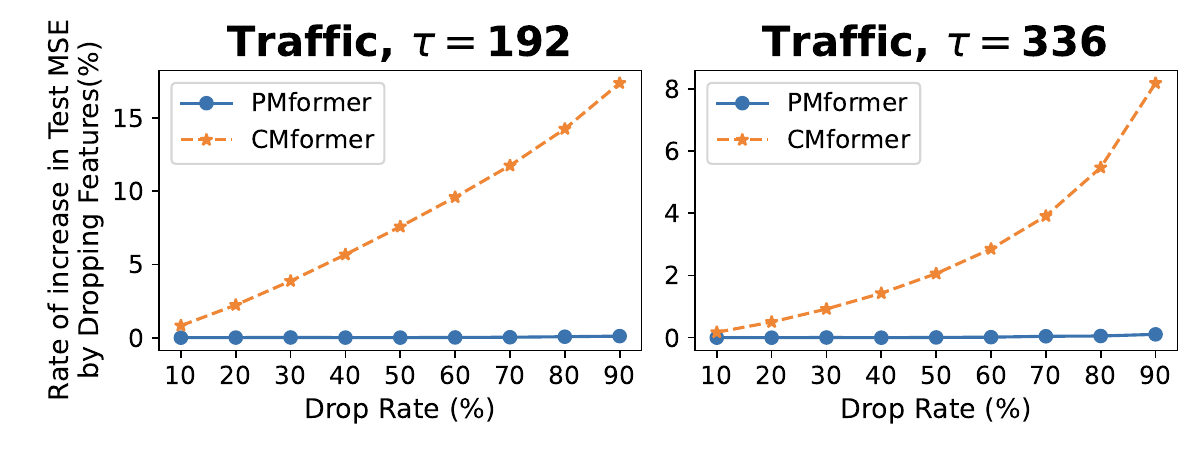}}
%     \subfigure[FLOPs of self-attention for inter-feature dependencies in various  Transformers utilizing segmentation when changing $D$.]{
%     \includegraphics[width=0.49\columnwidth]{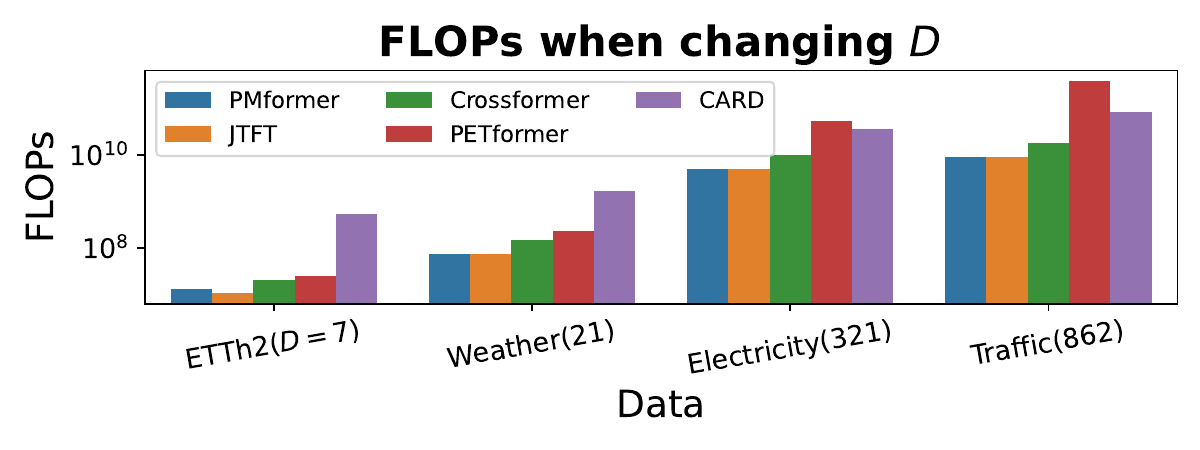}}
%     \caption{The advantages of \method in robustness and efficiency.}
%     \label{fig:sens_to_ni}
% \end{figure}

\begin{figure}[t]
\centering
\begin{minipage}{0.49\linewidth}
\centering
    \includegraphics[width=\linewidth]{Figure/PMformer_exp/robust_to_drop_traffic_comp.pdf}
    \vspace{-20pt}
    \caption{Increasing rate of test MSE by dropping $n$\% features in \method or Complete-Multivariate Transformer (CMformer).}
    \label{fig:robust_to_drop}
\end{minipage}
\hfill
\begin{minipage}{0.49\linewidth}
\centering
    \includegraphics[width=\linewidth]{Figure/PMformer_exp/efficiency_of_RBDSA_main.pdf}
    \vspace{-20pt}
    \caption{FLOPs of self-attention for inter-feature dependencies in various  Transformers when changing $D$.}
    \label{fig:flops}
\end{minipage}
\vspace{-5pt}
\end{figure}

\textbf{Analysis on the inference technique.}
In Section~\ref{sec:inf_tech}, we think that large $N_I$ (the repeating number of the inference process based on random partitioning) would improve forecasting results by increasing the probabilities that sampled subsets include mutually significant features at least once out of $N_I$ trials. Figure~\ref{fig:sens_to_ni}(a) is aligned with our thought, showing monotonically decreasing test MSE as $N_I$ gets large.  
In Figure~\ref{fig:sens_to_ni}(b), we investigate relationships between the feature subset size $S$ and $N_I$ by measuring performance gain by increasing $N_I$ in various $S$. This figure shows that the effect of increasing $N_I$ tends to be smaller, as $S$ increases. We think this is because a single subset $\mathbf{F}$ with large $S$ can contain a number of features, so mutually significant features can be included in such large subsets at least once only with few repetitions.

Besides the inference technique based on random selection, we explore another technique which samples subsets of mutually important features by selecting some keys with the highest attention scores per query. We compare this technique to the counterpart which selects keys based on the lowest attention score. In Table~\ref{tbl:various_inference}, we provide the forecasting MSE of each inference technique. 
--- note that only the inference method is different while the training algorithm remains the same as the original one in Algorithm~\ref{alg:train2}. 
% Note that only inference way is different but training algorithm is the same as the original one in Algorithm~\ref{alg:train2}. 
In that an inference technique utilizing the highest attention scores outperforms one with the lowest ones, attention scores are helpful in identifying relationships between features to some extent. However, our proposed method based on random partitioning achieves the best forecasting performance. Furthermore, identifying relationships between features requires high-cost attention computation which calculates  full attention between $D$ features, leading to $\mathcal{O}(D^2)$. On the other hand, our proposed inference technique doesn't incorporate such high-cost computations but just repeats low-cost ones $N_I$ times, each of which has $O(SD)$ costs for computing inter-feature relationships --- note that $S < D/2$. In Appendix~\ref{appen:complexity}, we elaborate on the details of why \method achieves $\mathcal{O}(SD)$. 
Therefore, against the inference technique with information of attention scores, our proposed one with random partitioning is superior in terms of efficiency and forecasting accuracy.

 % because of computing attention within each subset of size $S$, the complexity of which is $O(DS)$ --- note that $S \ll D$
 
% When $S$ is large, the performance gain by increasing $N_E$ is small. On the other hand, for small $S_G$ cases, forecasting performance is improved largely, when increasing $N_E$.
% In other words, when $S_G$ is small, the performance gain by the \ensemble method is enlarged.
% , supporting our statement. 
% We think this is because a larger part of attention matrix is masked for large $S_G$
% Therefore, this fact helps to select efficient but effective $N_E$ given $S_G$.

\textbf{Other Advantages of \method.}
In the real world, some features in time series are often missing. Inspired by the works that address irregular time series where observations at some time steps~\citep{che2016recurrent, kidger2020ode} are missing, we randomly drop some features of input time series in the inference stage and measure the increasing rate of test MSE in undropped features. For comparison, we use the original \method{} and a complete-multivariate version of \method (CMformer) by setting $S$ to $D$. \method{} can address the missingness by simply excluding missing features in the random sampling process, while CMformer has no choice but to pad dropped features with zeros. In Figure~\ref{fig:robust_to_drop}, unlike the other case, \method{} maintains its forecasting performance, regardless of the drop rate of the features. This robust characteristic gives \method{} more applicability in real-world situations where some features are not available.

For Transformers with inter-feature attention modules (\method, Crossformer, JTFT, PETformer, and CARD), we compare the costs of their inter-feature modules using floating point operations (FLOPs) in Figure~\ref{fig:flops}. When na\"ively computing inter-feature attention like PETformer, the attention cost is $\mathcal{O}(D^2)$ where $D$ is the number of features. On the other hand, due to capturing only partial relationships, the attention cost of \method is reduced to $\mathcal{O}(SD)$ where $S$ is the size of each subset. In Appendix~\ref{appen:complexity}, we elaborate on the details of the reason why the inter-feature module in \method achieves $\mathcal{O}(SD)$. Given that small $S$ is enough to generate good forecasting performance (\textit{e.g.}, $S$ = 20$\sim$30 for 300$\sim$800 features), the attention cost is empirically efficient. As a result, \method achieves the lowest or the second lowest FLOPs compared to others, as shown in Figure~\ref{fig:flops}. Although Crossformer, JTFT, and CARD achieve $O(RD)$ complexities of inter-feature attention with low-rank approximations where $R$ is the rank, our \method shows quite efficient costs, compared to them.

\section{Conclusion}\label{sec:conclusion}

Various models have been developed to address the forecasting problem with multiple variables. However, most studies focus on two extremes: univariate or complete-multivariate models. To explore the middle ground between them, our research introduces a novel concept, partial-multivariate models, devising \method.
\method captures dependencies only within subsets of a complete feature set using a single inter-feature attention module shared by all subsets. To train \method under usual situations without prior knowledge on subset selection, we propose a training algorithm based on random sampling or partitioning. Extensive experiments show that \method outperforms 20 baseline models. To explain \method's superior performance, we theoretically analyze the upper-bound on generalization errors of \method compared to complete-multivariate and univariate ones and provide empirical results  supporting the results of the theoretical analysis.
% To explain the excellence of \method, we present some conjectures based on theoretical analyses with supporting empirical results. 
Additionally, we enhance forecasting accuracy by introducing a simple inference technique for \method. Finally, we highlight \method's useful characteristics in terms of the efficiency of inter-feature attention and robustness under missing features against complete-multivariate models.

% However, most of the studies for this task only focus on two extremes: univariate or complete-multivariate models. To this end, we introduce the concept of partial-multivariate models, devising \method. 

\textbf{Limitation.} Further theoretical analysis is needed to more accurately explain partial-multivariate models, such as precisely calculating the entropy of posterior distributions and relaxing certain assumptions. Despite these limitations, our research still remains significant as it introduces the concept of partial-multivariate models for the first time and provides theoretical analyses that align with empirical results.

% Further theoretical analyses are needed to more accurately explain partial-multivariate models, such as precise calculation of entropy of posterior distributions and relaxation of assumption. However, in spite of this limitations, our research is of importance in that it first introduce the concept of partial-multivariate models and provide theoretical analyses which align with empirical results.

% , such as precise calculation of entropy of posterior distributions.

% to more accurately explain partial-multivariate models, such as precise calculation of entropy of posterior distributions and relaxation of assumption.

% Although our theoretical analysis aligns with the empirical results, 

% Also, while we justify the assumptions for our theorems, these assumptions need to be relaxed to extend the theorems to more general cases. We leave these for future work. 

% Also, while we provide the justification of assumptions for our theorem, they need to be relaxed to extended into more general cases. We leave these for future work. 

% Because famous experiment

% Because famous experiment M4 dataset which is 
% benchmark datasets for short-term forecasting usually include a single variable, we cannot extend our experiment 

\textbf{Broader Impacts.} Our work might have positive effects by benefiting those who devise foundation models for times series because different time series vary in the number of features and our feature sampling scheme where the sampled subset size is always $S$ can overcome this heterogeneity. As for the negative ones, we think the negative effects of forecasting well are still under-explored.

% On top of that, by proposing a simple inference technique for \method, we improve the forecasting accuracy further. At last, we discover the potentially useful characteristics of \method in terms of efficiency and robustness under missing features.

% \textbf{.} Our work focuses on improving forecasting performance with partial-multivariate models, so our work might benefit one who needs to forecast future. As for negative effects, we think it is still under-explored 

% one can use our models 

%  

% On top of that, by proposing a simple inference technique for \method, we improve the forecasting accuracy further. At last, we discover the potentially useful characteristics of \method 

\clearpage

\bibliographystyle{rusnat}
\bibliography{bib_file}

\begin{thebibliography}{52}
\providecommand{\natexlab}[1]{#1}
\providecommand{\EM}{\em}
\providecommand{\RNtxt}{\relax}
\RNtxt{}

\bibitem[Amit, Meir(2019)R.~Amit, R.~Meir]{generalizationbound2}
{\EM Amit Ron, Meir Ron}.
\newblock Meta-Learning by Adjusting Priors Based on Extended PAC-Bayes Theory. 2019.

\bibitem[Angryk et~al.(2020)R.~A. Angryk, P.~C. Martens, B.~Aydin, D.~Kempton, S.~S. Mahajan, S.~Basodi, A.~Ahmadzadeh, X.~Cai, S.~Filali~Boubrahimi, S.~M. Hamdi, et~al.]{angryk2020multivariate}
{\EM Angryk Rafal~A, Martens Petrus~C, Aydin Berkay, Kempton Dustin, Mahajan Sushant~S, Basodi Sunitha, Ahmadzadeh Azim, Cai Xumin, Filali~Boubrahimi Soukaina, Hamdi Shah~Muhammad, others }.
\newblock Multivariate time series dataset for space weather data analytics \allowbreak\newblock// Scientific data. 2020. 7, 1. 227.

\bibitem[Bai et~al.(2018)S.~Bai, J.~Z. Kolter, V.~Koltun]{bai2018empirical}
{\EM Bai Shaojie, Kolter J.~Zico, Koltun Vladlen}.
\newblock An Empirical Evaluation of Generic Convolutional and Recurrent Networks for Sequence Modeling. 2018.

\bibitem[Bhanja, Das(2019)S.~Bhanja, A.~Das]{bhanja2019impact}
{\EM Bhanja Samit, Das Abhishek}.
\newblock Impact of Data Normalization on Deep Neural Network for Time Series Forecasting. 2019.

\bibitem[Che et~al.(2016)Z.~Che, S.~Purushotham, K.~Cho, D.~Sontag, Y.~Liu]{che2016recurrent}
{\EM Che Zhengping, Purushotham Sanjay, Cho Kyunghyun, Sontag David, Liu Yan}.
\newblock Recurrent Neural Networks for Multivariate Time Series with Missing Values. 2016.

\bibitem[Chen et~al.(2023{\natexlab{a}})S.-A. Chen, C.-L. Li, N.~Yoder, S.~O. Arik, T.~Pfister]{tsmixer}
{\EM Chen Si-An, Li~Chun-Liang, Yoder Nate, Arik Sercan~O., Pfister Tomas}.
\newblock TSMixer: An all-MLP Architecture for Time Series Forecasting. 2023{\natexlab{a}}.

\bibitem[Chen et~al.(2023{\natexlab{b}})Y.~Chen, S.~Liu, J.~Yang, H.~Jing, W.~Zhao, G.~Yang]{jtft}
{\EM Chen Yushu, Liu Shengzhuo, Yang Jinzhe, Jing Hao, Zhao Wenlai, Yang Guangwen}.
\newblock A Joint Time-frequency Domain Transformer for Multivariate Time Series Forecasting. 2023{\natexlab{b}}.

\bibitem[Cybenko(1989)G.~Cybenko]{Cybenko1989}
{\EM Cybenko G.}
\newblock Approximation by superpositions of a sigmoidal function \allowbreak\newblock// Mathematics of Control, Signals and Systems. Dec 1989. 2, 4. 303--314.

\bibitem[Domingos(2012)P.~Domingos]{10.1145/2347736.2347755}
{\EM Domingos Pedro}.
\newblock A few useful things to know about machine learning \allowbreak\newblock// Commun. ACM. oct 2012. 55, 10. 78–87.

\bibitem[Du et~al.(2021)Y.~Du, J.~Wang, W.~Feng, S.~Pan, T.~Qin, R.~Xu, C.~Wang]{rnn2}
{\EM Du~Yuntao, Wang Jindong, Feng Wenjie, Pan Sinno, Qin Tao, Xu~Renjun, Wang Chongjun}.
\newblock AdaRNN: Adaptive Learning and Forecasting of Time Series. 2021.

\bibitem[Gao et~al.(2023)J.~Gao, W.~Hu, Y.~Chen]{client}
{\EM Gao Jiaxin, Hu~Wenbo, Chen Yuntian}.
\newblock Client: Cross-variable Linear Integrated Enhanced Transformer for Multivariate Long-Term Time Series Forecasting. 2023.

\bibitem[Gu et~al.(2022)A.~Gu, K.~Goel, C.~Ré]{statespace2}
{\EM Gu~Albert, Goel Karan, Ré Christopher}.
\newblock Efficiently Modeling Long Sequences with Structured State Spaces. 2022.

\bibitem[Jones et~al.(2009)S.~S. Jones, R.~S. Evans, T.~L. Allen, A.~Thomas, P.~J. Haug, S.~J. Welch, G.~L. Snow]{jones2009multivariate}
{\EM Jones Spencer~S, Evans R~Scott, Allen Todd~L, Thomas Alun, Haug Peter~J, Welch Shari~J, Snow Gregory~L}.
\newblock A multivariate time series approach to modeling and forecasting demand in the emergency department \allowbreak\newblock// Journal of biomedical informatics. 2009. 42, 1. 123--139.

\bibitem[Kidger et~al.(2020)P.~Kidger, J.~Morrill, J.~Foster, T.~Lyons]{kidger2020ode}
{\EM Kidger Patrick, Morrill James, Foster James, Lyons Terry}.
\newblock Neural Controlled Differential Equations for Irregular Time Series \allowbreak\newblock// Advances in Neural Information Processing Systems. 2020.

\bibitem[Lee et~al.(2024)S.~Lee, T.~Park, K.~Lee]{pits}
{\EM Lee Seunghan, Park Taeyoung, Lee Kibok}.
\newblock Learning to Embed Time Series Patches Independently \allowbreak\newblock// The Twelfth International Conference on Learning Representations. 2024.

\bibitem[Li et~al.(2020)S.~Li, X.~Jin, Y.~Xuan, X.~Zhou, W.~Chen, Y.-X. Wang, X.~Yan]{logtrans}
{\EM Li~Shiyang, Jin Xiaoyong, Xuan Yao, Zhou Xiyou, Chen Wenhu, Wang Yu-Xiang, Yan Xifeng}.
\newblock Enhancing the Locality and Breaking the Memory Bottleneck of Transformer on Time Series Forecasting. 2020.

\bibitem[Liang et~al.(2024)A.~Liang, X.~Jiang, Y.~Sun, C.~Lu]{liang2024bimamba4ts}
{\EM Liang Aobo, Jiang Xingguo, Sun Yan, Lu~Chang}.
\newblock Bi-Mamba4TS: Bidirectional Mamba for Time Series Forecasting. 2024.

\bibitem[Lim et~al.(2020)B.~Lim, S.~O. Arik, N.~Loeff, T.~Pfister]{tft}
{\EM Lim Bryan, Arik Sercan~O., Loeff Nicolas, Pfister Tomas}.
\newblock Temporal Fusion Transformers for Interpretable Multi-horizon Time Series Forecasting. 2020.

\bibitem[Lin et~al.(2023{\natexlab{a}})S.~Lin, W.~Lin, W.~Wu, S.~Wang, Y.~Wang]{petformer}
{\EM Lin Shengsheng, Lin Weiwei, Wu~Wentai, Wang Songbo, Wang Yongxiang}.
\newblock PETformer: Long-term Time Series Forecasting via Placeholder-enhanced Transformer. 2023{\natexlab{a}}.

\bibitem[Lin et~al.(2023{\natexlab{b}})S.~Lin, W.~Lin, W.~Wu, F.~Zhao, R.~Mo, H.~Zhang]{rnn1}
{\EM Lin Shengsheng, Lin Weiwei, Wu~Wentai, Zhao Feiyu, Mo~Ruichao, Zhang Haotong}.
\newblock SegRNN: Segment Recurrent Neural Network for Long-Term Time Series Forecasting. 2023{\natexlab{b}}.

\bibitem[Liu et~al.(2022{\natexlab{a}})M.~Liu, A.~Zeng, M.~Chen, Z.~Xu, Q.~Lai, L.~Ma, Q.~Xu]{cnn1}
{\EM Liu Minhao, Zeng Ailing, Chen Muxi, Xu~Zhijian, Lai Qiuxia, Ma~Lingna, Xu~Qiang}.
\newblock SCINet: Time Series Modeling and Forecasting with Sample Convolution and Interaction. 2022{\natexlab{a}}.

\bibitem[Liu et~al.(2022{\natexlab{b}})S.~Liu, H.~Yu, C.~Liao, J.~Li, W.~Lin, A.~X. Liu, S.~Dustdar]{pyraformer}
{\EM Liu Shizhan, Yu~Hang, Liao Cong, Li~Jianguo, Lin Weiyao, Liu Alex~X., Dustdar Schahram}.
\newblock Pyraformer: Low-Complexity Pyramidal Attention for Long-Range Time Series Modeling and Forecasting \allowbreak\newblock// International Conference on Learning Representations. 2022{\natexlab{b}}.

\bibitem[McAllester(1999)D.~A. McAllester]{generalizationboundmc}
{\EM McAllester David~A.}
\newblock PAC-Bayesian model averaging \allowbreak\newblock// Proceedings of the Twelfth Annual Conference on Computational Learning Theory. New York, NY, USA: Association for Computing Machinery, 1999.  164–170.
\newblock (COLT '99).

\bibitem[Mehtab, Sen(2021)S.~Mehtab, J.~Sen]{Mehtab_2021}
{\EM Mehtab Sidra, Sen Jaydip}.
\newblock Stock Price Prediction Using Convolutional Neural Networks on a Multivariate Time Series. VIII 2021.

\bibitem[Nguyen et~al.(2021)H.~M. Nguyen, P.~J. Turk, A.~D. McWilliams]{nguyen2021forecasting}
{\EM Nguyen Hieu~M, Turk Philip~J, McWilliams Andrew~D}.
\newblock Forecasting COVID-19 hospital census: A multivariate time-series model based on local infection incidence \allowbreak\newblock// JMIR Public Health and Surveillance. 2021. 7, 8. e28195.

\bibitem[Nie et~al.(2023)Y.~Nie, N.~H. Nguyen, P.~Sinthong, J.~Kalagnanam]{patchtst}
{\EM Nie Yuqi, Nguyen Nam~H., Sinthong Phanwadee, Kalagnanam Jayant}.
\newblock A Time Series is Worth 64 Words: Long-term Forecasting with Transformers. 2023.

\bibitem[Qiu et~al.(2020)J.~Qiu, B.~Wang, C.~Zhou]{article1}
{\EM Qiu Jiayu, Wang Bin, Zhou Changjun}.
\newblock Forecasting stock prices with long-short term memory neural network based on attention mechanism \allowbreak\newblock// PLOS ONE. 01 2020. 15. e0227222.

\bibitem[Rasul et~al.(2024)K.~Rasul, A.~Ashok, A.~R. Williams, H.~Ghonia, R.~Bhagwatkar, A.~Khorasani, M.~J.~D. Bayazi, G.~Adamopoulos, R.~Riachi, N.~Hassen, M.~Biloš, S.~Garg, A.~Schneider, N.~Chapados, A.~Drouin, V.~Zantedeschi, Y.~Nevmyvaka, I.~Rish]{rasul2024lagllama}
{\EM Rasul Kashif, Ashok Arjun, Williams Andrew~Robert, Ghonia Hena, Bhagwatkar Rishika, Khorasani Arian, Bayazi Mohammad Javad~Darvishi, Adamopoulos George, Riachi Roland, Hassen Nadhir, Biloš Marin, Garg Sahil, Schneider Anderson, Chapados Nicolas, Drouin Alexandre, Zantedeschi Valentina, Nevmyvaka Yuriy, Rish Irina}.
\newblock Lag-Llama: Towards Foundation Models for Probabilistic Time Series Forecasting. 2024.

\bibitem[Sanhudo et~al.(2021)L.~Sanhudo, J.~Rodrigues, E.~Vasconcelos~Filho]{sanhudo2021multivariate}
{\EM Sanhudo Lu{\'\i}s, Rodrigues Joao, Vasconcelos~Filho Enio}.
\newblock Multivariate time series clustering and forecasting for building energy analysis: Application to weather data quality control \allowbreak\newblock// Journal of Building Engineering. 2021. 35. 101996.

\bibitem[Shao et~al.(2023)Z.~Shao, F.~Wang, Z.~Zhang, Y.~Fang, G.~Jin, Y.~Xu]{hutformer}
{\EM Shao Zezhi, Wang Fei, Zhang Zhao, Fang Yuchen, Jin Guangyin, Xu~Yongjun}.
\newblock HUTFormer: Hierarchical U-Net Transformer for Long-Term Traffic Forecasting. 2023.

\bibitem[Tolstikhin et~al.(2021)I.~Tolstikhin, N.~Houlsby, A.~Kolesnikov, L.~Beyer, X.~Zhai, T.~Unterthiner, J.~Yung, A.~Steiner, D.~Keysers, J.~Uszkoreit, M.~Lucic, A.~Dosovitskiy]{tolstikhin2021mlpmixer}
{\EM Tolstikhin Ilya, Houlsby Neil, Kolesnikov Alexander, Beyer Lucas, Zhai Xiaohua, Unterthiner Thomas, Yung Jessica, Steiner Andreas, Keysers Daniel, Uszkoreit Jakob, Lucic Mario, Dosovitskiy Alexey}.
\newblock MLP-Mixer: An all-MLP Architecture for Vision. 2021.

\bibitem[Valle-Pérez, Louis(2020)G.~Valle-Pérez, A.~A. Louis]{generalizationbound}
{\EM Valle-Pérez Guillermo, Louis Ard~A.}
\newblock Generalization bounds for deep learning. 2020.

\bibitem[Vaswani et~al.(2017)A.~Vaswani, N.~Shazeer, N.~Parmar, J.~Uszkoreit, L.~Jones, A.~N. Gomez, L.~u. Kaiser, I.~Polosukhin]{NIPS2017_3f5ee243}
{\EM Vaswani Ashish, Shazeer Noam, Parmar Niki, Uszkoreit Jakob, Jones Llion, Gomez Aidan~N, Kaiser \L~ukasz, Polosukhin Illia}.
\newblock Attention is All you Need \allowbreak\newblock// Advances in Neural Information Processing Systems.  30. 2017.

\bibitem[Wang et~al.(2023)H.~Wang, J.~Peng, F.~Huang, J.~Wang, J.~Chen, Y.~Xiao]{cnn2}
{\EM Wang Huiqiang, Peng Jian, Huang Feihu, Wang Jince, Chen Junhui, Xiao Yifei}.
\newblock {MICN}: Multi-scale Local and Global Context Modeling for Long-term Series Forecasting \allowbreak\newblock// The Eleventh International Conference on Learning Representations. 2023.

\bibitem[Wang et~al.(2024)S.~Wang, H.~Wu, X.~Shi, T.~Hu, H.~Luo, L.~Ma, J.~Y. Zhang, J.~ZHOU]{timemixer}
{\EM Wang Shiyu, Wu~Haixu, Shi Xiaoming, Hu~Tengge, Luo Huakun, Ma~Lintao, Zhang James~Y., ZHOU JUN}.
\newblock TimeMixer: Decomposable Multiscale Mixing for Time Series Forecasting \allowbreak\newblock// The Twelfth International Conference on Learning Representations. 2024.

\bibitem[Woo et~al.(2023)G.~Woo, C.~Liu, D.~Sahoo, A.~Kumar, S.~Hoi]{deeptime}
{\EM Woo Gerald, Liu Chenghao, Sahoo Doyen, Kumar Akshat, Hoi Steven}.
\newblock Learning Deep Time-index Models for Time Series Forecasting \allowbreak\newblock// Proceedings of the 40th International Conference on Machine Learning. 2023.

\bibitem[Wu et~al.(2023)H.~Wu, T.~Hu, Y.~Liu, H.~Zhou, J.~Wang, M.~Long]{timesnet}
{\EM Wu~Haixu, Hu~Tengge, Liu Yong, Zhou Hang, Wang Jianmin, Long Mingsheng}.
\newblock TimesNet: Temporal 2D-Variation Modeling for General Time Series Analysis. 2023.

\bibitem[Wu et~al.(2022)H.~Wu, J.~Xu, J.~Wang, M.~Long]{autoformer}
{\EM Wu~Haixu, Xu~Jiehui, Wang Jianmin, Long Mingsheng}.
\newblock Autoformer: Decomposition Transformers with Auto-Correlation for Long-Term Series Forecasting. 2022.

\bibitem[Wu et~al.(2020)Z.~Wu, S.~Pan, G.~Long, J.~Jiang, X.~Chang, C.~Zhang]{wu2020connecting}
{\EM Wu~Zonghan, Pan Shirui, Long Guodong, Jiang Jing, Chang Xiaojun, Zhang Chengqi}.
\newblock Connecting the Dots: Multivariate Time Series Forecasting with Graph Neural Networks. 2020.

\bibitem[Xu et~al.(2019)L.~Xu, M.~Skoularidou, A.~Cuesta-Infante, K.~Veeramachaneni]{xu2019modeling}
{\EM Xu~Lei, Skoularidou Maria, Cuesta-Infante Alfredo, Veeramachaneni Kalyan}.
\newblock Modeling Tabular data using Conditional GAN. 2019.

\bibitem[Xu et~al.(2024)Z.~Xu, A.~Zeng, Q.~Xu]{fits}
{\EM Xu~Zhijian, Zeng Ailing, Xu~Qiang}.
\newblock {FITS}: Modeling Time Series with \$10k\$ Parameters \allowbreak\newblock// The Twelfth International Conference on Learning Representations. 2024.

\bibitem[Xue et~al.(2023)W.~Xue, T.~Zhou, Q.~Wen, J.~Gao, B.~Ding, R.~Jin]{card}
{\EM Xue Wang, Zhou Tian, Wen Qingsong, Gao Jinyang, Ding Bolin, Jin Rong}.
\newblock Make Transformer Great Again for Time Series Forecasting: Channel Aligned Robust Dual Transformer. 2023.

\bibitem[Yu et~al.(2023)C.~Yu, F.~Wang, Z.~Shao, T.~Sun, L.~Wu, Y.~Xu]{dsformer}
{\EM Yu~Chengqing, Wang Fei, Shao Zezhi, Sun Tao, Wu~Lin, Xu~Yongjun}.
\newblock DSformer: A Double Sampling Transformer for Multivariate Time Series Long-term Prediction. 2023.

\bibitem[Yun et~al.(2020)C.~Yun, S.~Bhojanapalli, A.~S. Rawat, S.~J. Reddi, S.~Kumar]{yun2020transformers}
{\EM Yun Chulhee, Bhojanapalli Srinadh, Rawat Ankit~Singh, Reddi Sashank~J., Kumar Sanjiv}.
\newblock Are Transformers universal approximators of sequence-to-sequence functions? 2020.

\bibitem[Zeng et~al.(2022)A.~Zeng, M.~Chen, L.~Zhang, Q.~Xu]{dlinear}
{\EM Zeng Ailing, Chen Muxi, Zhang Lei, Xu~Qiang}.
\newblock Are Transformers Effective for Time Series Forecasting? 2022.

\bibitem[Zhang et~al.(2023{\natexlab{a}})Y.~Zhang, Q.~Wen, X.~Wang, W.~Chen, L.~Sun, Z.~Zhang, L.~Wang, R.~Jin, T.~Tan]{onenet}
{\EM Zhang YiFan, Wen Qingsong, Wang Xue, Chen Weiqi, Sun Liang, Zhang Zhang, Wang Liang, Jin Rong, Tan Tieniu}.
\newblock OneNet: Enhancing Time Series Forecasting Models under Concept Drift by Online Ensembling \allowbreak\newblock// Thirty-seventh Conference on Neural Information Processing Systems. 2023{\natexlab{a}}.

\bibitem[Zhang, Yan(2023)Y.~Zhang, J.~Yan]{crossformer}
{\EM Zhang Yunhao, Yan Junchi}.
\newblock Crossformer: Transformer Utilizing Cross-Dimension Dependency for Multivariate Time Series Forecasting \allowbreak\newblock// The Eleventh International Conference on Learning Representations. 2023.

\bibitem[Zhang et~al.(2023{\natexlab{b}})Z.~Zhang, X.~Wang, Y.~Gu]{sageformer}
{\EM Zhang Zhenwei, Wang Xin, Gu~Yuantao}.
\newblock SageFormer: Series-Aware Graph-Enhanced Transformers for Multivariate Time Series Forecasting. 2023{\natexlab{b}}.

\bibitem[Zhao et~al.(2023)Y.~Zhao, Z.~Ma, T.~Zhou, L.~Sun, M.~Ye, Y.~Qian]{gcformer}
{\EM Zhao YanJun, Ma~Ziqing, Zhou Tian, Sun Liang, Ye~Mengni, Qian Yi}.
\newblock GCformer: An Efficient Framework for Accurate and Scalable Long-Term Multivariate Time Series Forecasting. 2023.

\bibitem[Zhou et~al.(2021)H.~Zhou, S.~Zhang, J.~Peng, S.~Zhang, J.~Li, H.~Xiong, W.~Zhang]{informer}
{\EM Zhou Haoyi, Zhang Shanghang, Peng Jieqi, Zhang Shuai, Li~Jianxin, Xiong Hui, Zhang Wancai}.
\newblock Informer: Beyond Efficient Transformer for Long Sequence Time-Series Forecasting. 2021.

\bibitem[Zhou et~al.(2022)T.~Zhou, Z.~Ma, Q.~Wen, X.~Wang, L.~Sun, R.~Jin]{fedformer}
{\EM Zhou Tian, Ma~Ziqing, Wen Qingsong, Wang Xue, Sun Liang, Jin Rong}.
\newblock FEDformer: Frequency Enhanced Decomposed Transformer for Long-term Series Forecasting. 2022.

\bibitem[donghao, xue wang(2024)L.~donghao, wang xue]{moderntcn}
{\EM donghao Luo, xue  wang}.
\newblock Modern{TCN}: A Modern Pure Convolution Structure for General Time Series Analysis \allowbreak\newblock// The Twelfth International Conference on Learning Representations. 2024.

\end{thebibliography}

\clearpage

% }

%%%%%%%%%%%%%%%%%%%%%%%%%%%%%%%%%%%%%%%%%%%%%%%%%%%%%%%%%%%%

\appendix

\section{Proof}\label{appen:proof}

\subsection{Proof for Theorem~\ref{theorm:generalizationbound}}
Starting from McAllester’s bound on generalization errors~\citep{generalizationboundmc}, we derive generalization bound in Theorem~\ref{theorm:generalizationbound}. Before getting into the main part, we define some notations. Let a neural network $f$ be a partial-multivariate model which samples subsets $\mathbf{F}$ consisting of $S$ features from a complete set of $D$ features as defined in equation~\eqref{eq:partial_multivariate_model}. $\mathcal{H}$ denotes hypothesis class of $f$, and $\mathbf{P}(h)$ and $\mathbf{Q}(h)$ are a prior and posterior distribution over the hypotheses $h$, respectively. Also, $(\mathbf{x},\mathbf{y})$ is a input-output pair in an entire dataset and $(\mathbf{x}^{\mathcal{T}},\mathbf{y}^{\mathcal{T}})$ is a pair in a training dataset $\mathcal{T}$ with $m$ instances sampled from the entire dataset. At last, $\hat{\mathbf{y}}=f(\mathbf{x})$ is the output value of a neural network $f$, and $l(\mathbf{Q})$ and $\hat{l}(\mathbf{Q}, \mathcal{T})$ are generalized and empirical training loss under posterior distributions $\mathbf{Q}$ and training datasets $\mathcal{T}$.

Subsequently, we list assumptions for proof:
\begin{assumption}
 The maximum and minimum values of $\mathbf{y}$ are known and min-max normalization is applied to $\mathbf{y}$ (\textit{i.e.}, $0 \le \mathbf{y} \le 1$). 
\label{assumption:minmaxnorm}
\end{assumption}
\begin{assumption}
The output values of a neural network are assumed to be between 0 and 1, (\textit{i.e.}, $0\le \hat{\mathbf{y}}\le 1$). 
\label{assumption:minmaxval}
\end{assumption}
\begin{assumption}
For posterior distributions $\mathbf{Q}$, $\mathbf{Q}$ is pruned. In other words, we set $\mathbf{Q}(h)=0$ for hypotheses $h$ where $\mathbf{Q}(h) < \mathbf{P}(h)$ and renormalize it.
\label{assumption:posterior_prune}
\end{assumption}
\begin{assumption}
For any hypothesis $h$, $\mathbf{P}(h) > \omega$ where $\omega$ is the minimum probabilities in $\mathbf{P}(h)$ and $\omega>0$.
\label{assumption:minprob}
\end{assumption}
\begin{assumption}
For posterior distributions $\mathbf{Q}$ and training datasets $\mathcal{T}$, $\hat{l}(\mathbf{Q},\mathcal{T}) \approx 0$.
\label{assumption:posterior_approx_0}
\end{assumption}

Given that min-max normalization has been often used in time-series domains with empirical minimum and maximum values~\citep{bhanja2019impact}, Assumption~\ref{assumption:minmaxnorm} can be regarded as a reasonable one. Also, by equipping the last layer with some activation functions such as Sigmoid or Tanh (hyperbolic tangent) like~\citet{xu2019modeling} and adequate post-processing, Assumption~\ref{assumption:minmaxval} can be satisfied.\footnote{Assumption~\ref{assumption:minmaxnorm} and~\ref{assumption:minmaxval} can be considered somewhat strong but should be satisfied to utilize McAllester’s bound widely used for estimating generalization errors~\citep{generalizationbound,generalizationbound2}. When the conditions of McAllester’s bound are relaxed, we can also relax our assumptions.} As for Assumption~\ref{assumption:posterior_prune}, according to~\citep{generalizationboundmc}, it might have very little effects on $\mathbf{Q}$. Finally, because Transformers can universally approximate any continuous sequence-to-sequence function~\citep{yun2020transformers}, (possibly, extended to general deep neural networks with the universal approximation theorem~\citep{Cybenko1989}), any hypothesis $h$ can be approximated with proper parameters in $f$. Thus, we can assume $\mathbf{P}(h) > w > 0$ for any $h$ when sampling the initial parameters of $f$ from the whole real-number space (Assmuption~\ref{assumption:minprob}). Also with proper training process and this universal approximation theorem, $\hat{l}(\mathbf{Q},\mathcal{T})$ might approximate to zero (Assumption~\ref{assumption:posterior_approx_0}). With these assumptions, the proof for Theorem~\ref{theorm:generalizationbound} is as follows:

\begin{proof}
Let MSE be a loss function $l$. Then, according to Assumption~\ref{assumption:minmaxnorm} and~\ref{assumption:minmaxval}, $0\le l(h,(\mathbf{x},\mathbf{y}))\le 1$ for any data instance $(\mathbf{x},\mathbf{y})$ and hypothesis $h$. Then, with probability at least $1 - \delta$ over the selection of the sample $\mathcal{T}$ of size $m$, we have the following for $\mathbf{Q}$~\citep{generalizationboundmc}:
\begin{align}
    l(\mathbf{Q}) \le \hat{l}(\mathbf{Q},\mathcal{T}) + \sqrt{\frac{D(\mathbf{Q}\|\mathbf{P}) + \log \frac{1}{\delta} + \frac{5}{2} \log m + 8}{2m - 1}},
\end{align}
where $D(\mathbf{Q}\|\mathbf{P})$ denotes Kullback-Leibler divergence from distribution $\mathbf{Q}$ to $\mathbf{P}$. Due to Assumption~\ref{assumption:posterior_approx_0}, $\hat{l}(\mathbf{Q},\mathcal{T}) \approx 0$. Also, because $E[\log \frac{1}{\mathbf{P}(h)}] < \log E[\frac{1}{\mathbf{P}(h)}] < \log \frac{1}{\omega} = C$ with Jensen's inequality and Assumption~\ref{assumption:minprob}, $D(\mathbf{Q}\|\mathbf{P})=E_{h\sim \mathbf{Q}}[\log \frac{\mathbf{Q}(h)}{\mathbf{P}(h)}] = E[\log \mathbf{Q}(h)] + E[\log \frac{1}{\mathbf{P}(h)}] < E[\log \mathbf{Q}(h)] + C$. Therefore, we can derive Theorem~\ref{theorm:generalizationbound} by substituting $\hat{l}(\mathbf{Q},\mathcal{T})$ and $D(\mathbf{Q}\|\mathbf{P})$ with $0$ and $E[\log \mathbf{Q}(h)] + C$, respectively:
\begin{align}
    l(\mathbf{Q}) \le \sqrt{\frac{E_{h\sim \mathbf{Q}}[\log \mathbf{Q}(h)] + \log \frac{1}{\delta} + \frac{5}{2} \log m + 8  + C}{2m - 1}}.
\end{align}

\end{proof}

Based on this theorem, we provide a theoretical analysis which is the impact of $S$ on $m$ and $-H(Q)$. However, an additional assumption is required to make the rationale valid as follows:
\begin{assumption}
    For the region of hypothesis $h'$ where $\mathbf{Q}(h') > 0$, the prior distribution satisfies $\log \frac{1}{\mathbf{P}(h')} \le C_{max}$ where $C_{max}$ is small enough to be ignored in upper-bound.
    \label{assumption:cmax}
\end{assumption}

It is possible that the upper-bound is dominated by $C\rightarrow \infty$ when $w \rightarrow 0$. As such, $P(h)$ needs to be distributed properly over the region of hypothesis $h'$ where $\mathbf{Q}(h') > 0$ not to result in $C\rightarrow \infty$, leading to Assumption~\ref{assumption:cmax}. This assumption can be satisfied when the prior distribution is non-informative which is natural in Bayesian statistics under the assumption that prior knowledge is unknown (i.e. $P(h) \propto 1$). For any countable set of all possible inputs $\{\mathbf{x}_i\}_{i=1}^{N}$, probabilities of each $h$ can be represented as $p(h)=\prod_{i=1}^{N} p(\hat{\mathbf{y}}^h_i|\mathbf{x}_i)$ where $\hat{\mathbf{y}}^h_i = f_h(\mathbf{x}_i)$ is the output of a function $f_h$ under hypothesis $h$~\citep{10.1145/2347736.2347755}. Because $0 \le \hat{\mathbf{y}}^h_i \le 1$ (Assumption~\ref{assumption:minmaxval}) and  $p(\hat{\mathbf{y}}^h_i|\mathbf{x}_i)$ is a uniform distribution under the non-informative assumption, $p(\hat{\mathbf{y}}^h_i|\mathbf{x}_i)=1$. As such, the prior distribution under the non-informative assumption is $\mathbf{P}(h)=1$, leading to $C_{max}=0$ which is small enough not to dominate upper-bound. On top of that, we can indirectly solve this problem by injecting appropriate inductive biases in the form of architectures or regularizers, which can help to allocate more probability to each hypothesis (\textit{i.e.}, increase $\omega$) by reducing the size of the whole hypothesis space $\mathcal{H}$.

\subsection{Proof for Theorem~\ref{theorem:hypothesisspace}}

To provide a proof for Theorem~\ref{theorem:hypothesisspace}, we first prove Lemma~\ref{lemma:s+-}. For Lemma~\ref{lemma:s+-}, we need the following assumption:
\begin{assumption}
A neural network $f$ models models $p(\mathbf{y}|\mathbf{x})$ where $(\mathbf{x},\mathbf{y})$ is an input-output pair.
\label{assumption:probmodel}
\end{assumption}
By regarding the output of a neural network $\hat{\mathbf{y}}$ as mean of normal or Student's $t$-distribution like in~\citet{rasul2024lagllama}, Assumption~\ref{assumption:probmodel} can be satisfied. Then, Lemma~\ref{lemma:s+-} and a proof are as follows: 
\begin{lemma}
    Let $\hat{l}(\mathbf{Q}_S,\mathcal{T}_S)$ be a training loss with posterior distributions $\mathbf{Q}_S$ and a training dataset $\mathcal{T}_S$ when a subset size is $S$. Accordingly, $\hat{l}(\mathbf{Q}_S,\mathcal{T}_S) < \epsilon $ with small $\epsilon$ is a training objective. Then, for $S_{+}$ and $S_{-}$ where $S_{+} > S_{-}$, $\mathbf{Q}_{S_{+}}$ satisfies both $\hat{l}(\mathbf{Q}_{S_{+}},\mathcal{T}_{S_{+}}) < \epsilon $ and $\hat{l}(\mathbf{Q}_{S_{+}},\mathcal{T}_{S_{-}}) < \epsilon $. (On the other hands, $\mathbf{Q}_{S_{-}}$ is required to satisfy only $\hat{l}(\mathbf{Q}_{S_{-}},\mathcal{T}_{S_{-}}) < \epsilon $.)
    \label{lemma:s+-}
\end{lemma}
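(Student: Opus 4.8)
The plan is to argue that a hypothesis which solves the harder task (subset size $S_+$) automatically solves the easier task (subset size $S_-$), because the family of subsets of size $S_-$ can be ``embedded'' into the inference behavior induced by subsets of size $S_+$. Concretely, recall that a partial-multivariate model $f$ takes as input a feature index set $\mathbf{F}$ together with the corresponding historical observations $\mathbf{x}_{[0:T],\mathbf{F}}$, and that $f$ is shared across all $\mathbf{F} \in \mathcal{F}^{all}$. Under Assumption~\ref{assumption:probmodel}, $f$ parameterizes a conditional $p(\mathbf{y}_{\mathbf{F}}\mid \mathbf{x}_{\mathbf{F}},\mathbf{F})$. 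The key observation is that any size-$S_-$ subset $\mathbf{F}_- $ is a subset of some size-$S_+$ subset $\mathbf{F}_+ \supset \mathbf{F}_-$, and the marginal (or the restriction to the coordinates in $\mathbf{F}_-$) of the prediction produced from $\mathbf{F}_+$ can be made to match the prediction produced directly from $\mathbf{F}_-$. So a hypothesis $h_{S_+}$ that drives the empirical loss $\hat l(\mathbf{Q}_{S_+},\mathcal{T}_{S_+})$ below $\epsilon$ on all size-$S_+$ subsets thereby also produces sub-$\epsilon$ predictions on every size-$S_-$ subset, i.e. $\hat l(\mathbf{Q}_{S_+},\mathcal{T}_{S_-}) < \epsilon$.

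The steps I would carry out, in order: (i) Fix notation: let $\mathcal{F}^{all}_{S}$ be the pool of all size-$S$ subsets and let $\mathcal{T}_S$ be the training set in which each $(\mathbf{x}_{\mathbf{F}},\mathbf{y}_{\mathbf{F}})$ with $\mathbf{F}\in\mathcal{F}^{all}_S$ is a separate instance (consistent with the ``each subset is a separate instance'' convention stated after Theorem~\ref{theorm:generalizationbound}). (ii) Show that $\mathcal{T}_{S_-}$ is ``covered'' by $\mathcal{T}_{S_+}$ in the sense that for every instance indexed by $\mathbf{F}_-\in\mathcal{F}^{all}_{S_-}$ there is at least one $\mathbf{F}_+\in\mathcal{F}^{all}_{S_+}$ with $\mathbf{F}_- \subset \mathbf{F}_+$ (true whenever $S_- < S_+ \le D$), so the same past/future observations appear as a sub-block of the $\mathbf{F}_+$ instance. (iii) Argue that for a fixed $h_{S_+}$ whose per-instance loss on the $\mathbf{F}_+$ instance is below $\epsilon$, the loss restricted to the $\mathbf{F}_-\subset\mathbf{F}_+$ coordinates is also below $\epsilon$, since MSE is a sum of nonnegative per-feature terms and dropping features only decreases it. (iv) Conclude that the training objective for size $S_-$, $\hat l(\mathbf{Q}_{S_-},\mathcal{T}_{S_-}) < \epsilon$, is implied by $\hat l(\mathbf{Q}_{S_+},\mathcal{T}_{S_+})<\epsilon$ evaluated through this restriction map, giving $\hat l(\mathbf{Q}_{S_+},\mathcal{T}_{S_-})<\epsilon$, whereas no such implication constrains $\mathbf{Q}_{S_-}$ on size-$S_+$ tasks; hence $\mathbf{Q}_{S_-}$ need only satisfy $\hat l(\mathbf{Q}_{S_-},\mathcal{T}_{S_-})<\epsilon$.

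The main obstacle is making precise the claim that a model trained to predict on size-$S_+$ inputs ``also'' predicts well on size-$S_-$ inputs, because strictly speaking $f$ is evaluated on the exact index set it is handed, and the positional/feature encodings $\mathbf{e}^{Feat}$ differ between an $S_-$-input and its embedding inside an $S_+$-input (the feature-attention in equation~\eqref{eq:feat_attn} mixes over all $S$ tokens present). So the restriction argument needs either (a) an assumption/observation that the feature-attention module can attend-to-subset (effectively masking the extra $S_+ - S_-$ tokens), so that the $\mathbf{F}_+$ computation restricted to $\mathbf{F}_-$ reproduces the $\mathbf{F}_-$ computation, or (b) a weaker ``representational containment'' statement: the hypothesis class reachable at size $S_+$ contains, via such masking, all maps needed at size $S_-$, so that any posterior mass achievable for $\mathbf{Q}_{S_-}$ corresponds to achievable mass for $\mathbf{Q}_{S_+}$ but not conversely. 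I would state this as an explicit modeling assumption (the shared Transformer with masking can realize any smaller-subset behavior) and then the lemma follows formally; this is also the natural place to flag that the converse fails, which is exactly what later feeds the entropy comparison $H(\mathbf{Q}_{S_+}) \le H(\mathbf{Q}_{S_-})$ in Theorem~\ref{theorem:hypothesisspace}.
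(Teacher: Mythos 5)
There is a genuine gap at your step (iii), and it is not the obstacle you flag at the end. You argue that if the per-instance MSE of $h_{S_+}$ on an $\mathbf{F}_+$ instance is below $\epsilon$, then its restriction to the coordinates $\mathbf{F}_-\subset\mathbf{F}_+$ is also below $\epsilon$. Even setting aside that this inequality only holds if the loss is a \emph{sum} over features rather than the usual per-feature \emph{average}, the restricted quantity is not the object the lemma is about: $\hat l(\mathbf{Q}_{S_+},\mathcal{T}_{S_-})$ evaluates predictions made \emph{from the size-$S_-$ input alone}, whereas your restricted loss evaluates predictions that were computed with access to the extra covariates $\mathbf{x}_{\mathbf{F}_R}$, $\mathbf{F}_R=\mathbf{F}_+\setminus\mathbf{F}_-$. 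The correct size-$S_-$ target is not the coordinate restriction of the size-$S_+$ conditional; it is its average over the unobserved inputs and outputs. This is exactly the content of the paper's proof: under Assumption~\ref{assumption:probmodel} it writes
\begin{align}
p(\mathbf{y}_{\mathbf{F}_{S_-}}\mid\mathbf{x}_{\mathbf{F}_{S_-}})=\int_{\mathbf{y}_{\mathbf{F}_R}} E_{\mathbf{x}_{\mathbf{F}_R}\mid\mathbf{x}_{\mathbf{F}_{S_-}}}\left[p(\mathbf{y}_{\mathbf{F}_{S_-}},\mathbf{y}_{\mathbf{F}_R}\mid\mathbf{x}_{\mathbf{F}_{S_-}},\mathbf{x}_{\mathbf{F}_R})\right]d\mathbf{y}_{\mathbf{F}_R},
\end{align}
i.e.\ the small conditional is \emph{derivable} from the large one by marginalizing the extra targets and taking a conditional expectation over the extra inputs, and it is this derivability (not a pointwise restriction of squared errors) that is used to conclude that a posterior concentrated on hypotheses matching the size-$S_+$ conditional is thereby constrained on the size-$S_-$ task as well.

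Your closing "main obstacle" paragraph identifies a real but different issue (the network's computation depends on the index set handed to it, via $\mathbf{e}^{Feat}$ and the feature attention), and your proposed fixes (a)/(b) are capacity statements about what the architecture \emph{could} represent under masking; they do not pin down what a posterior trained only on size-$S_+$ data actually does on size-$S_-$ inputs, and assumption (a) taken literally would make the extra features irrelevant, defeating the purpose of the multivariate module. To repair the argument you should replace the restriction step with the marginalization identity above (invoking Assumption~\ref{assumption:probmodel} so that the network's outputs are read as conditional distributions), and then argue, as the paper does--admittedly informally, via "expectation approximated by empirical mean with sufficient data"--that a hypothesis outputting the true size-$S_+$ conditional thereby determines, and can output, the true size-$S_-$ conditional; this is the asymmetric constraint that later yields $H(\mathbf{Q}_{S_+})\le H(\mathbf{Q}_{S_-})$ in Theorem~\ref{theorem:hypothesisspace}.
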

 
\begin{proof}
    Let $S_+$ and $S_-$ be subset size where $S_+ > S_-$. $\mathbf{F}_{S_-}$ be any subset of $S_-$ size sampled from a complete set of features, and $\mathbf{F}_{S_+}$ is any subset of $S_+$ size among ones that satisfy $\mathbf{F}_{S_-} \subset \mathbf{F}_{S_+} $. $\mathbf{F}_R$ is the set of elements that are in $\mathbf{F}_{S_+}$ but not in $\mathbf{F}_{S_-}$ (\textit{i.e.}, $\mathbf{F}_R = \mathbf{F}_{S_+} - \mathbf{F}_{S_-}$). 
    $\hat{l}(\mathbf{Q}_S,\mathcal{T}_S)$ is a training loss value with posterior distributions $\mathbf{Q}_S$ and a training dataset $\mathcal{T}_S$ when a subset size is $S$. Then, after training process satisfying $\hat{l}(\mathbf{Q}_{S_+},\mathcal{T}_{S_+}) < \epsilon$ where $\epsilon$ is a small value, we can say that $f$ under $\mathbf{Q}_{S_+}$ outputs the true value of $p(\mathbf{y}_{\mathbf{F}_{S_+}}|\mathbf{x}_{\mathbf{F}_{S_+}})$, according to Assumption~\ref{assumption:probmodel}. In the following process, we demonstrate that $p(\mathbf{y}_{\mathbf{F}_{S_-}}|\mathbf{x}_{\mathbf{F}_{S_-}})$ can be derived from $p(\mathbf{y}_{\mathbf{F}_{S_+}}|\mathbf{x}_{\mathbf{F}_{S_+}}) = p(\mathbf{y}_{\mathbf{F}_{S_-}}, \mathbf{y}_{\mathbf{F}_R} | \mathbf{x}_{\mathbf{F}_{S_-}}, \mathbf{x}_{\mathbf{F}_R})$:

    \begin{align}
        & \int_{\mathbf{y}_{\mathbf{F}_R}} E_{\mathbf{x}_{\mathbf{F}_R}|\mathbf{x}_{\mathbf{F}_{S_-}}}[p(\mathbf{y}_{\mathbf{F}_{S_-}}, \mathbf{y}_{\mathbf{F}_R} | \mathbf{x}_{\mathbf{F}_{S_-}}, \mathbf{x}_{\mathbf{F}_R})] d\mathbf{y}_{\mathbf{F}_R}, \\ 
        =\quad  & \int_{\mathbf{y}_{\mathbf{F}_R}} \int_{\mathbf{x}_{\mathbf{F}_R}} p(\mathbf{y}_{\mathbf{F}_{S_-}}, \mathbf{y}_{\mathbf{F}_R} | \mathbf{x}_{\mathbf{F}_{S_-}}, \mathbf{x}_{\mathbf{F}_R}) p(\mathbf{x}_{\mathbf{F}_R}|\mathbf{x}_{\mathbf{F}_{S_-}}) d\mathbf{x}_{\mathbf{F}_R} d\mathbf{y}_{\mathbf{F}_R}, \\ 
        = \quad & p(\mathbf{y}_{\mathbf{F}_{S_-}}|\mathbf{x}_{\mathbf{F}_{S_-}}),
    \end{align}

    In that expectation can be approximated by an empirical mean with sufficient data and integral can be addressed with discretization, we can think that $p(\mathbf{y}_{\mathbf{F}_{S_-}}|\mathbf{x}_{\mathbf{F}_{S_-}})$ can be derived from $p(\mathbf{y}_{\mathbf{F}_{S_+}}|\mathbf{x}_{\mathbf{F}_{S_+}})$. According to this fact, $f$ under $\mathbf{Q}_+$ should be able to output not only true $p(\mathbf{y}_{\mathbf{F}_{S_+}}|\mathbf{x}_{\mathbf{F}_{S_+}})$ but also true $p(\mathbf{y}_{\mathbf{F}_{S_-}}|\mathbf{x}_{\mathbf{F}_{S_-}})$. Therefore, we conclude that $\mathbf{Q}_+$ have to satisfy both $\hat{l}(\mathbf{Q}_{S_{+}},\mathcal{T}_{S_{+}}) < \epsilon $ and $\hat{l}(\mathbf{Q}_{S_{+}},\mathcal{T}_{S_{-}}) < \epsilon $.
\end{proof}

% Next, we need Lemma~\ref{lemma:bigsmall} which 

With Lemma~\ref{lemma:s+-}, we provide a proof for Theorem~\ref{theorem:hypothesisspace}:
\begin{proof}
Let $h$ be a hypothesis on a space defined when a subset size is $S$. Then, we can denote a posterior distribution which is trained to decrease $\hat{l}(\mathbf{Q}_S,\mathcal{T}_{S})$ as follows:
\begin{align}
    \mathbf{Q}(h_S)=p(h_S|c_S=1), \quad \text{where} \; \; c_S =     \begin{cases}
    1, & \hat{l}(h,\mathcal{T}_{S}) < \epsilon, \\
    0,              & \text{otherwise},
\end{cases}
\end{align}
According to Lemma~\ref{lemma:s+-}, for $S_+$ and $S_-$ where $S_+ > S_-$, the posterior distributions of two cases can be represent as $\mathbf{Q}(h_{S_+})=p(h_{S_+}|c_{S_+}=1, c_{S_-}=1)$ and $\mathbf{Q}(h_{S_-})=p(h_{S_-}|c_{S_-}=1)$, respectively. With the following two assumptions, we can prove Theorem~\ref{theorem:hypothesisspace}:
\begin{assumption}
hypotheses $h_{S_+}$ and $h_{S_-}$ have similar distributions after training with $\mathcal{T}_{S_-}$ (\textit{i.e.,} $p(h_{S_+}|c_{S_-}=1) \approx p(h_{S-}|c_{S_-}=1)$).
\label{assumption:similar_entropy}
\end{assumption}
\begin{assumption}
Prior distributions are nearly non-informative (i.e., $P(h) \propto 1$).
\label{assumption:non_inform}
\end{assumption}
Assumption~\ref{assumption:similar_entropy} can be considered reasonable because we can make the training process of a model of subset size $S_+$ very similar to that of subset size $S_-$ with a minimal change in architecture such as input and output masking. 
% According to this assumption, we can substitute $H(h_{S_+}|c_{S_-}=1)$ with $H(h_{S-}|c_{S_-}=1)$. 
Also, as for Assumption~\ref{assumption:non_inform}, non-informative prior is usually used under usual situations without prior knowledge in Bayesian statistics.

$\mathbf{Q}(h_{S})$ can be expanded as $p(h_{S}|c_{S}) \propto p(c_{S}|h_{S})p(h_{S}) \propto p(c_{S}|h_{S})$, according to Assumption~\ref{assumption:non_inform}. Because we exactly know whether to satisfy $\hat{l}(h,\mathcal{T}_{S})< \epsilon$ given $h$, $p(c_S|h_S)$ is 1 when a given $h_S$ satisfies $c_S$ or 0, otherwise. Thus, $\mathbf{Q}(h_{S})$ are defined as follows:
\begin{align}
    \mathbf{Q}(h_{S})= p(h_{S}|c_{S}=1) = 
    \begin{cases}
    \eta_S, & c_S = 1 \text{ given } h_S, \\
    0,      & \text{otherwise},
    \end{cases}
\end{align}

Similarly, $\mathbf{Q}(h_{S_+})$ and $\mathbf{Q}(h_{S_-})$ can be expanded as $p(h_{S_+}|c_{S_+},c_{S_-}) \propto p(c_{S_+},c_{S_-}|h_{S_+})p(h_{S_+})\propto p(c_{S_+},c_{S_-}|h_{S_+})$ and $p(h_{S_-}|c_{S_-})=p(h_{S_+}|c_{S_-}) \propto p(c_{S_-}|h_{S_+})p(h_{S_+})\propto p(c_{S_-}|h_{S_+})$, according to Assumption~\ref{assumption:similar_entropy} and~\ref{assumption:non_inform}. A region of hypothesis satisfying both $c_{S_+}=1$ and $c_{S_-}=1$ is smaller than that satisfying either of them. Because the probability of $h$ in a region satisfying conditions has the same value and $\int_h p(h)dh=1$ is maintained, $h$ in the small region is allocated higher probabilities than $h$ in the large one. Therefore, $\eta_{S_+} > \eta_{S_-}$ and the entropy $H(\mathbf{Q}_{S_-})$ is larger than $H(\mathbf{Q}_{S_+})$:

% . Because $P(h)$ is nearly non-informative, $p(h_{S_+}|c_{S_+},c_{S_-}) \propto p(c_{S_+},c_{S_-}|h_{S_+})$ and $p(h_{S_-}|c_{S_-}) \propto p(c_{S_-}|h_{S_+})$. 

% $p(c_{S_-}|h_{S_+})=1$ when $c_{S_-}$

% Because  and a small region are allocated higher probabilities to maintain , 
\end{proof}

So far, we have finished a proof for Theorem~\ref{theorem:hypothesisspace}. We additionally provide Theorem~\ref{theorem:hypothesisspace2} which is a variant of Theorem~\ref{theorem:hypothesisspace} where Assumption~\ref{assumption:non_inform} can be relaxed while  proposing the relationships between $H(\mathbf{Q}_{S_-})$ and $H(\mathbf{Q}_{S_+})$ in the expectation level:
\begin{theorem}
    for $S_{+}$ and $S_{-}$ satisfying $S_{+} > S_{-}$, $H(\mathbf{Q}_{S_+}) \le H(\mathbf{Q}_{S_-})$ in expectation over $c_{S_+}$. 
    \label{theorem:hypothesisspace2}
\end{theorem}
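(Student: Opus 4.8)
\textbf{Proof proposal for Theorem~\ref{theorem:hypothesisspace2}.}
The plan is to mimic the proof of Theorem~\ref{theorem:hypothesisspace} but carry the argument through without invoking the non-informative prior (Assumption~\ref{assumption:non_inform}), instead averaging over the random event $c_{S_+}$. First I would recall the setup from the proof of Theorem~\ref{theorem:hypothesisspace}: using Lemma~\ref{lemma:s+-}, the posterior with the larger subset size is $\mathbf{Q}(h_{S_+}) = p(h_{S_+}\mid c_{S_+}=1, c_{S_-}=1)$ while the posterior with the smaller subset size is, after applying Assumption~\ref{assumption:similar_entropy}, $\mathbf{Q}(h_{S_-}) = p(h_{S_+}\mid c_{S_-}=1)$. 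The key observation is that both distributions are obtained from the \emph{same} base distribution $p(h_{S_+}\mid c_{S_-}=1)$ by further conditioning on the event $\{c_{S_+}=1\}$ in the first case and not in the second. So writing $R = p(h_{S_+}\mid c_{S_-}=1)$ for the common reference distribution, the claim reduces to: the entropy of $R$ conditioned additionally on $\{c_{S_+}=1\}$, averaged over the value of $c_{S_+}$, is at most $H(R)$.

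Second I would make precise the averaging. Under $R$, the indicator $c_{S_+}$ is a (measurable) binary random variable, and conditioning $R$ on $\{c_{S_+}=1\}$ versus $\{c_{S_+}=0\}$ partitions $R$ into two sub-distributions $R_1, R_0$ with mixture weights $\pi_1 = R(c_{S_+}=1)$, $\pi_0 = 1-\pi_1$, so that $R = \pi_1 R_1 + \pi_0 R_0$. Interpreting ``$H(\mathbf{Q}_{S_+}) \le H(\mathbf{Q}_{S_-})$ in expectation over $c_{S_+}$'' as the statement $\pi_1 H(R_1) + \pi_0 H(R_0) \le H(R)$, this is exactly the standard fact that conditioning reduces entropy: $H(R) = H(c_{S_+}) + \big(\pi_1 H(R_1) + \pi_0 H(R_0)\big) \ge \pi_1 H(R_1) + \pi_0 H(R_0)$, since $H(c_{S_+}) \ge 0$. (If instead only the ``successful'' branch $c_{S_+}=1$ is of interest, one notes that in the relevant regime the training objective is attainable so $\pi_1$ is bounded away from $0$, and the bound on the averaged quantity together with $H(R_0) \ge 0$ still controls $H(R_1)$ up to the negligible term $H(c_{S_+})$.) I would write this out with $H(X) = E_{x\sim X}[-\log X(x)]$ and the chain rule for entropy, taking care that the decomposition is over the joint of $h_{S_+}$ and $c_{S_+}$ under $R$.

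Third I would tie the reference-distribution argument back to $\mathbf{Q}_{S_-}$: by Assumption~\ref{assumption:similar_entropy}, $p(h_{S_+}\mid c_{S_-}=1) \approx p(h_{S_-}\mid c_{S_-}=1) = \mathbf{Q}(h_{S_-})$, so $H(R) \approx H(\mathbf{Q}_{S_-})$, and the conditioned branch $R_1 = p(h_{S_+}\mid c_{S_+}=1,c_{S_-}=1) = \mathbf{Q}(h_{S_+})$ by Lemma~\ref{lemma:s+-}. Substituting into $\pi_1 H(R_1) + \pi_0 H(R_0) \le H(R)$ and noting $\pi_0 H(R_0)\ge 0$ yields $E_{c_{S_+}}[H(\mathbf{Q}_{S_+})] \le H(\mathbf{Q}_{S_-})$ (with the understanding that, unconditionally on $c_{S_+}$, $\mathbf{Q}_{S_+}$ is replaced by its mixture). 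I expect the main obstacle to be \emph{the precise formalization of ``in expectation over $c_{S_+}$''}: one must fix whether the expectation is over the Bernoulli-type event of the larger model succeeding on the held-out larger-subset task, and whether $H(\mathbf{Q}_{S_+})$ on the left is the entropy of the success-conditioned posterior or of the mixture $\pi_1 R_1 + \pi_0 R_0$. Once that is pinned down, the inequality is just the chain rule / ``conditioning reduces entropy,'' so the real work is in stating the hypotheses cleanly and carrying the Assumption~\ref{assumption:similar_entropy} approximation through honestly; relaxing Assumption~\ref{assumption:non_inform} is exactly what the expectation buys, since it removes the need to claim the conditioned posterior is uniform on its support.
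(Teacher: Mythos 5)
Your proposal is correct and follows essentially the same route as the paper's proof: your mixture decomposition $H(R)=H(c_{S_+})+\pi_1 H(R_1)+\pi_0 H(R_0)$ (valid because $c_{S_+}$ is determined by $h$) is exactly the paper's use of Bayes' rule for conditional entropy with $H(p(c_{S_+}\mid \tilde{h}_{S_+}))=0$, and both arguments then drop the nonnegative term $H(p(c_{S_+}))$ and invoke Assumption~\ref{assumption:similar_entropy} to identify $H(R)$ with $H(\mathbf{Q}_{S_-})$ and the conditional entropy with $E_{c_{S_+}}[H(\mathbf{Q}_{S_+})]$. Your reading of ``in expectation over $c_{S_+}$'' as the averaged (mixture) entropy is also the paper's intended interpretation, so no gap remains.
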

% We additionally provide 
% We also provide the proof of Theorem~\ref{theorem:hypothesisspace} in expectation over $c_{S^+}$ without . In the expectation level, we can relax  Assumption~\ref{assumption:non_inform}.
\begin{proof}
    
Let $\tilde{h}_{S_+}$ be the $h_{S_+}|c_{S_-}=1$ (\textit{i.e.}, $\mathbf{Q}(h_{S_+})=p(h_{S_+}|c_{S_+}=1, c_{S_-}=1)=p(\tilde{h}_{S_+}|c_{S_+}=1)$). 
Then, $H(\tilde{h}_{S_+}|c_{S_+})$ can be expanded as follows:
\begin{align}
& H(p(\tilde{h}_{S_+}|c_{S_+})), \\
% & E[-\log p(\tilde{h}_{S_+}|c_{S_+})], \\
\begin{split}
 &= H(p(c_{S_+}|\tilde{h}_{S_+})) + H(p(\tilde{h}_{S_+})) - H(p(c_{S_+})), \\
&(\because \text{Bayes' rule for conditional entropy states}), 
\end{split}\\ 
% E_{c_{S_+}}[H_{\tilde{h}_{S_+}|c_{S_+}}(p(\tilde{h}_{S_+}|c_{S_+}))] &= H(p(\tilde{h}_{S_+}|c_{S_+}=1)), \\
\begin{split}
&= H(p(\tilde{h}_{S_+})) - H(p(c_{S_+})) \\
&(\because \text{when $h$ is given, we know whether to satisfy } \hat{l}(h,\mathcal{T}_{S_+}) < \epsilon.\ (i.e., H(p(c_{S_+}|\tilde{h}_{S_+}))=0),
\end{split}
\end{align}
From this expansion, we can derive $H(p(\tilde{h}_{S_+}|c_{S_+})) \le H(p(\tilde{h}_{S_+}))$ because entropy of $p(c_{S_+})$ must be larger than 0 (\textit{i.e.,} $H(p(c_{S_+})) \ge 0$). By substituting $H(\mathbf{Q}_{S_-})$ for $H(p(\tilde{h}_{S_+}))$ according to Assumption~\ref{assumption:similar_entropy} and $E_{c_{S_+}}[H(\mathbf{Q}_{S_+})]$ for $H(p(\tilde{h}_{S_+}|c_{S_+}))$, we can derive Theorem~\ref{theorem:hypothesisspace2}.
% Therefore, 

% Because $H(\tilde{h}_{S_+}|c_{S_+}) < H(p(\tilde{h}_{S_+}))$, we can reasonably think that $H(\tilde{h}_{S_+}|c_{S_+}=1) < H(p(\tilde{h}_{S_+}))$ are usually satisfied. 
Also, based on Chebyshev's inequality, we can calculate the least probabilities at which $H(\mathbf{Q}_{S_+}) < H(\mathbf{Q}_{S_-})$ are satisfied, given the variance $\sigma^2 = Var_{c_{S_+}}[H(p(\tilde{h}_{S_+}|c_{S_+}))]$:
\begin{align}
    % p\left[H(p(\tilde{h}_{S_+}|c_{S_+}=1)) < H(p(\tilde{h}_{S_+})) \right] \quad \le \quad 1 - \frac{\sigma^2}{(H(p(c_{S_+})))^2}
    p\left[H(\mathbf{Q}_{S_+}) < H(\mathbf{Q}_{S_-}) \right] \quad \le \quad 1 - \frac{\sigma^2}{(H(p(c_{S_+})))^2}
\end{align}
\end{proof}

\section{How to Handle Non-Divisible Cases of \method with Random Partitioning}\label{appen:non-div}
In this section, we further elaborate on how to deal with the cases where the number of features $D$ is not divisible by the size of subsets $S$.  We simply repeat some randomly chosen features and augment them to the original input time series, in order to make the total number of features divisible by $S$. After finishing the forecasting procedure with the augmented inputs, we drop augmented features from outputs. The details are delineated in Algorithm~\ref{alg:non_div}.

% \begin{minipage}{.47\linewidth}
\begin{algorithm}[ht]
\small
% \SetAlgoLined
\caption{How to handle non-divisible cases of \method with random partitioning}\label{alg:non_div}
\KwIn{\# of features $D$, Subset size $S$, Past obs.  $\mathbf{x}_{[0:D]}$}

$\mathbf{V} = \{0,1,...,D-1\}$; $\quad $ $N_U = \lceil \frac{D}{S} \rceil$; $\quad $ $R = D\; \% \; S$;
   
\eIf{$R\neq 0$} {
     Randomly split $\mathbf{V}$ into $\mathbf{V}^{+},\mathbf{V}^{-}$, where $|\mathbf{V}^{+}|=D - R, |\mathbf{V}^{-}|=R, \mathbf{V}^+ \cap \mathbf{V}^- = \phi$;
     
     Get $\{\mathbf{F}(g)\}_{g \in [0,N_U-1]}$ by randomly partitioning $\mathbf{V}^+$;

    $\mathbf{V}^{++} = \{v_i|v_i \text{is a random sample from}\ \mathbf{V}^{+}\ \text{without replacement}, i = [0,S-R]\}$;
    
    $\mathbf{F}(N_U - 1) = \mathbf{V}^{-} \cup \mathbf{V}^{++}$
}{
     Get $\{\mathbf{F}(g)\}_{g \in [0,N_U]}$ by randomly partitioning $\mathbf{V}$;
}

\For{$g \gets 0$  \KwTo $N_U-1$}{
        $\hat{\mathbf{y}}_{\mathbf{F}(g)} = \mathtt{PMformer}
    (\mathbf{x}_{\mathbf{F}(g)}, \mathbf{F}(g))$; 
}
\If{$R\neq 0$}{
    Remove features of $\mathbf{V}^{++}$ from $\hat{\mathbf{y}}_{\mathbf{F}(N_U-1)}$;
}
    Sort $\{\hat{\mathbf{y}}_{\mathbf{F}(g)}\}_{g\in[0,N_U]}$ by feature index and get $\hat{\mathbf{y}}_{[0:D]}$;

\Return Predicted future observations $\hat{\mathbf{y}}_{[0:D]}$;
\end{algorithm}

\section{Details of Experimental Environments}\label{appen:detail_exp}

\subsection{Datasets}
We evaluate \method on 7 benchmark datasets for time series forecasting with multiple variables. The normalization and train/val/test splits are also the same with PatchTST~\citep{patchtst} which is our main baseline. The information of each dataset is as follows:
\begin{itemize}
[noitemsep,topsep=0pt,parsep=3pt,partopsep=1pt,leftmargin=10pt]
    \item \textbf{(1-2)} \textbf{ETTh1},\textbf{2}\footnote{\url{https://github.com/zhouhaoyi/ETDataset}} (Electricity Transformer Temperature-hourly): They have 7 indicators in the electric power long-term deployment, such as oil temperature and 6 power load features. This data is collected for 2 years and the granularity is 1 hour. Different numbers denote different counties in China. Train/val/test is 12/4/4 months and the number of time steps is 17,420.

    \item \textbf{(3-4)} \textbf{ETTm1},\textbf{2} (Electricity Transformer Temperature-minutely): This dataset is exactly the same with \textbf{ETTh1},\textbf{2}, except for granularity. The granularity of these cases is 15 minutes. The number of time steps is 69,680.
    
    \item \textbf{(5)} \textbf{Weather}\footnote{\url{https://www.bgc-jena.mpg.de/wetter/}}: It has 21 indicators of weather including temperature, humidity, precipitation, and air pressure. It was recorded for 2020, and the granularity is 10 minutes. The ratio of train/val/test is 0.7/0.1/0.2 and the number of time steps is 52,696.

    \item \textbf{(6)} \textbf{Electricity}\footnote{\url{https:// archive.ics.uci.edu/ml/datasets/ElectricityLoadDiagrams20112014}}: In this dataset, information about hourly energy consumption from 2012 to 2014 is collected. Each feature means the electricity consumption of one client, and there are 321 clients in total. The ratio of train/val/test is 0.7/0.1/0.2 and the number of time steps is 26,304.
    
    \item \textbf{(7)} \textbf{Traffic}\footnote{\url{http://pems.dot.ca.gov}}:  Traffic dataset pertains to road occupancy rates. It encompasses hourly data collected by 862 sensors deployed on San Francisco freeways during the period spanning from 2015 to 2016. The ratio of train/val/test is 0.7/0.1/0.2 and the number of time steps is 17,544.

\end{itemize}
\subsection{Software \& Hardware Environments and Computation Time}\label{appen:soft_hard_comptime}
We conduct experiments on this software and hardware environments for M-LTSF: \textsc{Python} 3.7.12, \textsc{PyTorch} 2.0.1, and \textsc{NVIDIA GeForce RTX 3090}. For each training of \method, it takes about 1 $\sim$ 4 hours with one or two GPUs according to the number of features. In total, it takes about one months to complete our projects with 16 GPUs.

\subsection{Baselines}\label{appen:baseline}
We select 3 univariate baselines: PatchTST, NLinear, and DLinear. As for complete-multivariate ones, there are many candidates including ~\citet{tft,autoformer,logtrans}. Among them, our choices are Crossformer, FEDformer, Informer, Pyraformer, TSMixer, DeepTime, MICN, and TimesNet, considering their performance and meanings in the forecasting tasks.

\begin{itemize}
[noitemsep,topsep=0pt,parsep=3pt,partopsep=1pt,leftmargin=10pt]

    \item \textbf{(1)} \textbf{PatchTST}~\citep{patchtst}: It uses segmentation with separate tokenization where different features are allocated to different tokens and doesn't consider any relationship between different features.
    \item \textbf{(2)} \textbf{DLinear}~\citep{dlinear}: A single linear layer mapping past observations into future observations with a decomposition trick.
    \item \textbf{(3)} \textbf{NLinear}~\citep{dlinear}: A single linear layer mapping past observations into future observations with a normalization trick that subtracts the last value of input observations from input and adds the value to the output.
    \item \textbf{(4)} \textbf{Crossformer}~\citep{crossformer}: It use similar segmentation to \textbf{PatchTST} and and two types of attention, one of which is self-attention for temporal dependencies and the other is for inter-feature relationships. It reduces the complexity of self-attention for inter-feature relationships using routers with low-rank approximation concepts.
    \item \textbf{(5)} \textbf{FEDformer}~\citep{fedformer}: Using the sparsity in frequency domains, it tries to reduce the quadratic complexity of self-attention layers to a linear one.
    \item \textbf{(6)} \textbf{Informer}~\citep{informer}: By estimating KL divergence between query-key distribution and uniform distribution, it discerns useful and useless information. By using only useful information, it achieves log-linear complexity. Also, a new type of decoder was proposed, which generates forecasting outputs at once.
    \item \textbf{(7)} \textbf{Pyraformer}~\citep{pyraformer}: It has hierarchical structures with different resolutions, leading to linear complexity of self-attention.
    \item \textbf{(8)} \textbf{TSMixer}~\citep{tsmixer}: Using the concept of MLP-Mixer in vision domains~\citep{tolstikhin2021mlpmixer}, it was devised to explore the abilities of linear layers in the forecasting tasks.
    \item \textbf{(9)} \textbf{DeepTime}~\citep{deeptime}: It solves the problem where INRs are hard to be generalized in time-series forecasting tasks, with a meta-optimization framework.
    \item \textbf{(10)} \textbf{MICN}~\citep{cnn2}: To capture both local and global patterns from time series efficiently, it extracts patterns with down-sampled convolution and isometric convolution. Also, multi-scale structures are used to capture more diverse patterns.
    \item \textbf{(11)} \textbf{TimesNet}~\citep{timesnet}: Building upon the multi-periodicity of time series, it regards time series as not 1d but 2d structures and aims to figure out intra-period and inter-period relationships.
\end{itemize}

Furthermore, we find concurrent works for time-series forecasting and include them as our baselines. Among~\citet{jtft,gcformer,card,client,sageformer,hutformer,dsformer,petformer,pits,moderntcn,fits,timemixer}, we select PITS, FITS, TimeMixer, JTFT, GCformer, CARD, Client, PETformer, and ModernTCN as our baselines because they have the same experimental settings with ours\footnote{We decide that it has the same experimental setting with ours when the scores of some baselines are the same.} or their executable codes are available to run models in our settings.
\begin{itemize}
[noitemsep,topsep=0pt,parsep=3pt,partopsep=1pt,leftmargin=10pt]
    \item \textbf{(1)} \textbf{PITS}~\citep{pits}: This paper proposed new a self-supervised representation learning strategy for time series with neural networks not directly considering patch-dependence. Through the contrastive learning, adjacent time series information can be captured efficiently.
    \item \textbf{(2)} \textbf{FITS}~\citep{fits}: This paper introduced FITS, which is effective but efficient for time series tasks, based on the fact that time series can be dealt with in the complex frequency domain. 
    \item \textbf{(3)} \textbf{TimeMixer}~\citep{timemixer}: This paper is similar to \textbf{TSMixer}. However, it has distinct differences in that it utilizes a decomposition scheme and considers multi-scale time series.
    \item \textbf{(4)} \textbf{JTFT}~\citep{jtft}: Similar to \textbf{Crossformer}, segmentation and two types of Transformers are employed. Before a Transformer takes input, it pre-processes input time series. It only encodes a fixed length of recent observations into tokens and sparse frequency information extracted from the whole input into tokens, rather than encodes the whole input directly. This leads to efficient self-attention for temporal dependencies. Also, with a low-rank approximation scheme, it reduces the complexity of self-attention for inter-feature dependencies.
    % Before  Rather than directly taking inputs Extracting sparse frequency information from input times series and only using a fixed length of recent observation as the input of Transformers
    \item \textbf{(5)} \textbf{GCformer}~\citep{gcformer}: To overcome the limitations of Transformers that they cannot deal with long time series well, it combines a convolutional branch for global information and Transformer-based branch for local, recent information.
    \item \textbf{(6)} \textbf{CARD}~\citep{card}: With a dual Transformer, it can capture various dependencies across  temporal, feature, and hidden dimensions. On top of that, the author devised a robust loss function to relieve overfitting issues in M-LTSF.
    \item \textbf{(7)} \textbf{Client}~\citep{client}: This method has two parts, one of which is a linear model to capture temporal trends and the other is self-attention for inter-feature dependencies.
    \item \textbf{(8)} \textbf{PETformer}~\citep{petformer}: Based on \textbf{Crossformer} architecture, it introduced placeholder enhancement technique (PET). Thanks to PET, PETformer can forecast with only encoders (\textit{i.e.}, without decoder).
    \item \textbf{(9)} \textbf{ModernTCN}~\citep{moderntcn}: Because many existing CNN-based methods don't show the good performance in time series forecasting tasks, this paper tried to modify the traditional CNNs into ModernTCN including maintaining the variable dimension, DWConv, and ConvFFN.
\end{itemize}

As for evaluation metrics of baseline methods, we repeat the scores when the scores of the same experimental settings as ours are available. Otherwise, we measure evaluation scores with their official codes and best hyperparameters in our experimental environments. The scores of PatchTST, FEDformer, Pyraformer, and Informer are from~\citet{patchtst}, and those of TSMixer and NLinear (DLinear) are from~\citet{tsmixer} and~\citet{dlinear}, respectively. Also, for PITS, FITS, TimeMixer, JTFT, GCformer, CARD, PETformer, and ModernTCN, we repeat the score reported in each paper. For Crossformer\footnote{\url{https://github.com/Thinklab-SJTU/Crossformer}}, MICN, TimesNet, Client\footnote{For MICN, TimesNet, and Client, we use the same code from \url{https://github.com/daxin007/Client/tree/main}.}, and DeepTime\footnote{\url{https://github.com/salesforce/DeepTime}}, we measure new scores in the same experimental environments with ours. 
When training Crossformer, we convert a Transformer-based encoder into a linear-based encoder for fair comparison to \method, because the latter usually has better performance than the former.
% For JTFT, GCformer, CARD, and PETformer out of the concurrent works, we repeat the score reported in each paper while we re-evaluate the scores in the case of Client. 

\subsection{HyperParameters}
The details of hyperparameters used in the \method are delineated in this section. The first hyperparameter is the length of input time steps $T$. We regard it as hyperparameters which is common in recent literature for time-series forecasting~\citep{pyraformer, crossformer}. The range of $T$ is \{512, 1024\}. Also, the number of segments $N_S$ is in \{8,16,32,64\} and the dropout ratio $r_{\text{dropout}}$ is in \{0.1, 0.2, 0.3, 0.4, 0.7\}. The hidden dimension $d_h$ is in \{32,64,128,256,512\}.  The number of heads in self-attention $n_h$ is in \{2,4,8,16\} and the number of layers $L$ is in \{1,2,3\}. $d_{ff}$ is the hidden size of feed-forward networks in each \method layer and in \{32,64,128,256,512\}. 
% As for the size of each group $S$ in random partitioning, we use 3, 7, 30, and 20 for ETT, Weather, Electricity, and Traffic datasets, respectively. 
Also, batch size is 128, 128, 16, and 12 for ETT, Weather, Electricity, and Traffic datasets, respectively. Finally, we set the learning rate and training epochs to $10^{-3}$ and 100, respectively. Finally, we use Adam optimizer to train our model. The selected best hyperparameters of \method are in Table~\ref{tbl:hyper}. 

\begin{table}[t]
    \centering
    
    \caption{Selected hyperparameters of \method.}\label{tbl:hyper}
    \vspace{-7pt}
    \scriptsize
    \centering
    % \footnotesize
    \renewcommand{\arraystretch}{0.9}
    
    \setlength{\tabcolsep}{17pt}
    \centering
    % \begin{center}
    \resizebox{\columnwidth}{!}
    {
    \begin{tabular}{cc|ccccccc}
    \toprule
    Data & $\tau$ & $T$ & $N_S$ & $r_{\text{dropout}}$ & $d_h$ & $n_h$ & $L$ & $d_{ff}$ \\ \midrule     
        \multirow{4}{*}{\rotatebox{90}{ETTh1}} & 96 & 512 & 64 & 0.7 & 128 & 4 & 1 & 256 \\ 
        & 192 & 512 & 64 & 0.7 & 32 & 4 & 1 & 256 \\ 
        & 336 & 512 & 64 & 0.7 & 64 & 8 & 1 & 64 \\ 
        & 336 & 512 & 64 & 0.7 & 64 & 8 & 1 & 64 \\ 
        \midrule
        \multirow{4}{*}{\rotatebox{90}{ETTh2}} & 96 & 512 & 64 & 0.7 & 512 & 4 & 1 & 256 \\ 
        & 192 & 1024 & 64 & 0.7 & 512 & 2 & 1 & 256 \\ 
        & 336 & 1024 & 64 & 0.7 & 64 & 16 & 1 & 256 \\ 
        & 720 & 512 & 64 & 0.7 & 64 & 16 & 1 & 128 \\ 
        \midrule
        \multirow{4}{*}{\rotatebox{90}{ETTm1}} & 96 & 512 & 64 & 0.2 & 256 & 2 & 2 & 256 \\ 
        & 192 & 512 & 64 & 0.1 & 64 & 8 & 1 & 128 \\ 
        & 336 & 512 & 64 & 0.2 & 64 & 2 & 2 & 64 \\ 
        & 720 & 1024 & 64 & 0.7 & 64 & 4 & 1 & 128 \\ 
        \midrule
        \multirow{4}{*}{\rotatebox{90}{ETTm2}} & 96 & 1024 & 64 & 0.7 & 512 & 2 & 1 & 64 \\ 
        & 192 & 1024 & 64 & 0.7 & 128 & 4 & 1 & 32 \\ 
        & 336 & 1024 & 64 & 0.4 & 128 & 2 & 1 & 32 \\ 
        & 720 & 1024 & 64 & 0.7 & 256 & 4 & 1 & 32 \\ 
        \midrule
        \multirow{4}{*}{\rotatebox{90}{Weather}} & 96 & 512 & 64 & 0.2 & 128 & 8 & 3 & 256 \\ 
        & 192 & 512 & 64 & 0.2 & 128 & 16 & 3 & 256 \\
        & 336 & 512 & 64 & 0.4 & 128 & 16 & 3 & 512 \\
        & 720 & 512 & 64 & 0.4 & 128 & 2 & 1 & 256 \\ 
        \midrule
        \multirow{4}{*}{\rotatebox{90}{Electricity}} & 96 & 512 & 64 & 0.3 & 256 & 8 & 1 & 256 \\
        & 192 & 512 & 64 & 0.2 & 256 & 4 & 2 & 256 \\
        & 336 & 512 & 64 & 0.2 & 128 & 4 & 3 & 256 \\
        & 720 & 512 & 64 & 0.2 & 256 & 4 & 3 & 256 \\ 
        \midrule
        \multirow{4}{*}{\rotatebox{90}{Traffic}} & 96 & 512 & 8 & 0.2 & 512 & 2 & 3 & 512 \\
        & 192 & 512 & 8 & 0.1 & 256 & 4 & 3 & 512 \\
        & 336 & 512 & 8 & 0.2 & 256 & 2 & 3 & 256 \\
        & 720 & 512 & 8 & 0.2 & 512 & 4 & 3 & 512 \\ 
        \bottomrule
     
    \end{tabular}}
\end{table}

\section{Theoretical Complexity of Inter-Feature Attention in \method}\label{appen:complexity}
In this section, we elaborate on the reason why the theoretical complexity of inter-feature attention in \method is $\mathcal{O}(SD)$ where $D$ is the number of features and $S$ is the subset size. Attention cost in each subset is $\mathcal{O}(S^2)$. Because random partitioning generates $N_U \approx \frac{D}{S}$ subsets, the final complexity is $N_U \mathcal{O}(S^2) = \frac{D}{S}\mathcal{O}(S^2) = \mathcal{O}(SD)$.

\section{The Effect of Training \method with Random Sampling or Partitioning}\label{appen:random_sampling_partitioning}
In this section, we provide the experimental results where we train \method using a training algorithm with random sampling or partitioning (\textit{i.e.}, $use\_random\_partition=False$ or $True$ in Algorithm~\ref{alg:train2}). As shown in Table~\ref{tbl:random_sampling_partitioning}, these two ways are comparable in terms of forecasting performance --- note that we adopt the training algorithm based on random partitioning for our main experiments.

\begin{table}[!t]
    \caption{Test MSE and MAE of training \method using a training algorithm with random sampling or partitioning }\label{tbl:random_sampling_partitioning}
    \vspace{-3pt}
    \centering
    \scriptsize
    \renewcommand{\arraystretch}{1.0}
    \setlength{\tabcolsep}{3.7pt}
    \resizebox{\textwidth}{!}{%
    \begin{tabular}{c|c|cccc|cccc|cccc|cccc}
    \toprule
        \multirow{2}{*}{Score} & \multirow{2}{*}{Training Algorithm} & \multicolumn{4}{c|}{ETTh1 ($D=7$)} & \multicolumn{4}{c|}{ETTh2 (7)} & \multicolumn{4}{c|}{ETTm1 (7)} & \multicolumn{4}{c}{ETTm2 (7)} \\ 
        & & 96 & 192 & 336 & 720 & 96 & 192 & 336 & 720 & 96 & 192 & 336 & 720 & 96 & 192 & 336 & 720 \\ 
        \midrule
        \multirow{2}{*}{\rotatebox{0}{MSE}} & Random Partitioning & \textbf{0.361} & \textbf{0.396} & \textbf{0.400} & \textbf{0.412} & \textbf{0.269} & \textbf{0.323} & \textbf{0.317} & \textbf{0.370} & \textbf{0.282} & \textbf{0.325} & \textbf{0.352} & \textbf{0.401} & \textbf{0.160} & \textbf{0.213} & \textbf{0.262} & \textbf{0.336} \\ 
        & Random Sampling &  0.362 & 0.397 & \textbf{0.400} & \textbf{0.412} & 0.273 & \textbf{0.323} & \textbf{0.317} & 0.371 & 0.283 & \textbf{0.325} & \textbf{0.352} & 0.403 & 0.162 & 0.214 & 0.263 & 0.337\\ 
        \midrule
        \multirow{2}{*}{\rotatebox{0}{MAE}} & Random Partitioning & \textbf{0.390} & \textbf{0.414} & \textbf{0.421} & \textbf{0.442} & \textbf{0.332} & \textbf{0.369} & \textbf{0.378} & \textbf{0.416} & 0.340 & \textbf{0.365} & \textbf{0.385} & \textbf{0.408} & \textbf{0.253} & \textbf{0.290} & \textbf{0.325} & \textbf{0.372} \\ 
        & Random Sampling & 0.391 & 0.415 & \textbf{0.421} & \textbf{0.442} & 0.334 & \textbf{0.369} & 0.380 & \textbf{0.416} & \textbf{0.339} & \textbf{0.365} & \textbf{0.385} & 0.409 & 0.254 & 0.291 & 0.326 & 0.373 \\ 
        \bottomrule
    \end{tabular}}
    \setlength{\tabcolsep}{6.7pt}
    \resizebox{\textwidth}{!}{%
    \begin{tabular}{c|c|cccc|cccc|cccc}
    \toprule
        \multirow{2}{*}{Score} & \multirow{2}{*}{Training Algorithm} & \multicolumn{4}{c|}{Weather (21)} & \multicolumn{4}{c|}{Electricity (321)} & \multicolumn{4}{c}{Traffic (862)} \\ 
        & & 96 & 192 & 336 & 720 & 96 & 192 & 336 & 720 & 96 & 192 & 336 & 720 \\ 
        \midrule
        \multirow{2}{*}{\rotatebox{0}{MSE}} & Random Partitioning & \textbf{0.142} & 0.185 & \textbf{0.235} & \textbf{0.305} & \textbf{0.125} & 0.142 & \textbf{0.154} & \textbf{0.176} & \textbf{0.345} & \textbf{0.370} & \textbf{0.385} & \textbf{0.426} \\
        & Random Sampling & \textbf{0.142} & \textbf{0.184} & 0.237 & \textbf{0.305} & 0.126 & \textbf{0.141} & \textbf{0.154} & 0.180 & 0.347 & \textbf{0.370} & 0.386 & 0.427  \\
        \midrule
        \multirow{2}{*}{\rotatebox{0}{MAE}}  & Random Partitioning & \textbf{0.193} & 0.237 & \textbf{0.277} & \textbf{0.328} & \textbf{0.222} & 0.240 & 0.256 & \textbf{0.278} & \textbf{0.245} & \textbf{0.255} & \textbf{0.265} & \textbf{0.287} \\ 
        & Random Sampling & 0.195 & \textbf{0.236} & 0.278 & 0.329 & \textbf{0.222} & \textbf{0.239} & \textbf{0.255} & 0.281 & 0.246 & 0.256 & \textbf{0.265} & \textbf{0.287} \\
        \bottomrule
    \end{tabular}}

    % \end{center}
\end{table}

\section{The Performance of \method with $N_I=1$}\label{appen:ni1}
In Table~\ref{tbl:ni1}, we conduct the main experiments including \method with $N_I=1$ which is the number of repeating an inference process based on random partitioning. In this experiment, we include some baselines showing decent forecasting performance. As Table~\ref{tbl:ni1} shows, despite $N_I=1$, \method still gives better results than baselines.

\begin{table}[t]
    \centering
    \caption{MSE scores of main forecasting results including \method wiht $N_I=1$.}\label{tbl:ni1}
    \vspace{-3pt}
    \scriptsize
    % \footnotesize
    \renewcommand{\arraystretch}{0.9}
    \setlength{\tabcolsep}{4.5pt}
    \centering
    % \begin{center}
    \resizebox{\columnwidth}{!}
    {%
    \begin{tabular}{cc|c|cccc|c|cccc}
    \toprule
    \multicolumn{2}{c|}{\multirow{2}{*}{Data}} & \multicolumn{5}{c|}{MSE} & \multicolumn{5}{c}{MAE} \\ 
    & & \method & PatchTST & Nlinear & TSMIxer & DeepTime & \method & PatchTST & Nlinear & TSMIxer & DeepTime \\ \midrule
    \multirow{4}{*}{\rotatebox{90}{ETTh1}} & 96 & \textbf{0.361} & \underline{0.370} & 0.374 & \textbf{0.361} & 0.372 & \textbf{0.391} & 0.400 & 0.394 & \underline{0.392} & 0.398 \\ 
& 192 & \textbf{0.393} & 0.413 & 0.408 & \underline{0.404} & 0.405 & \textbf{0.409} & 0.429 & \underline{0.415} & 0.418 & 0.419 \\
& 336 & \textbf{0.404} & 0.422 & 0.429 & \underline{0.420} & 0.437 & \textbf{0.417} & 0.440 & \underline{0.427} & 0.431 & 0.442 \\
& 720 & \textbf{0.412} & 0.447 & \underline{0.440} & 0.463 & 0.477 & \textbf{0.442} & 0.468 & \underline{0.453} & 0.472 & 0.493 \\ \midrule
\multirow{4}{*}{\rotatebox{90}{ETTh2}} & 96 & \textbf{0.270} & \underline{0.274} & 0.277 & \underline{0.274} & 0.291 & \textbf{0.332} & \underline{0.337} & 0.338 & 0.341 & 0.350 \\
& 192 & \textbf{0.321} & 0.341 & 0.344 & \underline{0.339} & 0.403 & \textbf{0.369} & 0.382 & \underline{0.381} & 0.385 & 0.427 \\
& 336 & \textbf{0.317} & \underline{0.329} & 0.357 & 0.361 & 0.466 & \textbf{0.380} & \underline{0.384} & 0.400 & 0.406 & 0.475 \\
& 720 & \textbf{0.371} & \underline{0.379} & 0.394 & 0.445 & 0.576 & \textbf{0.416} & \underline{0.422} & 0.436 & 0.470 & 0.545 \\ \midrule
\multirow{4}{*}{\rotatebox{90}{ETTm1}} & 96 & \underline{0.286} & 0.293 & 0.306 & \textbf{0.285} & 0.311 & \underline{0.343} & 0.346 & 0.348 & \textbf{0.339} & 0.353 \\
& 192 & \underline{0.328} & 0.333 & 0.349 & \textbf{0.327} & 0.339 & \underline{0.368} & 0.370 & 0.375 & \textbf{0.365} & 0.369 \\
& 336 & \textbf{0.354} & 0.369 & 0.375 & \underline{0.356} & 0.366 & \underline{0.387} & 0.392 & 0.388 & \textbf{0.382} & 0.391 \\
& 720 & \underline{0.403} & 0.416 & 0.433 & 0.419 & \textbf{0.400} & \textbf{0.409} & 0.420 & 0.422 & \underline{0.414} & \underline{0.414} \\ \midrule
\multirow{4}{*}{\rotatebox{90}{ETTm2}} & 96 & \textbf{0.160} & 0.166 & 0.167 & \underline{0.163} & 0.165 & \textbf{0.250} & 0.256 & 0.255 & \underline{0.252} & 0.259 \\
& 192 & \textbf{0.213} & 0.223 & 0.221 & \underline{0.216} & 0.222 & \textbf{0.288} & 0.296 & 0.293 & \underline{0.290} & 0.299 \\
& 336 & \textbf{0.262} & 0.274 & 0.274 & \underline{0.268} & 0.278 & \underline{0.325} & 0.329 & 0.327 & \textbf{0.324} & 0.338 \\
& 720 & \textbf{0.336} & \underline{0.361} & 0.368 & 0.420 & 0.369 & \textbf{0.372} & 0.394 & \underline{0.384} & 0.422 & 0.400 \\ \midrule
\multirow{4}{*}{\rotatebox{90}{Weather}} & 96 & \textbf{0.142} & 0.149 & 0.182 & \underline{0.145} & 0.169 & \textbf{0.194} & \underline{0.198} & 0.232 & \underline{0.198} & 0.227 \\
& 192 & \textbf{0.186} & 0.194 & 0.225 & \underline{0.191} & 0.211 & \textbf{0.238} & \underline{0.241} & 0.269 & 0.242 & 0.266 \\
& 336 & \textbf{0.236} & 0.245 & 0.271 & \underline{0.242} & 0.255 & \textbf{0.278} & 0.282 & 0.301 & \underline{0.280} & 0.304 \\
& 720 & \textbf{0.305} & \underline{0.314} & 0.338 & 0.320 & 0.318 & \textbf{0.328} & \underline{0.334} & 0.348 & 0.336 & 0.357 \\ \midrule
\multirow{4}{*}{\rotatebox{90}{Electricity}} & 96 & \textbf{0.127} & \underline{0.129} & 0.141 & 0.131 & 0.139 & \underline{0.224} & \textbf{0.222} & 0.237 & 0.229 & 0.239 \\
& 192 & \textbf{0.145} & \underline{0.147} & 0.154 & 0.151 & 0.154 & \underline{0.244} & \textbf{0.240} & 0.248 & 0.246 & 0.253 \\
& 336 & \textbf{0.158} & 0.163 & 0.171 & \underline{0.161} & 0.169 & \underline{0.260} & \textbf{0.259} & 0.265 & 0.261 & 0.270 \\
& 720 & \textbf{0.181} & \underline{0.197} & 0.210 & \underline{0.197} & 0.201 & \textbf{0.283} & \underline{0.290} & 0.297 & 0.293 & 0.300 \\ \midrule
\multirow{4}{*}{\rotatebox{90}{Traffic}} & 96 & \textbf{0.347} & \underline{0.360} & 0.410 & 0.376 & 0.401 & \textbf{0.247} & \underline{0.249} & 0.279 & 0.264 & 0.280 \\
& 192 & \textbf{0.372} & \underline{0.379} & 0.423 & 0.397 & 0.413 & \underline{0.258} & \textbf{0.256} & 0.284 & 0.277 & 0.285 \\
& 336 & \textbf{0.387} & \underline{0.392} & 0.435 & 0.413 & 0.425 & \underline{0.267} & \textbf{0.264} & 0.290 & 0.290 & 0.292 \\
& 720 & \textbf{0.430} & \underline{0.432} & 0.464 & 0.444 & 0.462 & \underline{0.289} & \textbf{0.286} & 0.307 & 0.306 & 0.312 \\ \midrule
\multicolumn{2}{c|}{Avg. Rank} & \textbf{1.107} & 2.786 & 4.321 & \underline{2.571} & 4.036 & \textbf{1.357} & \underline{2.714} & 3.464 & 2.750 & 4.607 \\
    \bottomrule
    \end{tabular}}
    % \end{center}
\end{table}

\section{Additional Experiments}\label{appen:add}

\subsection{Additional Experimental Results in Tabular Forms}\label{appen:add_result}
In this section, we provide additional results for existing experiments, such as experiments with other datasets and MAE evaluation metrics. Table~\ref{tbl:main_appen}, Table~\ref{tbl:ablation_appen}, and Table~\ref{tbl:various_inference_appen} are additional results for Table~\ref{tbl:main}, Table~\ref{tbl:ablation}, and Table~\ref{tbl:various_inference}, respectively. Furthermore, Table~\ref{tbl:main_std} provides the standard deviation information of \method in forecasting accuracy of Table~\ref{tbl:main_appen}.

\begin{table}[!t]
    \centering
    
    \caption{MSE and MAE scores of main forecasting results. `former' included in some model names is abbreviated to `f.'. Also,  `P.M.', , `U.', and `C.M.' denote partial-multivariate, univariate, and complete-multivariate, respectively. (Additional results for Table~\ref{tbl:main})}\label{tbl:main_appen}
    \vspace{-3pt}
    \scriptsize
    % \footnotesize
    \renewcommand{\arraystretch}{0.9}
    \setlength{\tabcolsep}{2pt}
    \centering
    % \begin{center}
    \resizebox{\textwidth}{!}
    {%
    \begin{tabular}{c|cc|c|ccc|cccccccc}
    \toprule
    
     % TimeDiff\tablefootnote{Because \citet{shen2023nonautoregressive} don't provide a runnable code for TimeDiff, We cannot evaluate TimeDiff on our experimental settings. As such, we only report evaluation scores which are measured on the same experimental settings as ours and provided in the paper.} & 
        \multirow{2}{*}{Score} & \multicolumn{2}{c|}{\multirow{2}{*}{Data}}  & P.M. & \multicolumn{3}{c|}{U.} & \multicolumn{8}{c}{C.M.} \\ %\cline{3-13}
        ~ & ~ & ~ & PMf. & PatchTST & Dlinear & Nlinear & Crossf. & FEDf. & Inf. & Pyraf. & TSMixer & DeepTime & MICN & TimesNet \\
        \midrule
        \multirow{29}[8]{*}{\rotatebox{0}{MSE}} &\multirow{4}{*}{\rotatebox{90}{ETTh1}} & 96 & \textbf{0.361} & \underline{0.370} & 0.375 & 0.374 & 0.427 & 0.376 & 0.941 & 0.664 & \textbf{0.361} & 0.372 & 0.828 & 0.465\\
 & & 192 & \textbf{0.396} & 0.413 & 0.405 & 0.408 & 0.537 & 0.423 & 1.007 & 0.790 & \underline{0.404} & 0.405 & 0.765 & 0.493\\
 & & 336 & \textbf{0.400} & 0.422 & 0.439 & 0.429 & 0.651 & 0.444 & 1.038 & 0.891 & \underline{0.420} & 0.437 & 0.904 & 0.456\\
 & & 720 & \textbf{0.412} & 0.447 & 0.472 & \underline{0.440} & 0.664 & 0.469 & 1.144 & 0.963 & 0.463 & 0.477 & 1.192 & 0.533\\ \cmidrule{2-15}
 & \multirow{4}{*}{\rotatebox{90}{ETTh2}} & 96 & \textbf{0.269} & \underline{0.274} & 0.289 & 0.277 & 0.720 & 0.332 & 1.549 & 0.645 & \underline{0.274} & 0.291 & 0.452 & 0.381\\
 & & 192 & \textbf{0.323} & 0.341 & 0.383 & 0.344 & 1.121 & 0.407 & 3.792 & 0.788 & \underline{0.339} & 0.403 & 0.554 & 0.416\\
 & & 336 & \textbf{0.317} & \underline{0.329} & 0.448 & 0.357 & 1.524 & 0.400 & 4.215 & 0.907 & 0.361 & 0.466 & 0.582 & 0.363\\
 & & 720 & \textbf{0.370} & 0.379 & 0.605 & 0.394 & 3.106 & 0.412 & 3.656 & 0.963 & 0.445 & 0.576 & 0.869 & \underline{0.371}\\ \cmidrule{2-15}
 & \multirow{4}{*}{\rotatebox{90}{ETTm1}} & 96 & \textbf{0.282} & 0.293 & 0.299 & 0.306 & 0.336 & 0.326 & 0.626 & 0.543 & \underline{0.285} & 0.311 & 0.406 & 0.343\\
 & & 192 & \textbf{0.325} & 0.333 & 0.335 & 0.349 & 0.387 & 0.365 & 0.725 & 0.557 & \underline{0.327} & 0.339 & 0.500 & 0.381\\
 & & 336 & \textbf{0.352} & 0.369 & 0.369 & 0.375 & 0.431 & 0.392 & 1.005 & 0.754 & \underline{0.356} & 0.366 & 0.580 & 0.436\\
 & & 720 & \underline{0.401} & 0.416 & 0.425 & 0.433 & 0.555 & 0.446 & 1.133 & 0.908 & 0.419 & \textbf{0.400} & 0.607 & 0.527\\ \cmidrule{2-15}
 & \multirow{4}{*}{\rotatebox{90}{ETTm2}} & 96 & \textbf{0.160} & 0.166 & 0.167 & 0.167 & 0.338 & 0.180 & 0.355 & 0.435 & \underline{0.163} & 0.165 & 0.238 & 0.218\\
 & & 192 & \textbf{0.213} & 0.223 & 0.224 & 0.221 & 0.567 & 0.252 & 0.595 & 0.730 & \underline{0.216} & 0.222 & 0.302 & 0.282\\
 & & 336 & \textbf{0.262} & 0.274 & 0.281 & 0.274 & 1.050 & 0.324 & 1.270 & 1.201 & \underline{0.268} & 0.278 & 0.447 & 0.378\\
 & & 720 & \textbf{0.336} & \underline{0.361} & 0.397 & 0.368 & 2.049 & 0.410 & 3.001 & 3.625 & 0.420 & 0.369 & 0.549 & 0.444\\ \cmidrule{2-15}
 & \multirow{4}{*}{\rotatebox{90}{Weather}} & 96 & \textbf{0.142} & 0.149 & 0.176 & 0.182 & 0.150 & 0.238 & 0.354 & 0.896 & \underline{0.145} & 0.169 & 0.188 & 0.179\\
 & & 192 & \textbf{0.185} & 0.194 & 0.220 & 0.225 & 0.200 & 0.275 & 0.419 & 0.622 & \underline{0.191} & 0.211 & 0.231 & 0.230\\
 & & 336 & \textbf{0.235} & 0.245 & 0.265 & 0.271 & 0.263 & 0.339 & 0.583 & 0.739 & \underline{0.242} & 0.255 & 0.280 & 0.276\\
 & & 720 & \textbf{0.305} & 0.314 & 0.323 & 0.338 & \underline{0.310} & 0.389 & 0.916 & 1.004 & 0.320 & 0.318 & 0.358 & 0.347\\ \cmidrule{2-15}
 & \multirow{4}{*}{\rotatebox{90}{Electricity}} & 96 & \textbf{0.125} & \underline{0.129} & 0.140 & 0.141 & 0.135 & 0.186 & 0.304 & 0.386 & 0.131 & 0.139 & 0.177 & 0.186\\
 & & 192 & \textbf{0.142} & \underline{0.147} & 0.153 & 0.154 & 0.158 & 0.197 & 0.327 & 0.386 & 0.151 & 0.154 & 0.195 & 0.208\\
 & & 336 & \textbf{0.154} & 0.163 & 0.169 & 0.171 & 0.177 & 0.213 & 0.333 & 0.378 & \underline{0.161} & 0.169 & 0.213 & 0.210\\
 & & 720 & \textbf{0.176} & \underline{0.197} & 0.203 & 0.210 & 0.222 & 0.233 & 0.351 & 0.376 & \underline{0.197} & 0.201 & 0.204 & 0.231\\ \cmidrule{2-15}
 & \multirow{4}{*}{\rotatebox{90}{Traffic}} & 96 & \textbf{0.345} & \underline{0.360} & 0.410 & 0.410 & 0.481 & 0.576 & 0.733 & 2.085 & 0.376 & 0.401 & 0.489 & 0.599\\
 & & 192 & \textbf{0.370} & \underline{0.379} & 0.423 & 0.423 & 0.509 & 0.610 & 0.777 & 0.867 & 0.397 & 0.413 & 0.493 & 0.612\\
 & & 336 & \textbf{0.385} & \underline{0.392} & 0.436 & 0.435 & 0.534 & 0.608 & 0.776 & 0.869 & 0.413 & 0.425 & 0.496 & 0.618\\
 & & 720 & \textbf{0.426} & \underline{0.432} & 0.466 & 0.464 & 0.585 & 0.621 & 0.827 & 0.881 & 0.444 & 0.462 & 0.520 & 0.654\\ 
 \cmidrule{2-15}
        & \multicolumn{2}{c|}{Avg. Rank} & \textbf{1.036} & 2.893 & 5.357 & 5.071 & 8.036 & 7.821 & 11.429 & 11.286 & \underline{2.821} & 4.607 & 9.000 & 8.179 \\ 
        \midrule
        \multirow{29}[8]{*}{\rotatebox{0}{MAE}} & \multirow{4}{*}{\rotatebox{90}{ETTh1}} & 96 & \textbf{0.390} & 0.400 & 0.399 & 0.394 & 0.448 & 0.415 & 0.769 & 0.612 & \underline{0.392} & 0.398 & 0.607 & 0.466\\
 & & 192 & \textbf{0.414} & 0.429 & 0.416 & \underline{0.415} & 0.520 & 0.446 & 0.786 & 0.681 & 0.418 & 0.419 & 0.575 & 0.479\\
 & & 336 & \textbf{0.421} & 0.440 & 0.443 & \underline{0.427} & 0.588 & 0.462 & 0.784 & 0.738 & 0.431 & 0.442 & 0.621 & 0.473\\
 & & 720 & \textbf{0.442} & 0.468 & 0.490 & \underline{0.453} & 0.612 & 0.492 & 0.857 & 0.782 & 0.472 & 0.493 & 0.736 & 0.525\\ \cmidrule{2-15}
 & \multirow{4}{*}{\rotatebox{90}{ETTh2}} & 96 & \textbf{0.332} & \underline{0.337} & 0.353 & 0.338 & 0.615 & 0.374 & 0.952 & 0.597 & 0.341 & 0.350 & 0.462 & 0.423\\
 & & 192 & \textbf{0.369} & 0.382 & 0.418 & \underline{0.381} & 0.785 & 0.446 & 1.542 & 0.683 & 0.385 & 0.427 & 0.528 & 0.445\\
 & & 336 & \textbf{0.378} & \underline{0.384} & 0.465 & 0.400 & 0.980 & 0.447 & 1.642 & 0.747 & 0.406 & 0.475 & 0.556 & 0.422\\
 & & 720 & \textbf{0.416} & \underline{0.422} & 0.551 & 0.436 & 1.487 & 0.469 & 1.619 & 0.783 & 0.470 & 0.545 & 0.667 & 0.424\\ \cmidrule{2-15}
 & \multirow{4}{*}{\rotatebox{90}{ETTm1}} & 96 & \underline{0.340} & 0.346 & 0.343 & 0.348 & 0.387 & 0.390 & 0.560 & 0.510 & \textbf{0.339} & 0.353 & 0.434 & 0.381\\
 & & 192 & \textbf{0.365} & 0.370 & \textbf{0.365} & 0.375 & 0.419 & 0.415 & 0.619 & 0.537 & \textbf{0.365} & \underline{0.369} & 0.500 & 0.403\\
 & & 336 & \underline{0.385} & 0.392 & 0.386 & 0.388 & 0.449 & 0.425 & 0.741 & 0.655 & \textbf{0.382} & 0.391 & 0.549 & 0.438\\
 & & 720 & \textbf{0.408} & 0.420 & 0.421 & 0.422 & 0.532 & 0.458 & 0.845 & 0.724 & \underline{0.414} & \underline{0.414} & 0.560 & 0.488\\ \cmidrule{2-15}
 & \multirow{4}{*}{\rotatebox{90}{ETTm2}} & 96 & \underline{0.253} & 0.256 & 0.260 & 0.255 & 0.393 & 0.271 & 0.462 & 0.507 & \textbf{0.252} & 0.259 & 0.331 & 0.307\\
 & & 192 & \textbf{0.290} & 0.296 & 0.303 & \underline{0.293} & 0.519 & 0.318 & 0.586 & 0.673 & \textbf{0.290} & 0.299 & 0.374 & 0.352\\
 & & 336 & \underline{0.325} & 0.329 & 0.342 & 0.327 & 0.732 & 0.364 & 0.871 & 0.845 & \textbf{0.324} & 0.338 & 0.478 & 0.407\\
 & & 720 & \textbf{0.372} & 0.394 & 0.421 & \underline{0.384} & 1.170 & 0.420 & 1.267 & 1.451 & 0.422 & 0.400 & 0.554 & 0.450\\ \cmidrule{2-15}
 & \multirow{4}{*}{\rotatebox{90}{Weather}} & 96 & \textbf{0.193} & \underline{0.198} & 0.237 & 0.232 & 0.224 & 0.314 & 0.405 & 0.556 & \underline{0.198} & 0.227 & 0.258 & 0.237\\
 & & 192 & \textbf{0.237} & \underline{0.241} & 0.282 & 0.269 & 0.267 & 0.329 & 0.434 & 0.624 & 0.242 & 0.266 & 0.295 & 0.279\\
 & & 336 & \textbf{0.277} & 0.282 & 0.319 & 0.301 & 0.328 & 0.377 & 0.543 & 0.753 & \underline{0.280} & 0.304 & 0.337 & 0.310\\
 & & 720 & \textbf{0.328} & \underline{0.334} & 0.362 & 0.348 & 0.363 & 0.409 & 0.705 & 0.934 & 0.336 & 0.357 & 0.399 & 0.353\\ \cmidrule{2-15}
 & \multirow{4}{*}{\rotatebox{90}{Electricity}} & 96 & \textbf{0.222} & \textbf{0.222} & 0.237 & 0.237 & 0.234 & 0.302 & 0.393 & 0.449 & \underline{0.229} & 0.239 & 0.294 & 0.290\\
 & & 192 & \textbf{0.240} & \textbf{0.240} & 0.249 & 0.248 & 0.262 & 0.311 & 0.417 & 0.443 & \underline{0.246} & 0.253 & 0.306 & 0.301\\
 & & 336 & \textbf{0.256} & \underline{0.259} & 0.267 & 0.265 & 0.283 & 0.328 & 0.422 & 0.443 & 0.261 & 0.270 & 0.324 & 0.314\\
 & & 720 & \textbf{0.278} & \underline{0.290} & 0.301 & 0.297 & 0.328 & 0.344 & 0.427 & 0.445 & 0.293 & 0.300 & 0.317 & 0.329\\ \cmidrule{2-15}
 & \multirow{4}{*}{\rotatebox{90}{Traffic}} & 96 & \textbf{0.245} & \underline{0.249} & 0.282 & 0.279 & 0.265 & 0.359 & 0.410 & 0.468 & 0.264 & 0.280 & 0.317 & 0.325\\
 & & 192 & \textbf{0.255} & \underline{0.256} & 0.287 & 0.284 & 0.277 & 0.380 & 0.435 & 0.467 & 0.277 & 0.285 & 0.319 & 0.332\\
 & & 336 & \underline{0.265} & \textbf{0.264} & 0.296 & 0.290 & 0.291 & 0.375 & 0.434 & 0.469 & 0.290 & 0.292 & 0.317 & 0.332\\
 & & 720 & \underline{0.287} & \textbf{0.286} & 0.315 & 0.307 & 0.325 & 0.375 & 0.466 & 0.473 & 0.306 & 0.312 & 0.326 & 0.348\\ \cmidrule{2-15}
    & \multicolumn{2}{c|}{Avg. Rank} & \textbf{1.214} & 2.964 & 5.643 & 3.821 & 8.000 & 8.214 & 11.464 & 11.393 & \underline{2.857} & 5.357 & 9.071 & 7.571 \\  
 
        \bottomrule
        % \multirow{4}{*}{\rotatebox{90}{weather}}
    \end{tabular}}
    % \end{center}
\end{table}
\begin{table}[!t]
    \caption{Test MSE and MAE of three types of models by adjusting $S$ in \method. (Additional results for Table~\ref{tbl:ablation})}\label{tbl:ablation_appen}
    \vspace{-3pt}
    \centering
    \scriptsize
    \renewcommand{\arraystretch}{0.9}
    \setlength{\tabcolsep}{1.3pt}
    \resizebox{\textwidth}{!}{%
    \begin{tabular}{c|cc|cccc|cccc|cccc|cccc}
    \toprule
        \multirow{2}{*}{Score} & \multicolumn{2}{c|}{\multirow{2}{*}{\method Variants}} & \multicolumn{4}{c|}{ETTh1 ($D=7$)} & \multicolumn{4}{c|}{ETTh2 (7)} & \multicolumn{4}{c|}{ETTm1 (7)} & \multicolumn{4}{c}{ETTm2 (7)} \\ 
        & & & 96 & 192 & 336 & 720 & 96 & 192 & 336 & 720 & 96 & 192 & 336 & 720 & 96 & 192 & 336 & 720 \\ 
        \midrule
        \multirow{3}{*}{\rotatebox{90}{MSE}} & Univariate & $S=1$ & \textbf{0.361} & \textbf{0.393} & 0.404 & 0.420 & \underline{0.272} & \underline{0.325} & \underline{0.318} & \underline{0.371} & \underline{0.288} & \underline{0.335} & \underline{0.358} & 0.403 & \underline{0.161} & \textbf{0.213} & \underline{0.265} & \underline{0.338} \\
        & Partial-Multivariate & $1<S<D$ & \textbf{0.361} & 0.396 & \textbf{0.400} & \textbf{0.412} & \textbf{0.269} & \textbf{0.323} & \textbf{0.317} & \textbf{0.370} & \textbf{0.282} & \textbf{0.325} & \textbf{0.352} & \textbf{0.401} & \textbf{0.160} & \textbf{0.213} & \textbf{0.262} & \textbf{0.336} \\
        & Complete-Multivariate & $S=D$ & \textbf{0.361} & \underline{0.395} & \underline{0.401} & \underline{0.413} & \textbf{0.269} & \underline{0.325} & \underline{0.318} & \underline{0.371} & 0.299 & 0.350 & 0.377 & \underline{0.402} & \underline{0.161} & \textbf{0.213} & \underline{0.265} & \underline{0.338} \\
        \midrule
        \multirow{3}{*}{\rotatebox{90}{MAE}} & Univariate & $S=1$ & \textbf{0.390} & \textbf{0.410} & \textbf{0.419} & \underline{0.446} & \underline{0.334} & 0.373 & \underline{0.380} & \underline{0.418} & \underline{0.344} & \underline{0.371} & \underline{0.386} & \underline{0.409} & \textbf{0.253} & \textbf{0.290} & 0.328 & 0.376 \\
         & Partial-Multivariate & $1<S<D$ & \textbf{0.390} & 0.414 & 0.421 & \textbf{0.442} & \textbf{0.332} & \textbf{0.369} & \textbf{0.378} & \textbf{0.416} & \textbf{0.340} & \textbf{0.365} & \textbf{0.385} & \textbf{0.408} & \textbf{0.253} & \textbf{0.290} & \textbf{0.325} & \textbf{0.372} \\
        & Complete-Multivariate & $S=D$ & \textbf{0.390} & \underline{0.413} & \underline{0.420} & \textbf{0.442} & \textbf{0.332} & \underline{0.371} & \underline{0.380} & \textbf{0.416} & 0.353 & 0.382 & 0.396 & \textbf{0.408} & \textbf{0.253} & \textbf{0.290} & \underline{0.327} & \underline{0.374} \\
        \bottomrule
    \end{tabular}}
    \renewcommand{\arraystretch}{0.9}
    \setlength{\tabcolsep}{3.5pt}
    \resizebox{\textwidth}{!}{%
    \begin{tabular}{c|cc|cccc|cccc|cccc}
    \toprule
        \multirow{2}{*}{Score} & \multicolumn{2}{c|}{\multirow{2}{*}{\method Variants}} & \multicolumn{4}{c|}{Weather (21)} & \multicolumn{4}{c|}{Electricity (321)} & \multicolumn{4}{c}{Traffic (862)} \\ 
        & & & 96 & 192 & 336 & 720 & 96 & 192 & 336 & 720 & 96 & 192 & 336 & 720 \\ 
        \midrule
        \multirow{3}{*}{\rotatebox{90}{MSE}} & Univariate & $S=1$ & \textbf{0.141} & \underline{0.186} & \underline{0.237} & 0.308 & \underline{0.128} & \underline{0.146} & \underline{0.163} & \underline{0.204} & 0.368 & 0.388 & 0.404 & \underline{0.441} \\ 
        & Partial-Multivariate & $1<S<D$ & \underline{0.142} & \textbf{0.185} & \textbf{0.235} & \textbf{0.305} & \textbf{0.125} & \textbf{0.142} & \textbf{0.154} & \textbf{0.176} & \textbf{0.345} & \textbf{0.370} & \textbf{0.385} & \textbf{0.426} \\
        & Complete-Multivariate & $S=D$ & 0.146 & 0.192 & 0.244  & \underline{0.307} & 0.129 & 0.147 & \underline{0.163} & \underline{0.204} & \underline{0.363} & \underline{0.383} & \underline{0.394} & \underline{0.441} \\
        \midrule
        \multirow{3}{*}{\rotatebox{90}{MAE}} & Univariate & $S=1$ & \underline{0.195} & \underline{0.239} & \underline{0.279} & 0.333 & \underline{0.223} & \underline{0.242} & 0.260 & \underline{0.297} & \underline{0.257} & \underline{0.265} & 0.277 & \underline{0.299} \\
        & Partial-Multivariate & $1<S<D$ & \textbf{0.193} & \textbf{0.237} & \textbf{0.277} & \textbf{0.328} & \textbf{0.222} & \textbf{0.240} & \textbf{0.256} & \textbf{0.278} & \textbf{0.245} & \textbf{0.255} & \textbf{0.265} & \textbf{0.287} \\ 
        & Complete-Multivariate & $S=D$ & 0.199 & 0.243 & 0.286 & \underline{0.331} & 0.228 & 0.246 & \underline{0.259} & \underline{0.297} & \underline{0.257} & 0.269 & \underline{0.276} & 0.303 \\
        \bottomrule
    \end{tabular}}

    % \end{center}
\end{table}

\begin{table}[!t]
    \centering
    \caption{Test MSE and MAE of \method with various inference techniques. (Additional results for Table~\ref{tbl:various_inference}) }\label{tbl:various_inference_appen}
    \vspace{-3pt}
    % \begin{center}
    \scriptsize
    % \footnotesize
    \renewcommand{\arraystretch}{0.99}
    \setlength{\tabcolsep}{5pt}
    \resizebox{\textwidth}{!}{
    \begin{tabular}{c|c|cccc|cccc}
    \toprule
        \multirow{2}{*}{Score} & \multirow{2}{*}{\makecell{Inference Technique}} & \multicolumn{4}{c|}{Electricity ($D=$ 321)} & \multicolumn{4}{c}{Traffic (862)} \\ 
        & & $\tau$=96 & 192 & 336 & 720 & 96 & 192 & 336 & 720 \\ 
        \midrule
        \multirow{3}{*}{MSE} & Proposed Technique with $N_I=3$ (Ours) & \textbf{0.125} & \textbf{0.142} & \textbf{0.154} & \textbf{0.176} & \textbf{0.345} & \textbf{0.370} & \textbf{0.385} & \textbf{0.426} \\ 
        & Sampling A Subset of Mutually Significant Features & \underline{0.132} & \underline{0.148} & 0.178 & \underline{0.205} & \underline{0.352} & \underline{0.372} & \underline{0.386} & \underline{0.428} \\
        & Sampling A Subset of Mutually Insignificant Features & 0.135 & 0.167 & \underline{0.174} & 0.235 & 0.377 & 0.410 & 0.410 & 0.444\\
        \midrule
        \multirow{3}{*}{MAE} & Proposed Technique with $N_I=3$ (Ours) & \textbf{0.222} & \textbf{0.240} & \textbf{0.256} & \textbf{0.278} & \textbf{0.245} & \textbf{0.255} & \textbf{0.265} & \textbf{0.287} \\
        & Sampling A Subset of Mutually Significant Features & \underline{0.231} & \underline{0.247} & 0.285 & \underline{0.302} & \underline{0.251} & \underline{0.259} & \underline{0.267} & \underline{0.289} \\
        & Sampling A Subset of Mutually Insignificant Features & 0.237 & 0.268 & \underline{0.276} & 0.329 &0.267 & 0.285 & 0.289 & 0.308 \\
        \bottomrule
        % Improved Ratio of MSE (\%) \\ \hline
    \end{tabular}}
    % \end{center}
\end{table}
\begin{table}[!t]
    \caption{Main forecasting results of \method with standard deviation}\label{tbl:main_std}
    \vspace{-3pt}
    \centering
    \footnotesize
    \renewcommand{\arraystretch}{0.9}
    \setlength{\tabcolsep}{4pt}
    \resizebox{\textwidth}{!}{%
    \begin{tabular}{c|c|cccc|cccc|cccc|cccc}
    \toprule
        \multirow{8}{*}{\rotatebox{90}{\method}} & \multirow{2}{*}{Score} & \multicolumn{4}{c|}{ETTh1 ($D=7$)} & \multicolumn{4}{c|}{ETTh2 (7)} & \multicolumn{4}{c|}{ETTm1 (7)} & \multicolumn{4}{c}{ETTm2 (7)} \\ 
        & & 96 & 192 & 336 & 720 & 96 & 192 & 336 & 720 & 96 & 192 & 336 & 720 & 96 & 192 & 336 & 720 \\ 
        \cmidrule{2-18}
        & \multirow{2}{*}{\rotatebox{0}{MSE}} & 0.361 & 0.396 & 0.400 & 0.412 & 0.269 & 0.323 & 0.317 & 0.370 & 0.282 & 0.325 & 0.352 & 0.401 & 0.160 & 0.213 & 0.262 & 0.336 \\
        & & { \tiny $\pm$0.001} & { \tiny $\pm$0.002} & { \tiny $\pm$0.001} & { \tiny $\pm$0.001} & { \tiny $\pm$0.001} & { \tiny $\pm$0.002} & { \tiny $\pm$0.001} & { \tiny $\pm$0.001} & { \tiny $\pm$0.002} & { \tiny $\pm$0.001} & { \tiny $\pm$0.001} & { \tiny $\pm$0.000} & { \tiny $\pm$0.001} & { \tiny $\pm$0.001} & { \tiny $\pm$0.002} & { \tiny $\pm$0.001}  \\ 
        \cmidrule{2-18}
        & \multirow{2}{*}{\rotatebox{0}{MAE}} & 0.390 & 0.414 & 0.421 & 0.442 & 0.332 & 0.369 & 0.378 & 0.416 & 0.340 & 0.365 & 0.385 & 0.408 & 0.253 & 0.290 & 0.325 & 0.372 \\
        & & { \tiny $\pm$0.001} & { \tiny $\pm$0.002} & { \tiny $\pm$0.001} & { \tiny $\pm$0.002} & { \tiny $\pm$0.001} & { \tiny $\pm$0.002} & { \tiny $\pm$0.001} & { \tiny $\pm$0.001} & { \tiny $\pm$0.001} & { \tiny $\pm$0.001} & { \tiny $\pm$0.000} & { \tiny $\pm$0.000} & { \tiny $\pm$0.001} & { \tiny $\pm$0.001} & { \tiny $\pm$0.001} & { \tiny $\pm$0.001} \\
        \bottomrule
    \end{tabular}}
    \renewcommand{\arraystretch}{0.9}
    \setlength{\tabcolsep}{8pt}
    \resizebox{\textwidth}{!}{%
    \begin{tabular}{c|c|cccc|cccc|cccc}
    \toprule
        \multirow{8}{*}{\rotatebox{90}{\method}} & \multirow{2}{*}{Score} & \multicolumn{4}{c|}{Weather (21)} & \multicolumn{4}{c|}{Electricity (321)} & \multicolumn{4}{c}{Traffic (862)} \\ 
        & & 96 & 192 & 336 & 720 & 96 & 192 & 336 & 720 & 96 & 192 & 336 & 720 \\ 
        \cmidrule{2-14}
        & \multirow{2}{*}{\rotatebox{0}{MSE}} &  0.142 & 0.185 & 0.235 & 0.305 & 0.125 & 0.142 & 0.154 & 0.176 & 0.345 & 0.370 & 0.385 & 0.426 \\
        & & { \tiny $\pm$0.000} & { \tiny $\pm$0.000} & { \tiny $\pm$0.001} & { \tiny $\pm$0.001} & { \tiny $\pm$0.000} & { \tiny $\pm$0.001} & { \tiny $\pm$0.001} & { \tiny $\pm$0.003} & { \tiny $\pm$0.001} & { \tiny $\pm$0.001} & { \tiny $\pm$0.001} & { \tiny $\pm$0.001}\\
        \cmidrule{2-14}
        & \multirow{2}{*}{\rotatebox{0}{MSE}} &  0.193 & 0.237 & 0.277 & 0.328 & 0.222 & 0.240 & 0.256 & 0.278 & 0.245 & 0.255 & 0.265 & 0.287 \\
        & & { \tiny $\pm$0.001} & { \tiny $\pm$0.000} & { \tiny $\pm$0.001} & { \tiny $\pm$0.001} & { \tiny $\pm$0.001} & { \tiny $\pm$0.001} & { \tiny $\pm$0.000} & { \tiny $\pm$0.003} & { \tiny $\pm$0.001} & { \tiny $\pm$0.000} & { \tiny $\pm$0.000} & { \tiny $\pm$0.001}
        \\
        \bottomrule
    \end{tabular}}

    % \end{center}
\end{table}

% \input{Table/reason_for_FS_2_MAE}
% \input{Table/main_table_appen}
% \input{Table/ablation_for_both_MAE}

% \input{Table/RBDSA_cap_alldep_appen}
% , and Table~\ref{tbl:rbdsa_cap_alldep}
% , and Table~\ref{tbl:rbdsa_cap_alldep_appen}

\subsection{Additional Visualization}\label{appen:add_visual}
Like Appendix~\ref{appen:add_result}, this section provides additional visualizations with other datasets or models for existing ones. 
Figure~\ref{fig:change_s_appen} is for Figure~\ref{fig:change_s},
Figure~\ref{fig:change_f_all_appen} for Figure~\ref{fig:change_f_all},
Figure~\ref{fig:sens_to_ni_a_appen} for Figure~\ref{fig:sens_to_ni}(a),
Figure~\ref{fig:sens_to_ni_b_appen} for Figure~\ref{fig:sens_to_ni}(b), and
Figure~\ref{fig:robust_to_drop_appen} for Figure~\ref{fig:robust_to_drop}.
Furthermore, Figure~\ref{fig:forecast} shows the forecasting results of \method, PatchTST, and Crossformer. We select these baselines because they have similar architecture to \method, such as segmentation or inter-feature attention modules. Our method captures temporal dynamics better than baselines.

\begin{figure}[t]
    \centering
    \includegraphics[width=0.97\textwidth]{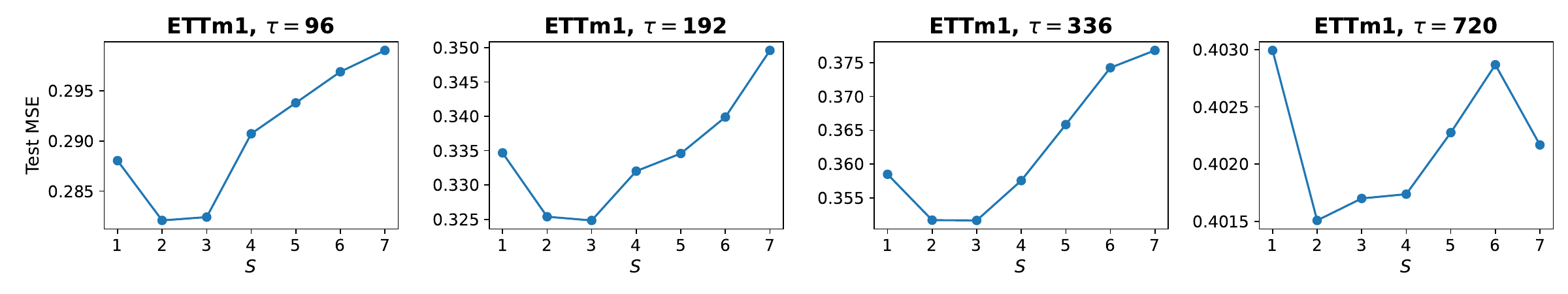}
    \includegraphics[width=0.97\textwidth]{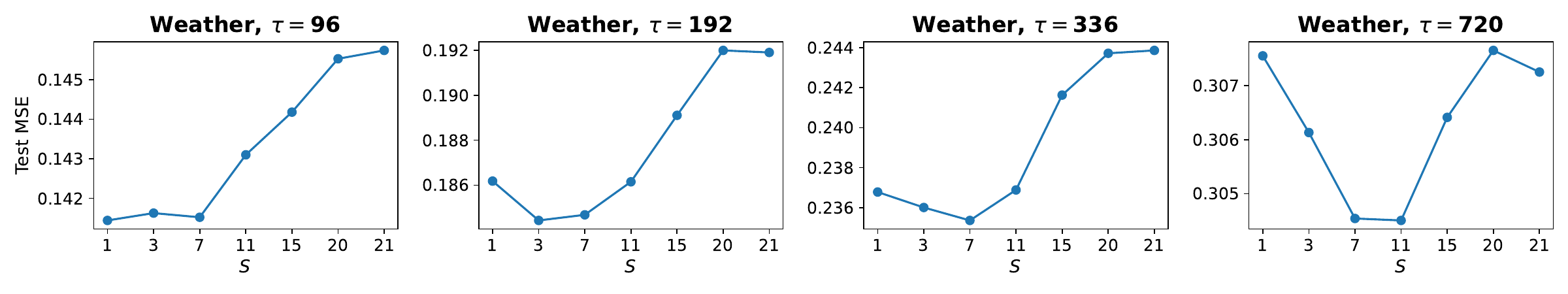}
    \includegraphics[width=0.97\textwidth]{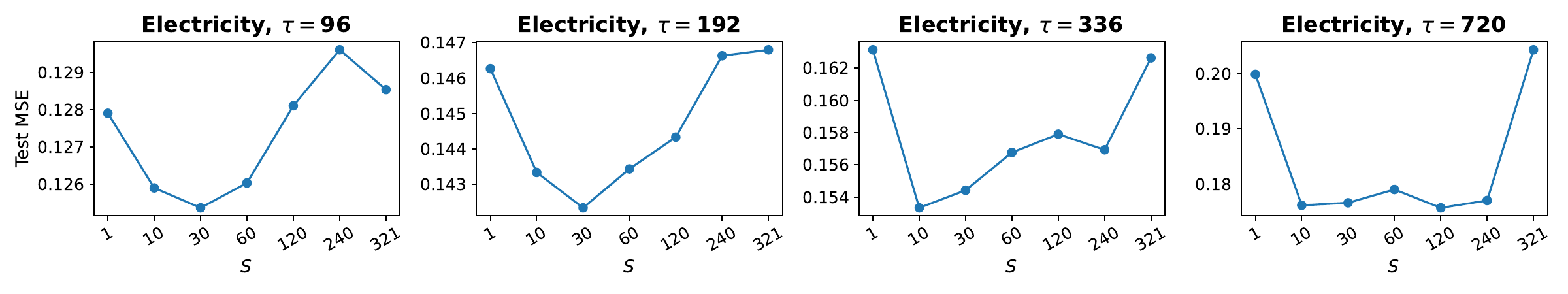}
    \includegraphics[width=0.97\textwidth]{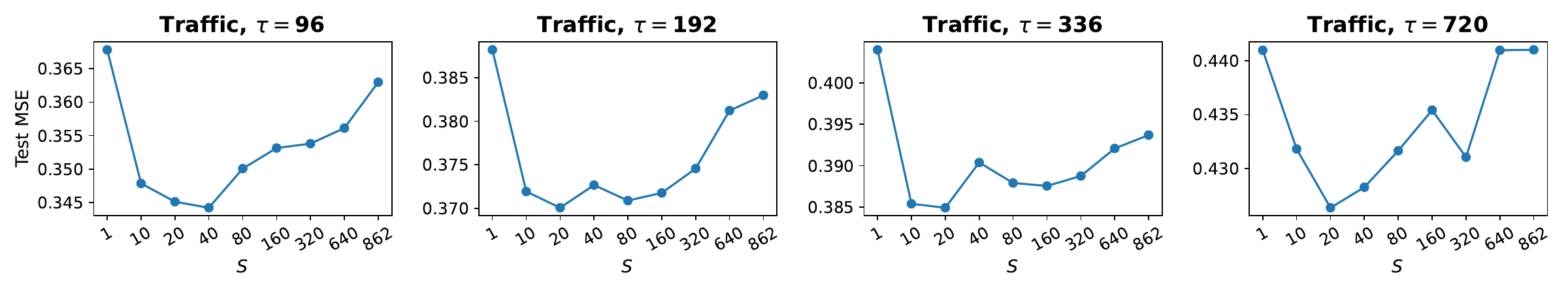}
    \vspace{-7pt}
    \caption{Sensitivity to $S$. (Additional results for Figure~\ref{fig:change_s})}\label{fig:change_s_appen}
\end{figure}

\begin{figure*}[t]
    \centering
    \includegraphics[width=0.97\textwidth]{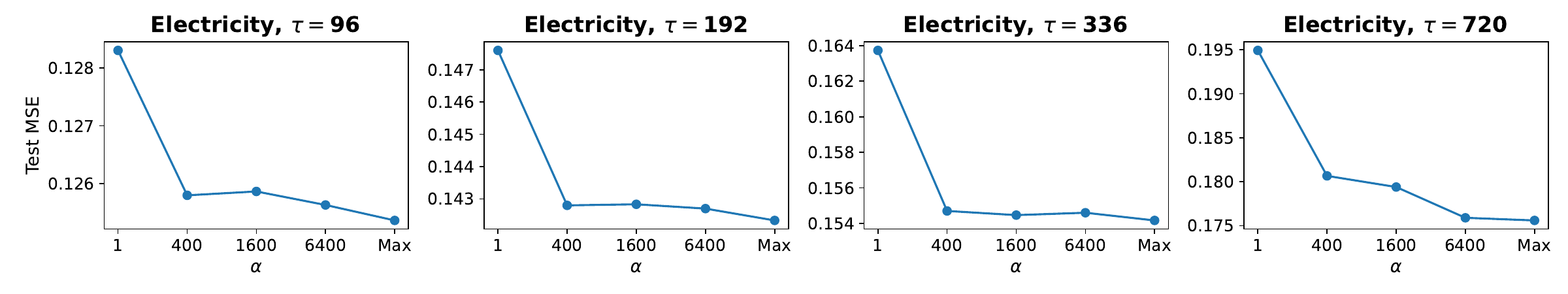}
    \includegraphics[width=0.97\textwidth]{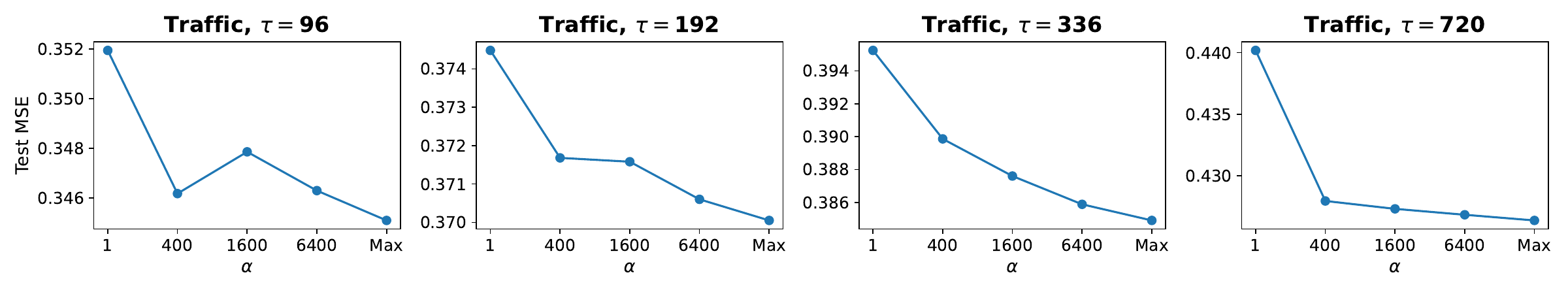}
    \vspace{-7pt}
    \caption{Sensitivity to $|\mathbf{F}^{all}|= \alpha \times N_U$. (Additional results for Figure~\ref{fig:change_f_all})}\label{fig:change_f_all_appen}
\end{figure*}
\begin{figure}[t]
    \centering
    \includegraphics[width=0.97\columnwidth]{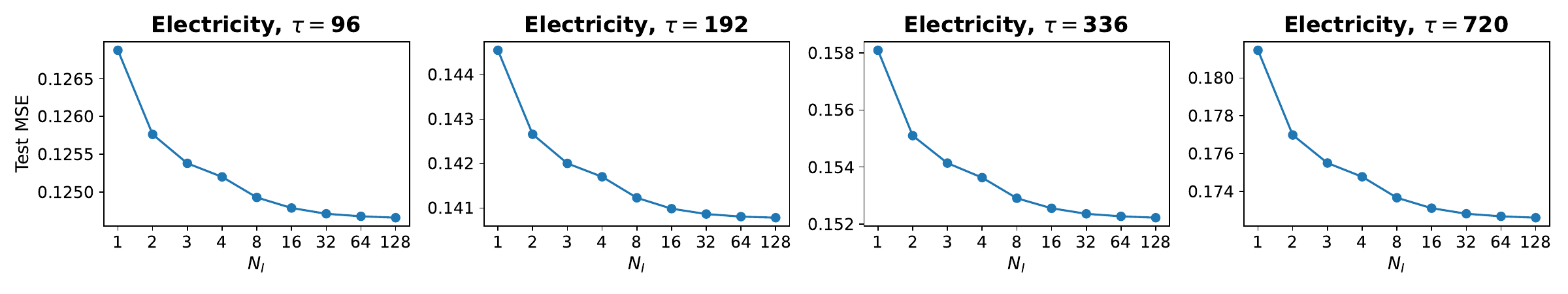}
    \includegraphics[width=0.97\columnwidth]{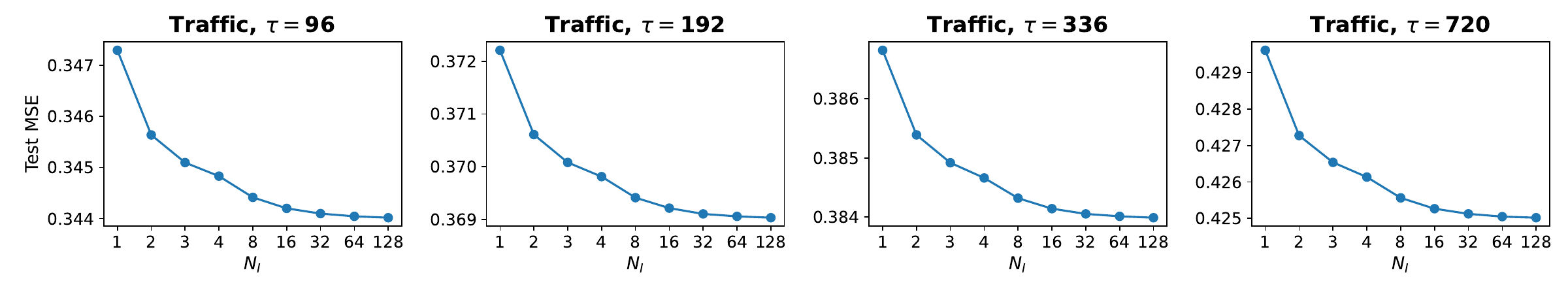}
    \vspace{-7pt}
    \caption{Sensitivity to $N_I$. (Additional results for Figure~\ref{fig:sens_to_ni}(a))}\label{fig:sens_to_ni_a_appen}
\end{figure}
\begin{figure}[t]
    \centering
    \includegraphics[width=0.97\textwidth]{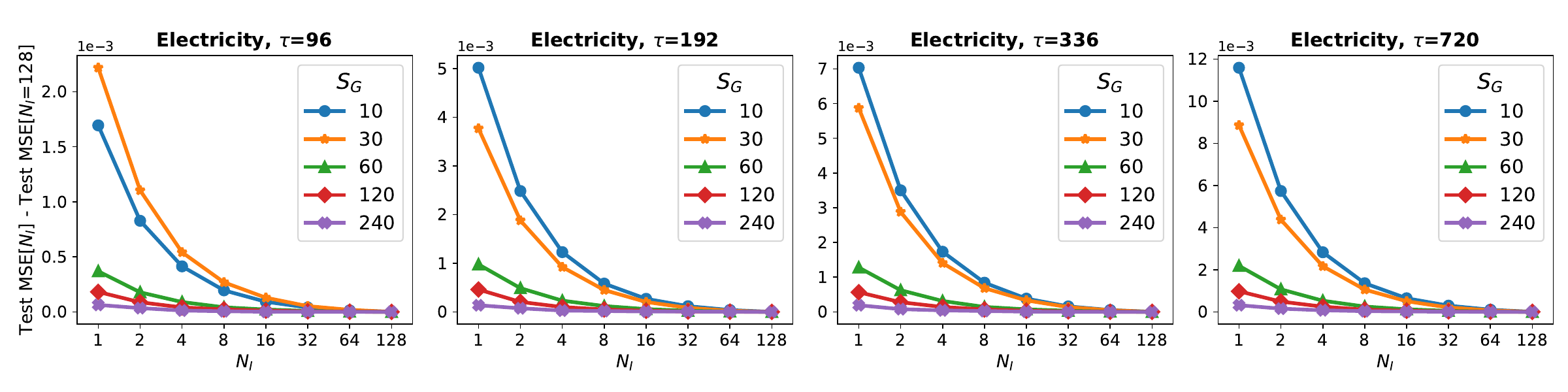}
    \includegraphics[width=0.97\textwidth]{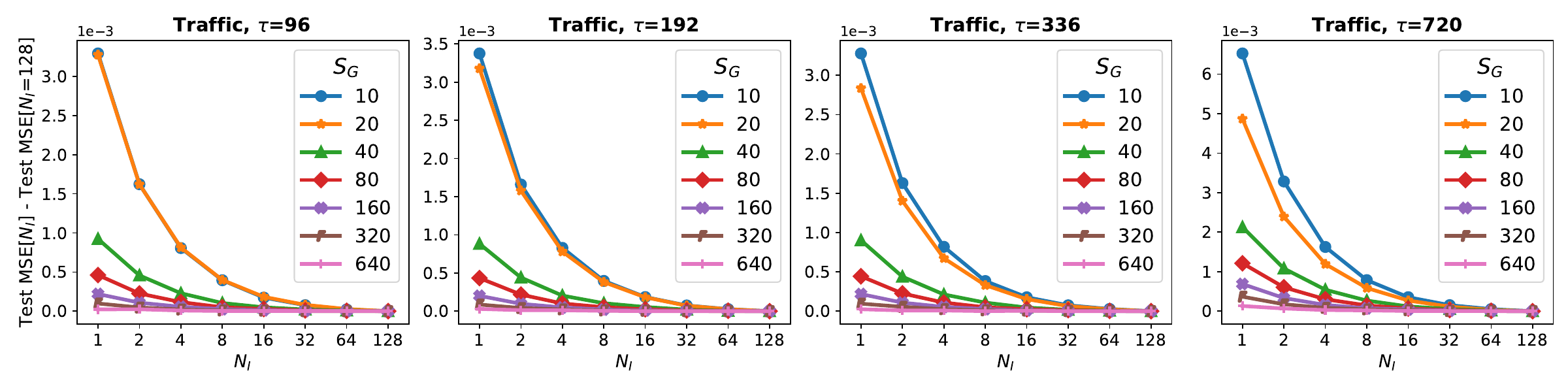}
    \vspace{-7pt}
    \caption{Changes in the effect of $N_I$ on forecasting performance when $S$ increases. (Additional results for Figure~\ref{fig:sens_to_ni}(b))}\label{fig:sens_to_ni_b_appen}
\end{figure}
\begin{figure}[t]
    \centering
    \includegraphics[width=0.97\textwidth]{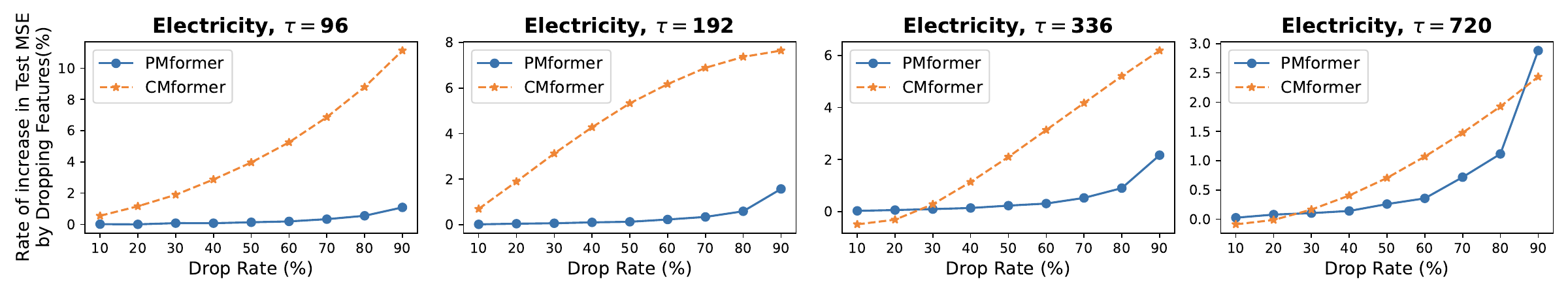}
    \includegraphics[width=0.97\textwidth]{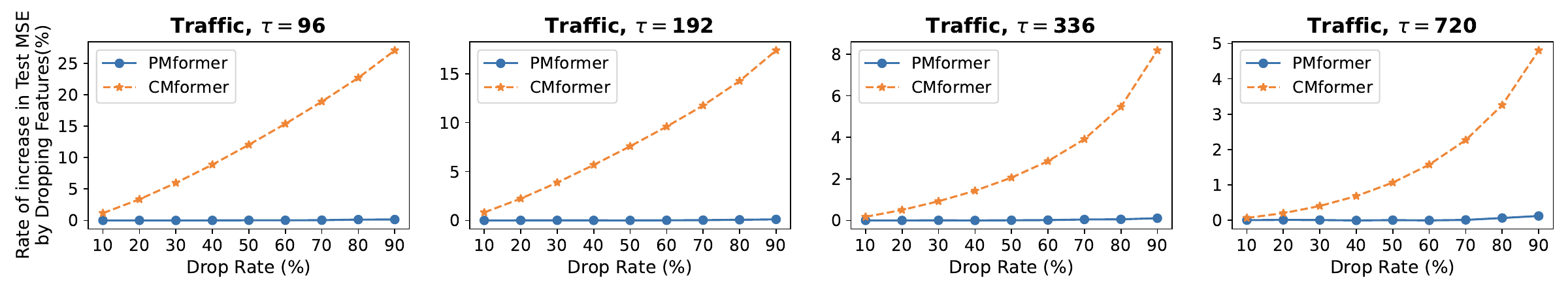}
    \vspace{-7pt}
    \caption{Increasing rate of test MSE by dropping $n$\% features in \method or Complete-Multivariate Transformer (CMformer). (Additional results for Figure~\ref{fig:robust_to_drop})}\label{fig:robust_to_drop_appen}
\end{figure}
\begin{figure*}[t]
    \centering
    \includegraphics[width=0.9\textwidth]{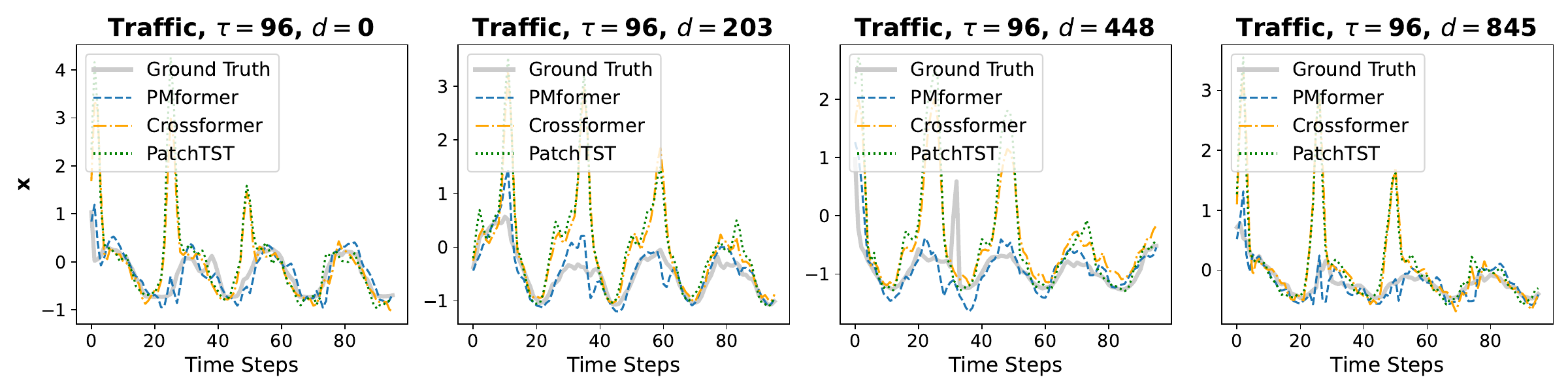}
    \includegraphics[width=0.9\textwidth]{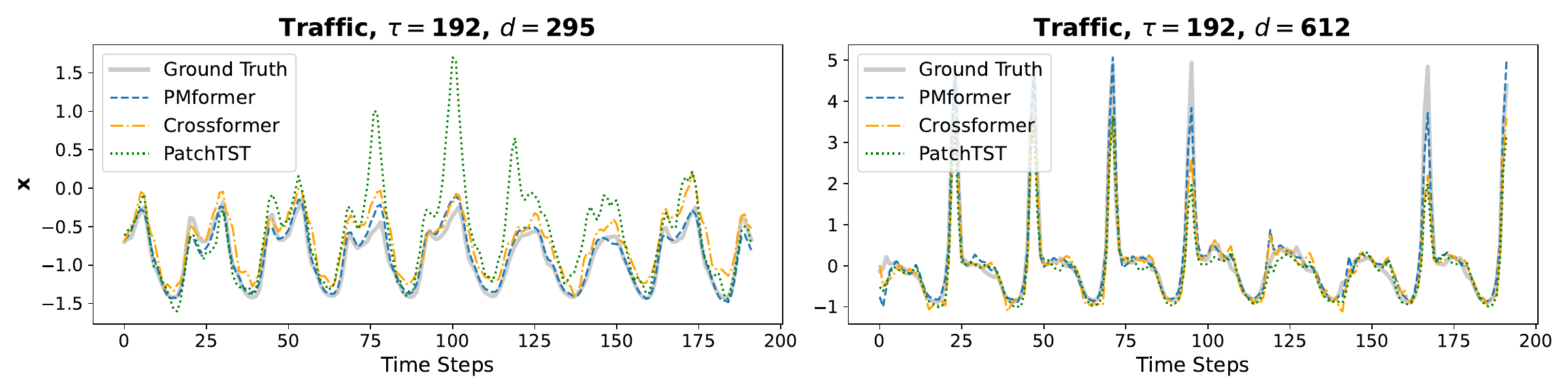}
    \includegraphics[width=0.9\textwidth]{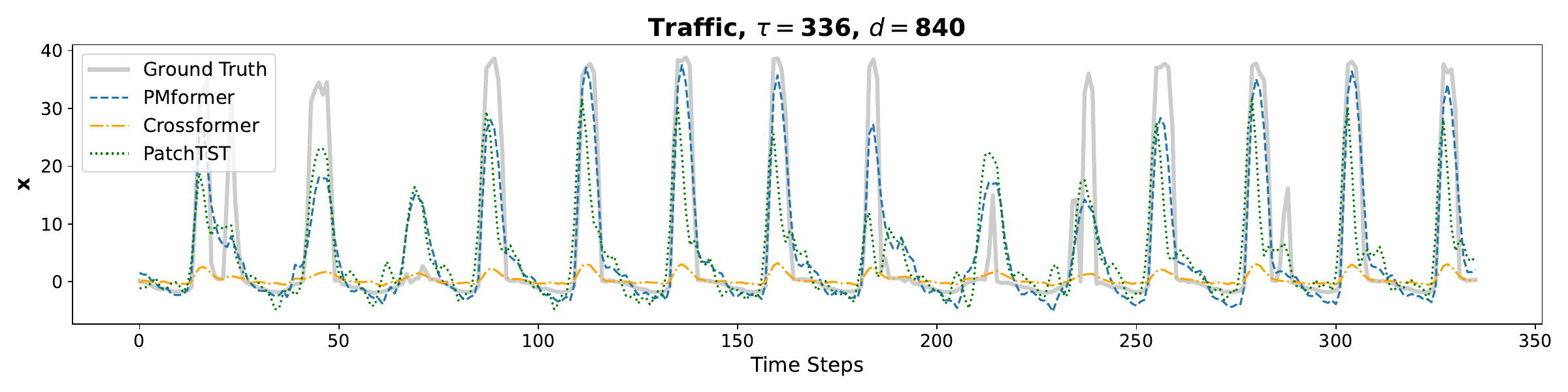}
    \includegraphics[width=0.9\textwidth]{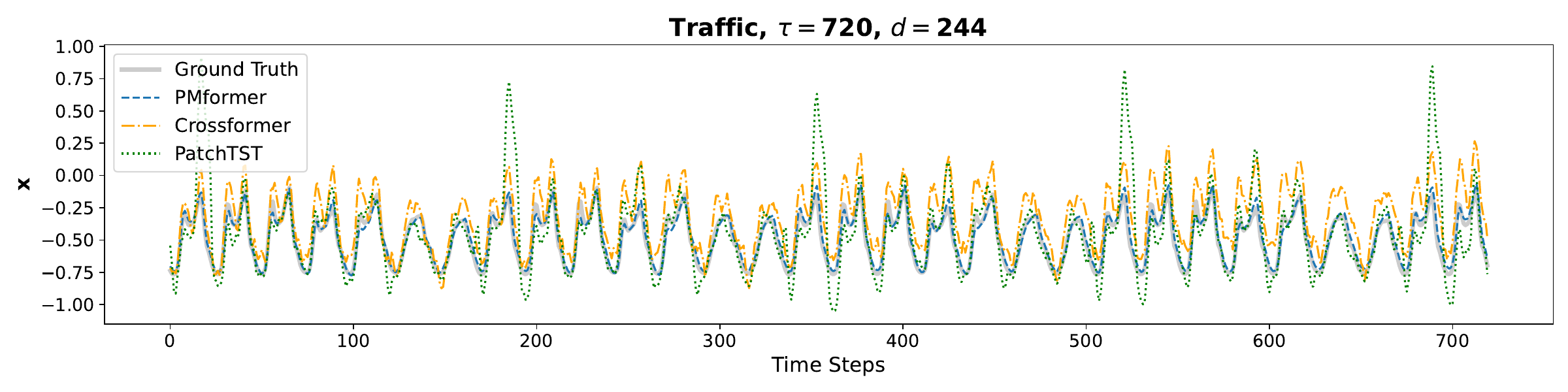}
    \vspace{-7pt}
    \caption{Forecasting results of various segment-based transformers (Crossformer, PatchTST, and \method). Dotted lines and dotted-dashed lines denote baselines, dashed lines denote \method, and solid lines denote ground truth. $\tau$ denotes the length of time steps in future outputs and $d$ denotes a feature index.}\label{fig:forecast}
\end{figure*}

\end{document}